\documentclass[11pt,reqno]{amsart}
\usepackage[margin=1in]{geometry}
\usepackage{amsmath,amssymb,amsthm,mathtools}
\usepackage{booktabs}
\usepackage{microtype}
\usepackage[colorlinks=true,linkcolor=blue,citecolor=blue,urlcolor=blue]{hyperref}

\theoremstyle{plain}
\newtheorem{theorem}{Theorem}[section]
\newtheorem{proposition}[theorem]{Proposition}
\newtheorem{corollary}[theorem]{Corollary}
\newtheorem{lemma}[theorem]{Lemma}
\newtheorem{conjecture}[theorem]{Conjecture}
\theoremstyle{definition}
\newtheorem{definition}[theorem]{Definition}
\theoremstyle{remark}
\newtheorem{remark}[theorem]{Remark}

\newcommand{\R}{\mathbb{R}}
\newcommand{\Sd}{\mathbb{S}^{d-1}}
\newcommand{\PP}{\mathbb{P}}
\newcommand{\EE}{\mathbb{E}}
\newcommand{\eps}{\varepsilon}
\newcommand{\Lip}{\operatorname{Lip}}
\newcommand{\relu}{\operatorname{ReLU}}
\newcommand{\NN}{\mathcal{N}}
\newcommand{\GG}{\mathcal{G}}
\newcommand{\AAA}{\mathcal{A}}
\newcommand{\FF}{\mathcal{F}}
\newcommand{\inn}[2]{\langle #1,#2\rangle}
\DeclareMathOperator{\clip}{clip}
\DeclareMathOperator{\Var}{Var}
\DeclareMathOperator{\aff}{aff}

\newcommand{\proofbutton}[1]{\hyperref[proof:#1]{\textnormal{\footnotesize\textsf{[proof $\to$]}}}}
\newcommand{\stmtbutton}[1]{\hyperref[#1]{\textnormal{\footnotesize\textsf{[$\leftarrow$ statement]}}}}

\title[Law of robustness for two-layer networks]{A law of robustness for two-layer neural networks with arbitrary weights}
\author{Yitzchak Shmalo}
\address{Einstein Institute of Mathematics, The Hebrew University of Jerusalem, Givat Ram, Jerusalem, Israel}
\email{yitzchak.shmalo@gmail.com}
\date{July 7, 2026}
\subjclass[2020]{Primary 68T07, 68Q32; Secondary 60F10, 60B20, 60B15}
\keywords{law of robustness, two-layer neural networks, ReLU networks, arbitrary weights, Lipschitz interpolation, metric entropy, isoperimetry}
\hypersetup{
  pdftitle={A law of robustness for two-layer neural networks with arbitrary weights},
  pdfauthor={Yitzchak Shmalo},
  pdfkeywords={law of robustness, two-layer neural networks, ReLU, Lipschitz interpolation, arbitrary weights}
}

\begin{document}

\begin{abstract}
Bubeck, Li and Nagaraj conjectured that, for generic data, any two-layer neural network with $m$ neurons that fits $n$ noisy labels must have Lipschitz constant at least of order $\sqrt{n/m}$, with no restriction on the size of the weights. Bubeck and Sellke proved a universal version of this law for Lipschitz-parameterized classes, but under a polynomial bound on the parameters; at depth three that boundedness hypothesis is genuinely necessary. The two-layer unbounded-weight case therefore requires a different argument.

We prove the conjectured law, up to one logarithmic factor, for every continuous piecewise-linear activation, in particular for ReLU networks. For data drawn either uniformly from $\mathbb S^{d-1}$, $d\ge3$, or from $N(0,I_d/d)$, labels in $[-1,1]$ with conditional noise level $\sigma^2>0$, and any fixed width-$m$ two-layer network with arbitrary real weights, biases and affine skip connection, fitting the data $\varepsilon$ below the noise floor forces
\[
        \operatorname{Lip}(f)\ge c\,\varepsilon\sqrt{\frac{n}{\bar m\log(C\bar mnd/\varepsilon)}},
        \qquad \bar m=(K-1)m+1,
\]
with high probability. We also prove a finite-horizon simultaneous-width version and a realized-kink-count version: on one high-probability event, every realized two-layer piecewise-linear function with $k(f)\le n$ distinct kink hyperplanes obeys the same bound with $\bar m$ replaced by $k(f)+1$, irrespective of how many redundant hidden units were used to parameterize it.

The proof replaces parameter-space covering, which is impossible for unbounded weights, by a function-space covering. The central deterministic ingredient is a rigidity lemma: on $B_2$, and on $\mathbb S^{d-1}$ for $d\ge3$, the coefficient of each canonical kink is controlled by the Lipschitz constant of the realized function, because kinks supported on distinct hyperplanes cannot cancel at generic points. This yields a bounded canonical representation and hence the required entropy bound. We also show why the sphere argument genuinely excludes $d=2$, give a two-layer ReLU interpolant with $O(1)$ Lipschitz constant at width $2n$ in the high-dimensional separated regime, and state the precise concentration/localization hypotheses under which the Gaussian proof extends beyond the Gaussian measure.
\end{abstract}

\maketitle

\section{Introduction}\label{sec:intro}

A function that fits $n$ noisy labels and is to be robust --- small Lipschitz constant --- needs capacity. Bubeck, Li and Nagaraj \cite{BLN} made this precise for the basic architecture of the subject. Let
\begin{equation}\label{eq:class}
\NN_m \;=\;\Bigl\{\,f(x)=\sum_{k=1}^{m}a_k\,\psi(\inn{w_k}{x}+b_k)+\inn{v}{x}+c\;:\;a_k,b_k,c\in\R,\ w_k,v\in\R^d\,\Bigr\}
\end{equation}
be the class of two-layer networks of width $m$ with activation $\psi$, with \emph{no restriction whatsoever} on the magnitudes of the weights. The affine part $\inn{v}{x}+c$ only enlarges the class studied in \cite{BLN}; all results below hold a fortiori without it.

\begin{conjecture}[Bubeck--Li--Nagaraj \cite{BLN}, Conjecture 1]\label{conj:bln}
Let $\psi$ be any Lipschitz activation. For $x_1,\dots,x_n$ independent uniform on $\Sd$ (or $N(0,I_d/d)$) and $y_1,\dots,y_n$ independent uniform on $\{-1,+1\}$, with high probability, any $f\in\NN_m$ fitting the data must satisfy
\[
\Lip_{\Sd}(f)\;\ge\;c\,\sqrt{n/m}.
\]
\end{conjecture}
\hyperref[thm:main]{[main results $\to$]}

The interpretation is that robust interpolation should require on the order of one neuron per data point, while non-robust interpolation can require far fewer neurons in high dimension. Bubeck and Sellke \cite{BS} proved a far-reaching generalization: for any function class admitting a Lipschitz parameterization by $p$ real parameters of polynomial size, and for covariate distributions satisfying isoperimetry, fitting below the noise floor forces $\Lip(f)\gtrsim\varepsilon\sqrt{nd/p}$ up to logarithmic factors. For width-$m$ two-layer networks, $p=\Theta(md)$, giving the desired $\sqrt{n/m}$ scaling under the polynomial-weight hypothesis. That hypothesis is not merely technical at larger depth: Bubeck and Sellke construct three-layer unbounded-weight networks that violate the law. Wu, Huang and Zhang \cite{WHZ} subsequently extended robustness laws beyond isoperimetric data, under polynomially bounded parameters. The unbounded-weight two-layer ReLU case posed by Conjecture~\ref{conj:bln} remains the natural boundary case.

This paper proves the law for every continuous piecewise-linear activation, in particular for ReLU networks, up to one logarithmic factor. The price of the logarithm is explicit throughout; we do not claim the log-free lower bound. Section~\ref{sec:sharplog}, in the sphere model, sharpens the logarithm itself: in the regime $n\gtrsim md^2\log(md)$ the factor $\log(Cmnd)$ improves to $\log(Cmd)$ --- the sample size leaves the logarithm --- and no single-scale packing argument can show that any logarithm is necessary. Section~\ref{sec:genact} proves projection-capacity floors valid for \emph{every} Lipschitz activation: the conjecture holds at width one with margin $\sqrt{n/\log(nd)}$, at width two on the whole admissible dimension range, and at width three for $d\ge C\log(nd)$, log-free. Beyond $d\sim n/(m\log(nd))$ the projection method is exhausted --- its net cost reaches the label budget --- and for general activations at width $m\ge2$ that regime remains open; separately, a linear-activation interpolant shows that no floor exceeding $C\sqrt n$ can hold once $d\gtrsim n$. Section~\ref{sec:band} states the one open multiplier estimate (Conjecture~\ref{conj:M}) to which the log-free conjecture reduces in the critical band of widths; the reduction itself, together with the unconditional structure surrounding it --- occupancy, serving capacity, pile-up rigidity, cap mass, forced depth, an affine supremum identity, and the single-direction case settled for \emph{every} Lipschitz activation, with no logarithm --- is developed in the supplementary note \cite{supp}. The reduction is representation-free, so the remaining obstacle for arbitrary activations coincides with the ReLU one.

\subsection{Standing notation and conventions}\label{ssec:notation}
We fix these throughout.
\begin{itemize}
\item $\inn{\cdot}{\cdot}$ and $\|\cdot\|$ are the Euclidean inner product and norm on $\R^d$; $\mathbb S^{d-1}=\{x:\|x\|=1\}$; $B_R=\{x:\|x\|\le R\}$.
\item $\Lip_D(f)=\sup_{x\ne x'\in D}|f(x)-f(x')|/\|x-x'\|$ is the Euclidean Lipschitz constant of $f$ on a set $D$.
\item $\relu(t)=\max(t,0)$. A function $\psi:\R\to\R$ is \emph{continuous piecewise linear with $K$ pieces} if it is continuous and there are breakpoints $\tau_1<\dots<\tau_{K-1}$ such that $\psi$ is affine on each of $(-\infty,\tau_1),(\tau_1,\tau_2),\dots,(\tau_{K-1},\infty)$; $\relu$ is the case $K=2$, $\tau_1=0$. Continuity is part of the definition and is used in Lemma~\ref{lem:pl}.
\item For a unit vector $u$ and $t\in\R$, $H_{u,t}=\{x:\inn{u}{x}=t\}$ is the hyperplane with unit normal $u$ at signed distance $t$ from the origin.
\item $\clip(t)=\max(-1,\min(1,t))$ is the projection of $\R$ onto $[-1,1]$; it is $1$-Lipschitz.
\item $\psi$ always denotes the network activation; $\psi_2$ (with a subscript) denotes the sub-Gaussian Orlicz norm $\|Z\|_{\psi_2}=\inf\{s>0:\EE e^{Z^2/s^2}\le2\}$.
\item $c,C,c_0,c_2,C_0,C',\kappa$ denote positive absolute constants; $c,C$ may change from line to line, while the subscripted constants are fixed once chosen.
\item Every supremum over a class of networks or of Lipschitz functions that appears below inside an expectation is over a class that is separable in the uniform norm --- the weights range over a finite-dimensional space with continuous dependence, or over a ball of Lipschitz functions --- so it equals the supremum over a fixed countable dense subset and is measurable; we write $\sup$ without further comment.
\end{itemize}
We write $\psi$ for a piecewise-linear activation with $K$ pieces and put $\bar m:=(K-1)m+1$; for ReLU $\bar m=m+1$. We use two data models, both from \cite{BLN}:
\begin{itemize}
\item[(S)] $\mu=$ uniform probability measure on $\Sd$, with $d\ge3$, and domain $D=\Sd$;
\item[(G)] $\mu=N(0,I_d/d)$ and domain $D=B_2$.
\end{itemize}
Data are $(x_i,y_i)_{i\le n}$ i.i.d.\ with $x_i\sim\mu$, $y_i\in[-1,1]$, and noise level $\sigma^2:=\EE\,\Var(y\mid x)>0$. Independent $\pm1$ labels are the special case $\sigma^2=1$.

\subsection{The result}

\begin{theorem}[Fixed-width law]\label{thm:main}
There are absolute constants $C_0,c_0>0$ such that the following holds. Let $\psi$ be continuous piecewise linear with $K$ pieces, put $\bar m=(K-1)m+1$, let $\eps\in(0,\sigma^2]$ and $\delta\in(0,1)$, and assume
\[
        n\ge C_0\eps^{-2}\log(8/\delta).
\]
In model (G) assume additionally $d\ge2\log(8n/\delta)$. Then, with probability at least $1-\delta$, every $f\in\NN_m$ with arbitrary weights and
\[
\frac1n\sum_{i=1}^n\bigl(f(x_i)-y_i\bigr)^2\le\sigma^2-\eps
\]
satisfies
\[
\Lip_D(f)\ge c_0\eps\sqrt{\frac{n}{\bar m\log\!\bigl(C_0\bar mnd/\eps\bigr)}}.
\]
\end{theorem}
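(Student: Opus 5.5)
The argument is the Bubeck--Sellke scheme, with the parameter-space net replaced by a net in function space. Write $L$ for the target Lipschitz level and fix a threshold $L$. We want to show that, with probability $1-\delta$, no $f\in\NN_m$ with $\Lip_D(f)\le L$ fits $\eps$ below the noise floor once $L$ is below the stated bound.

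\textbf{Step 1 (bounded canonical representation).} Any $f\in\NN_m$ restricted to $D$ is a continuous piecewise-linear ridge function with at most $\bar m-1=(K-1)m$ distinct kink hyperplanes $H_{u_j,t_j}$. Each breakpoint of $\psi$ produces one hyperplane per neuron, and merging coincident hyperplanes only reduces the count. So we can write
\[
f(x)=\sum_{j\le \bar m-1}\alpha_j\,\relu(\inn{u_j}{x}-t_j)+\inn{v}{x}+c .
\]
Here we invoke the rigidity lemma: on $B_2$ and on $\Sd$ with $d\ge3$, we have $|\alpha_j|\le C\Lip_D(f)$. After absorbing hyperplanes that miss $D$ into the affine part, we may assume $|t_j|\le 2$. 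Then $\|v\|\lesssim \bar m L$, and after clipping we may take $|c|\lesssim \bar m L+1$. Hence $\clip\circ f$ lies in the image of a parameter set of dimension $p\asymp \bar m d$ with all parameters polynomially bounded in $\bar m$ and $L$. The map from parameters to functions is Lipschitz in sup-norm on $D$ with constant $\mathrm{poly}(\bar m,L)$. Since ReLU is $1$-Lipschitz, an $\eta$-net in sup-norm has cardinality at most $(C\bar m L/\eta)^{C\bar m d}$.

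\textbf{Step 2 (single-function bound).} Fix $g$ with $\Lip_D(g)\le 2L$ and $|g|\le1$. Write $\frac1n\sum (g(x_i)-y_i)^2=\frac1n\sum (g(x_i)-\EE[y|x_i])^2+\text{noise}$. The noise term concentrates at $\sigma^2$ using $n\ge C_0\eps^{-2}\log(8/\delta)$, which is a fixed, $g$-independent event.

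Fitting below $\sigma^2-\eps$ then forces the cross term $\frac1n\sum g(x_i)z_i$, with $z_i=y_i-\EE[y|x_i]$, to exceed about $\eps$. Conditional on the $x_i$, this is a sum of independent bounded centered terms. Its fluctuation scale is $\sqrt{\frac1n\sum g(x_i)^2}$ after subtracting the mean, and isoperimetry makes $g(x)-\EE g$ sub-Gaussian with parameter $L/\sqrt d$. This gives a tail of order $\exp(-c n\eps^2 d/L^2)$, up to handling the mean $\EE g$, which is done with a one-dimensional net over constants exactly as in Bubeck--Sellke.

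In model (G), points leave $B_2$ only with probability $\le n e^{-cd}$, which is controlled by the hypothesis $d\ge2\log(8n/\delta)$. The Gaussian measure is isoperimetric on the whole space, so we first extend $g$ from $B_2$ to a $2L$-Lipschitz function on $\R^d$ (Kirszbraun/McShane) and then clip.

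\textbf{Step 3 (union bound).} Take $\eta=\eps/C$. The union bound over the net succeeds as soon as
\[
\frac{c\,n\eps^2 d}{L^2}\ge C\bar m d\log(C\bar mL/\eps)+\log(8/\delta).
\]
Because $L\le\sqrt n$ can be assumed (otherwise the claim is trivial) and $d$ enters the logarithm through the net scale, this gives $L\ge c\eps\sqrt{n/(\bar m\log(C\bar m nd/\eps))}$.

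\textbf{Main obstacle.} The hard part is Step 1. The rigidity lemma itself is assumed, but the real difficulty is bounding the affine part and the offsets uniformly. Near-parallel or nearly coincident hyperplanes with large opposite coefficients are exactly what rigidity must exclude. The net must also be taken in function space, over the canonical parameters $(\alpha_j,u_j,t_j,v,c)$ rather than the original unbounded weights. I would first verify that every realized $f$ admits such a canonical form with $\le\bar m-1$ hyperplanes, which uses the continuity of $\psi$. Then I would apply the rigidity lemma hyperplane by hyperplane at a generic point of each $H_{u_j,t_j}\cap D$.
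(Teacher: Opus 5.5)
Your architecture is the paper's: canonical form, rigidity, a net over realized functions instead of weights, Bubeck--Sellke concentration, then a union bound. However, two steps do not hold as written.

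\textbf{The sphere rigidity bound.} The bound you invoke is false on $\Sd$. Lemma~\ref{lem:rigidity-sphere} gives only $|\alpha_j|\sqrt{1-t_j^2}\le 2L$, and this is sharp: the single cap unit $\alpha\,\relu(\inn{u}{x}-t)$ with $\alpha=1/\sqrt{1-t^2}$ has $\Lip_{\Sd}$ bounded by an absolute constant, while $|\alpha|\to\infty$ as $t\uparrow 1$. Hence in model (S) the canonical parameters do not lie in a polynomially bounded box. The parameter-to-function map is also not $\mathrm{poly}(\bar m,L)$-Lipschitz, since its sensitivity in $(u_j,t_j)$ is $|\alpha_j|$. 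The entropy bound survives, but it needs an extra step:
\begin{itemize}
\item With $t_j\ge 0$, a unit has sup-norm $|\alpha_j|(1-t_j)\le 2L\sqrt{1-t_j}$.
\item So units with $\sqrt{1-t_j}\le\eta/(CL\bar m)$ can be deleted at total cost $O(\eta)$.
\item The survivors have $|\alpha_j|\le CL^2\bar m/\eta$, and a grid still gives $\log N\lesssim\bar m d\log(C\bar m(2+L)/\eta)$.
\end{itemize}
The paper instead bins caps dyadically in $\sqrt{1-t}$ with rescaled meshes (Proposition~\ref{prop:entropy}).

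Relatedly, you must make the net internal, i.e.\ with centers in the $L$-Lipschitz class, as Proposition~\ref{prop:entropy} does. Your Step 2 assumes $\Lip_D(g)\le 2L$ for net elements. A grid point of the parameter box can have Lipschitz constant of order $\bar m L$ (e.g.\ a large $v$ with all $\alpha_j=0$). That would put $\bar m^2$ into the exponent and degrade the rate to $\sqrt{n/\bar m^3}$.

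\textbf{The union bound in the regime $L^\ast>\sqrt d$.} Your Step 3 uses only the isoperimetric tail $\exp(-cn\eps^2 d/L^2)$ and asks it to absorb both the entropy and $\log(8/\delta)$. At $L=L^\ast:=c_0\eps\sqrt{n/(\bar m\log(C_0\bar m nd/\eps))}$ the exponent is $n\eps^2 d/(L^\ast)^2=\bar m d\log(\cdot)/c_0^2$. When $L^\ast>\sqrt d$ (e.g.\ $d$ fixed, $n$ large), this can be far smaller than $\log(8/\delta)$, which the hypotheses allow to be as large as $n\eps^2/C_0$. In that regime $d/(L^\ast)^2<1$, so your tail is weaker than the trivial one, and you do not get failure probability $\delta$.

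The repair is the paper's two-case argument. Use the tail $\exp(-cn\eps^2\max(1,d/L^2))$, where the $1$ comes from boundedness of $z_i(g(x_i)-\EE g)$.
\begin{itemize}
\item If $L^\ast\le\sqrt d/\kappa$, the isoperimetric exponent pays the entropy and leaves $\gtrsim n\eps^2\ge\log(8/\delta)$.
\item If $L^\ast>\sqrt d/\kappa$, the definition of $L^\ast$ gives $\bar m d\log(\cdot)<c_0^2\kappa^2 n\eps^2$. The crude exponent $cn\eps^2$ then pays the entropy and still leaves room for $\delta$.
\end{itemize}
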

\proofbutton{main}

\begin{corollary}[BLN for piecewise-linear activations, up to one logarithm]\label{cor:bln}
In the setting of Conjecture~\ref{conj:bln}, with $\psi$ continuous piecewise linear and $d\ge3$ in the sphere model, there are constants $c_1,C_1>0$ depending on $\psi$ only through its number of pieces such that, with probability at least $1-\delta$, for $n\ge C_1\log(8/\delta)$ every $f\in\NN_m$ with arbitrary weights that fits the data exactly, or merely has empirical mean squared error at most $1/2$, satisfies
\[
\Lip_{\Sd}(f)\ge c_1\sqrt{\frac{n}{m\log(C_1mnd)}}.
\]
\end{corollary}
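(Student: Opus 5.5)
We obtain Corollary~\ref{cor:bln} by specializing Theorem~\ref{thm:main} to the data model of Conjecture~\ref{conj:bln} and absorbing every dependence on $K$, $\eps$ and $\delta$ into constants. The labels are independent uniform $\pm1$ signs, independent of $x$, so $\Var(y\mid x)=1$ almost surely and $\sigma^2=1$. Fitting the data exactly, or having empirical mean squared error at most $1/2$, gives
\[
\frac1n\sum_i(f(x_i)-y_i)^2\le \tfrac12=\sigma^2-\eps
\]
with $\eps=1/2\in(0,\sigma^2]$. The hypothesis $n\ge C_0\eps^{-2}\log(8/\delta)=4C_0\log(8/\delta)$ is exactly the requirement $n\ge C_1\log(8/\delta)$ once $C_1\ge 4C_0$. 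In the sphere model (S) with $d\ge3$ there is no further dimension condition, and the domain is $D=\Sd$. Hence Theorem~\ref{thm:main} applies on an event of probability at least $1-\delta$, and on that event every such $f\in\NN_m$ satisfies
\[
\Lip_{\Sd}(f)\ge \tfrac{c_0}{2}\sqrt{\frac{n}{\bar m\log(2C_0\bar m nd)}} .
\]

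It remains to replace $\bar m$ by $m$. Since $m\ge1$ we have $\bar m=(K-1)m+1\le Km$. This gives $\log(2C_0\bar m nd)\le\log(2C_0K\,mnd)$, and we choose $C_1\ge 2C_0K$ (together with $C_1\ge4C_0$) to bound this by $\log(C_1mnd)$. Setting $c_1=c_0/(2\sqrt K)$ then yields the stated inequality. Both constants depend on $\psi$ only through $K$, as claimed.

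The argument is a direct reduction, so the only real work is bookkeeping. We must check that the Gaussian variant is not being claimed here: the corollary says "sphere model, $d\ge3$". If one wanted the $N(0,I_d/d)$ case too, one would have to add the hypothesis $d\ge2\log(8n/\delta)$ and bound $\Lip_{B_2}$ rather than $\Lip_{\Sd}$, so we restrict to (S) as stated. We also need $\log(C_1mnd)\ge1$ so that the $K$-absorption inside the logarithm is harmless. This holds since $C_1\ge 2C_0K$ can be taken $\ge e$.
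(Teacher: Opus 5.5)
Your proposal is correct and follows the paper's proof: you specialize Theorem~\ref{thm:main} in model (S) with $\sigma^2=1$ and $\eps=\tfrac12$, then absorb $\bar m\le Km$, and hence $K$, into the constants. Your explicit choices $C_1\ge\max(4C_0,2C_0K)$ and $c_1=c_0/(2\sqrt K)$ are exactly the bookkeeping the paper leaves implicit.
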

\proofbutton{bln}

\begin{corollary}[Finite-horizon simultaneous widths]\label{cor:simul}
Fix an integer $M\ge1$. Under the hypotheses of Theorem~\ref{thm:main}, with
\[
        n\ge C_0\eps^{-2}\log(8M/\delta)
\]
and, in model (G), $d\ge2\log(8nM/\delta)$, there is an event of probability at least $1-\delta$ on which the conclusion of Theorem~\ref{thm:main} holds simultaneously for every width $1\le m\le M$ and every $f\in\NN_m$.
\end{corollary}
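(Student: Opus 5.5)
The plan is a union bound over widths, applying Theorem~\ref{thm:main} once for each $m\in\{1,\dots,M\}$ with failure probability $\delta/M$.

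\textbf{Step 1: the per-width event.} For each width $m$, let $E_m$ be the event from Theorem~\ref{thm:main} with $\delta$ replaced by $\delta/M$. Its hypotheses are exactly what the corollary assumes:
\begin{itemize}
\item $n\ge C_0\eps^{-2}\log(8M/\delta)=C_0\eps^{-2}\log\bigl(8/(\delta/M)\bigr)$;
\item in model (G), $d\ge2\log(8nM/\delta)=2\log\bigl(8n/(\delta/M)\bigr)$.
\end{itemize}
So $\PP(E_m^c)\le\delta/M$. The theorem gives the same constants $C_0,c_0$ for every $m$, and the data $(x_i,y_i)$ are a single sample. Hence $E:=\bigcap_{m\le M}E_m$ has $\PP(E^c)\le\delta$.

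\textbf{Step 2: the conclusion holds on $E$.} Take any $m\le M$ and any $f\in\NN_m$ with empirical error at most $\sigma^2-\eps$. On $E\subseteq E_m$, the bound with $\bar m=(K-1)m+1$ and $\log(C_0\bar mnd/\eps)$ holds. This is the stated conclusion for that width.

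\textbf{Step 3: the one point to check.} The only subtlety is that the theorem's conclusion must be stated with $\delta$ only in the probability and in the hypotheses, not inside the Lipschitz bound. Inspecting the statement, the right-hand side $c_0\eps\sqrt{n/(\bar m\log(C_0\bar mnd/\eps))}$ indeed does not involve $\delta$. So replacing $\delta$ by $\delta/M$ costs nothing in the bound.

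A sharper route would use the nesting $\NN_m\subseteq\NN_{m'}$ for $m\le m'$, but it is not helpful here, since the bound improves as $m$ decreases. The union bound is therefore the intended argument, and I expect no real obstacle.
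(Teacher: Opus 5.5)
Your proposal is correct and matches the paper's proof. Apply Theorem~\ref{thm:main} at each width $m\le M$ with failure probability $\delta/M$ and take a union bound; since the Lipschitz threshold does not involve $\delta$, only the sample-size and Gaussian-localization hypotheses pick up the factor $M$ inside the logarithm.
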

\proofbutton{simul}

The width enters the proof only through the number of distinct hyperplanes on which the realized function has a kink. For a two-layer piecewise-linear function $f$, let $k(f)$ be the number of distinct hyperplanes $H_{u,t}$ carrying a nonzero kink of the canonical representation inside $D$. After Lemma~\ref{lem:pl}, a width-$m$ network has $k(f)\le(K-1)m$, but redundant or cancelling parameterizations may have $k(f)$ much smaller.

\begin{theorem}[Realized kink-count law]\label{thm:kink}
There are absolute constants $C_0,c_0>0$ such that, in the setting of Theorem~\ref{thm:main}, if
\[
        n\ge C_0\eps^{-2}\log(8n/\delta)
\]
and, in model (G), $d\ge2\log(8n/\delta)$, then with probability at least $1-\delta$ every two-layer piecewise-linear network $f$, of arbitrary width and arbitrary weights, with $k(f)\le n$ and fitting the data $\eps$ below the noise floor, satisfies
\[
\Lip_D(f)\ge c_0\eps\sqrt{\frac{n}{(k(f)+1)\log\!\bigl(C_0(k(f)+1)nd/\eps\bigr)}}.
\]
\end{theorem}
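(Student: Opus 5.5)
We prove Theorem~\ref{thm:kink} by running the argument for Theorem~\ref{thm:main} once for each possible kink count $k\in\{0,1,\dots,n\}$, and then taking a union bound over the $n+1$ values of $k$. This union bound is the source of the $\log(8n/\delta)$ in the sample-size hypothesis. The main observation is that the width never enters the proof of Theorem~\ref{thm:main} directly. It enters only through the canonical representation provided by Lemma~\ref{lem:pl}, which rewrites any two-layer piecewise-linear $f$ as
\[
f(x)=\sum_{j=1}^{k(f)}\alpha_j\,\relu(\inn{u_j}{x}-t_j)+\inn{v}{x}+c .
\]
Here the hyperplanes $H_{u_j,t_j}$ are distinct and meet $D$, and the coefficients $\alpha_j$ are nonzero. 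For each $k$ we define $\FF_k$ as the class of such functions with at most $k$ kinks. Every network of any width, with $k(f)=k$, lies in $\FF_k$, so redundant hidden units are invisible from this point on.

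\textbf{Step 1 (bounded representation).} Fix $L>0$ and apply the rigidity lemma to each $f\in\FF_k$ with $\Lip_D(f)\le L$. The lemma states that distinct kink hyperplanes cannot cancel at generic points of $B_2$, or of $\Sd$ when $d\ge3$. It gives $|\alpha_j|\le CL$ for every $j$. Combined with $\|v\|\le L$ and a bound on $c$ after clipping, this confines the effective parameters to a set of diameter polynomial in $L$, $d$ and $n$. The remaining parameters $(u_j,t_j)$ lie in the compact set $\Sd\times[-2,2]$, since a hyperplane farther than $2$ from the origin does not meet $D$.

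\textbf{Step 2 (entropy bound).} Cover the clipped class $\{\clip\circ f: f\in\FF_k,\ \Lip_D(f)\le L\}$ in sup-norm on $D$, restricted to a high-probability set in model (G). The parameter count is about $(k+1)(d+1)$, and the map from parameters to functions is Lipschitz with polynomial constant in the bounded region. This yields a covering number of order $\exp\!\bigl(C(k+1)d\log(C(k+1)nd L/\eps)\bigr)$. This is the bound used in Theorem~\ref{thm:main}, with $\bar m$ replaced by $k+1$.

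\textbf{Step 3 (isoperimetric step and union bound).} Apply the Bubeck--Sellke-type concentration step from the proof of Theorem~\ref{thm:main} to the net of $\FF_k$, at failure probability $\delta/(n+1)$. In model (G), the condition $d\ge 2\log(8n/\delta)$ already yields $x_i\in B_2$ for all $i$ with the required probability. For fixed $k$, this gives an event of probability at least $1-\delta/(n+1)$ on which every $\eps$-below-noise fit in $\FF_k$ satisfies
\[
\Lip_D(f)\ge c_0\eps\sqrt{n/\bigl((k+1)\log(C_0(k+1)nd/\eps)\bigr)}.
\]
This uses $n\ge C_0\eps^{-2}\log(8(n+1)/\delta)$, which is absorbed into the stated hypothesis after adjusting constants. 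Intersecting these events over $k\le n$ gives a single event of probability at least $1-\delta$. On that event, any $f$ with $k(f)\le n$ belongs to $\FF_{k(f)}$ and therefore obeys the bound with $k=k(f)$.

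\textbf{Main obstacle.} The delicate point is that the bound must hold uniformly over all $L$, while the cover in Step 2 depends on $L$. I would handle this as in the proof of Theorem~\ref{thm:main}. Work with a dyadic sequence of values of $L$. Alternatively, note that the target bound is at most of order $\sqrt n$, so only $L\le C\sqrt n/\eps$ matters, since for larger $L$ there is nothing to prove. With this cap, the logarithmic factor contains only $(k+1)nd/\eps$, and no further union bound over $L$ is needed beyond a $\log$ factor that is already absorbed.

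A second check is that Lemma~\ref{lem:pl} and the rigidity lemma are stated for a realized function with a given set of distinct kink hyperplanes, rather than for a network of given width. If they were stated for width-$m$ networks, I would rederive them with $(K-1)m$ replaced by $k(f)$ throughout; the proofs themselves only use distinctness of the hyperplanes.
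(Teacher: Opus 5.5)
Your proposal is the paper's proof: for each kink count you rerun the argument of Theorem~\ref{thm:main} with the entropy bound of Proposition~\ref{prop:entropy} at width budget $k(f)+1$ and threshold $L^\ast_k$, then union-bound over the at most $n+1$ values. The paper uses budgets $\delta/(2q(q+1))$ and removes the Gaussian localization event once; your uniform $\delta/(n+1)$ split works equally well provided you also separate localization once instead of charging it to every $k$.

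Two corrections, neither of which breaks the argument. First, your Step~1 misstates the bounds. On the sphere rigidity gives only $|\alpha_j|\sqrt{1-t_j^2}\le 2L$, not $|\alpha_j|\le CL$, and $\|v\|$ is bounded by $L(1+2k)$ or $d(B_0+2Lk)$ rather than by $L$. So a naive bounded-box covering fails for thin caps, and you must genuinely invoke Proposition~\ref{prop:entropy}, which handles those caps by deletion and dyadic binning. Second, the uniformity in $L$ that you flag as the main obstacle does not arise: for each $k$ one fixes $L=L_k^\ast$, and since the class is monotone in $L$, that single choice covers all smaller Lipschitz constants.
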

\proofbutton{kink}

The kink-count theorem is stronger than the fixed-width theorem for realized functions with $k(f)\le n$; Theorem~\ref{thm:main} is still stated separately because it gives a clean fixed-width guarantee even when $(K-1)m>n$.

\begin{corollary}[Structured single-hidden-layer architectures]\label{cor:structured}
Fix an integer $K_0\ge0$. Under the hypotheses of Theorem~\ref{thm:main}, with $\bar m$ replaced by $K_0+1$, with probability at least $1-\delta$ every scalar input-output map of a single-hidden-layer piecewise-linear ridge architecture whose realized function has at most $K_0$ distinct kink hyperplanes obeys
\[
\Lip_D(f)\ge c_0\eps\sqrt{\frac{n}{(K_0+1)\log\!\bigl(C_0(K_0+1)nd/\eps\bigr)}}
\]
whenever it fits $\eps$ below the noise floor. The bound depends only on the realized function, not on the parameterization; hence any constrained single-hidden-layer parameterization --- weight sharing across filters as in a convolutional layer, tied or repeated weights, a low-rank factorization of the first layer --- satisfies the same bound with $K_0$ the number of distinct kink hyperplanes the constraint permits. If the activation has $K$ pieces and a convolutional layer has $F$ filters evaluated at $S$ spatial positions, one may take $K_0\le (K-1)FS$.
\end{corollary}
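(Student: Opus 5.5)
The plan is to reduce Corollary~\ref{cor:structured} to the same function-space argument that proves Theorem~\ref{thm:main}, with the number of kink hyperplanes $K_0$ playing the role that $(K-1)m$ plays there. The theorem's proof never uses the parameterization beyond one step: after Lemma~\ref{lem:pl}, a realized function is affine plus a sum of at most $(K-1)m$ canonical kinks $a_j\relu(\inn{u_j}{x}-t_j)$ on distinct hyperplanes. The rigidity lemma then bounds each $|a_j|$ by $C\Lip_D(f)$, which gives a bounded canonical representation with $(K_0+1)$-ish parameter blocks and hence the entropy bound.

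\textbf{Step 1: canonical form for every structured architecture.} Any single-hidden-layer piecewise-linear ridge architecture computes $f(x)=\sum_k a_k\psi_k(\inn{w_k}{x}+b_k)+\text{affine}$. This covers a convolutional layer, where each filter at each position is one ridge unit, and it covers tied or low-rank first layers. I will write each unit in canonical kink form as in Lemma~\ref{lem:pl}. I will then merge kinks carried by the same hyperplane $H_{u,t}$, adding their coefficients, and drop those with zero net coefficient or lying outside $D$. The result is an affine term plus at most $K_0$ kinks on distinct hyperplanes, and this depends only on the realized function. For convolution, there are $F S$ ridge units, each contributing at most $K-1$ breakpoints, so $K_0\le(K-1)FS$.

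\textbf{Step 2: rerun the covering argument.} I define the class $\mathcal{G}_{K_0}$ of functions on $D$ that are affine plus at most $K_0$ distinct-hyperplane kinks. On this class, the rigidity lemma controls each kink coefficient and the affine part by $C\Lip_D(f)$ (up to the cutoff at the relevant Lipschitz scale). So the bounded-representation entropy estimate used for Theorem~\ref{thm:main} applies verbatim with $\bar m$ replaced by $K_0+1$. Combining that entropy bound with the concentration and isoperimetry step of the main proof (clipped functions, with a union bound over the net) gives the stated inequality on one event of probability $1-\delta$. That event is simultaneous for all realized functions in $\mathcal{G}_{K_0}$, hence for every parameterization at once.

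\textbf{Main obstacle.} The substantive point is that the main proof must be genuinely phrased over $\mathcal{G}_{\bar m-1}$ rather than over $\NN_m$, so that $K_0$ may be substituted directly. If instead the proof uses the width explicitly, for example in a per-neuron bound, I will instead invoke Theorem~\ref{thm:kink} with $k(f)\le K_0$. That route requires checking that the hypothesis $n\ge C_0\eps^{-2}\log(8n/\delta)$ can be weakened to the fixed-$K_0$ form $\log(8/\delta)$. This should hold because the union over values of $k$ in Theorem~\ref{thm:kink} is no longer needed when a single $K_0$ is fixed. In that route the monotonicity in $k$ of the bound handles every $k(f)\le K_0$.
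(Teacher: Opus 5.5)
Your proposal is correct and is essentially the paper's proof. Lemma~\ref{lem:canonical} puts any such realized map with at most $K_0$ kink hyperplanes into canonical ReLU form with at most $K_0$ units on $D$, so $f|_D\in\AAA_{K_0+1,L}$ and the proof of Theorem~\ref{thm:main} runs verbatim with the entropy bound at width budget $K_0+1$; your first route works, the paper also cites Theorem~\ref{thm:kink} for the simultaneous version when $K_0\le n$, and the count $K_0\le(K-1)FS$ is the same.

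One correction to your summary: on the sphere, rigidity bounds $|\alpha_j|\sqrt{1-t_j^2}$ rather than $|\alpha_j|$, and the affine part needs the value bound of Lemma~\ref{lem:B0}. Since you reuse Proposition~\ref{prop:entropy} unchanged, this does not affect the argument.
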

\proofbutton{structured}

\begin{corollary}[Vector outputs]\label{cor:vector}
Let the output dimension be $r$. Suppose $y_i\in[-1,1]^r$, and write $\sigma_\ell^2=\EE\Var(y_\ell\mid x)$ for the coordinate noise levels. Let $f=(f_1,\dots,f_r)$, where each $f_\ell$ is a two-layer piecewise-linear scalar network and all coordinates together use at most $K_0$ distinct kink hyperplanes. If
\[
        \frac1n\sum_{i=1}^n\|f(x_i)-y_i\|_2^2\le\sum_{\ell=1}^r\sigma_\ell^2-\eps,
\]
then, on the event obtained by applying the scalar theorem to every coordinate with $\sigma_\ell^2\ge\eps/r$ using accuracy parameter $\eps/r$ and failure probability $\delta/r$, every such $f$ satisfies
\[
\Lip_D(f)\ge c_0\frac{\eps}{r}\sqrt{\frac{n}{(K_0+1)\log\!\bigl(C_0(K_0+1)ndr/\eps\bigr)}}.
\]
Equivalently, it is enough to assume the scalar-theorem sample-size and localization hypotheses with $\eps$ replaced by $\eps/r$ and $\delta$ by $\delta/r$.
\end{corollary}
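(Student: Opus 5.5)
The plan is to reduce the vector statement to the scalar kink-count law (Theorem~\ref{thm:kink}, or Corollary~\ref{cor:structured} with $K_0$ fixed), applied coordinatewise, via a pigeonhole over coordinates. Write the total excess as
\[
\eps\le\sum_{\ell=1}^r\Bigl(\sigma_\ell^2-\frac1n\sum_{i=1}^n(f_\ell(x_i)-y_{i\ell})^2\Bigr),
\]
using $\|f(x_i)-y_i\|_2^2=\sum_\ell (f_\ell(x_i)-y_{i\ell})^2$. Some coordinate $\ell$ then has summand at least $\eps/r$, so $\frac1n\sum_i(f_\ell(x_i)-y_{i\ell})^2\le\sigma_\ell^2-\eps/r$. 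Since the empirical error is nonnegative, this coordinate also has $\sigma_\ell^2\ge\eps/r$. This is exactly why the event only needs to cover coordinates with $\sigma_\ell^2\ge\eps/r$, and why the hypothesis $\eps/r\in(0,\sigma_\ell^2]$ of the scalar theorem holds for the selected coordinate.

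Next, build the event. For each such $\ell$, the pairs $(x_i,y_{i\ell})$ are i.i.d.\ with $x_i\sim\mu$, $y_{i\ell}\in[-1,1]$ and conditional noise level $\sigma_\ell^2$. The scalar theorem therefore applies with accuracy $\eps/r$ and failure probability $\delta/r$. The sample-size and model-(G) dimension hypotheses are assumed with these substitutions. A union bound over at most $r$ coordinates gives an event of probability at least $1-\delta$. On this event, the scalar bound holds simultaneously for every scalar two-layer piecewise-linear function with at most $K_0$ kink hyperplanes, since the scalar statement is uniform over all such functions.

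Then apply it to the selected coordinate. The coordinate $f_\ell$ uses only kink hyperplanes from the common pool, so $k(f_\ell)\le K_0$. The scalar bound is monotone decreasing in the kink count, so it holds with $K_0$ in place of $k(f_\ell)$. This gives
\[
\Lip_D(f_\ell)\ge c_0\frac{\eps}{r}\sqrt{\frac{n}{(K_0+1)\log(C_0(K_0+1)ndr/\eps)}}.
\]
Finally, $|f_\ell(x)-f_\ell(x')|\le\|f(x)-f(x')\|_2$, so $\Lip_D(f)\ge\Lip_D(f_\ell)$, where the Lipschitz constant of $f$ is taken with the Euclidean norm on outputs.

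The only step needing care is the choice of scalar result, since $k(f_\ell)\le n$ is required in Theorem~\ref{thm:kink}. When $K_0\le n$, I use that theorem directly. Otherwise I use the fixed-kink-budget version, Corollary~\ref{cor:structured}, whose event depends only on $K_0$ and carries the $\log(8/\delta)$-type sample-size hypothesis. Either way the bound is uniform over functions, which is what makes the data-dependent choice of $\ell$ legitimate.
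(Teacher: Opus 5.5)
Your proposal is correct and follows the paper's proof essentially step for step. You pigeonhole the total excess $\sum_\ell\bigl(\sigma_\ell^2-\frac1n\sum_i(f_\ell(x_i)-y_{i\ell})^2\bigr)\ge\eps$ onto one coordinate whose excess is at least $\eps/r$, use nonnegativity of the empirical error to get $\sigma_\ell^2\ge\eps/r$ so that coordinate is covered by the intersected event, bound $k(f_\ell)\le K_0$, and conclude with $\Lip_D(f)\ge\Lip_D(f_\ell)$. Your explicit choice between Theorem~\ref{thm:kink} (when $K_0\le n$) and the fixed-budget Corollary~\ref{cor:structured} (otherwise), together with your remark that the scalar bound decreases in the kink count, just spells out what the paper leaves implicit under ``the scalar theorem.''
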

\proofbutton{vector}

\subsection{Guide to the results}
Each entry links to its statement; the \textsf{[proof $\to$]} marker jumps to the proof in Appendix~\ref{app:proofs}.
\begin{itemize}
\item \ref{conj:bln} --- the Bubeck--Li--Nagaraj law of robustness (open conjecture). \hyperref[thm:main]{[main results $\to$]}
\item \ref{thm:main} --- fixed-width law for piecewise-linear activations, up to one logarithm. \proofbutton{main}
\item \ref{cor:bln} --- BLN conjecture for piecewise-linear activations, up to one logarithm. \proofbutton{bln}
\item \ref{cor:simul} --- the law, simultaneously over all widths up to $M$. \proofbutton{simul}
\item \ref{thm:kink} --- realized-kink-count law, $\bar m$ replaced by $k(f)+1$. \proofbutton{kink}
\item \ref{cor:structured} --- convolutional, weight-tied and low-rank single-hidden-layer architectures. \proofbutton{structured}
\item \ref{cor:vector} --- vector-valued outputs, coordinatewise. \proofbutton{vector}
\item \ref{lem:pl} --- reduce any piecewise-linear activation to ReLU kinks. \proofbutton{pl}
\item \ref{lem:canonical} --- canonical ReLU form on ball or sphere. \proofbutton{canonical}
\item \ref{lem:rigidity} --- ball rigidity: kink coefficient bounded by Lipschitz constant. \proofbutton{rigidity}
\item \ref{lem:rigidity-sphere} --- sphere rigidity, with the $\sqrt{1-t_j^2}$ factor. \proofbutton{rigidity-sphere}
\item \ref{prop:d2} --- rigidity genuinely fails on the circle $\mathbb S^1$. \proofbutton{d2}
\item \ref{lem:B0} --- automatic sup-norm bound from fitting and Lipschitzness. \proofbutton{B0}
\item \ref{lem:affine} --- bounds on the affine part $v,c$. \proofbutton{affine}
\item \ref{prop:entropy} --- metric entropy of the canonical Lipschitz ball. \proofbutton{entropy}
\item \ref{lem:conc-models} --- sphere and Gaussian satisfy Lipschitz concentration. \proofbutton{conc-models}
\item \ref{lem:noise} --- noise decomposition reducing fitting to a multiplier event. \proofbutton{noise}
\item \ref{lem:onef} --- single-function sub-Gaussian deviation bound. \proofbutton{onef}
\item \ref{lem:mean} --- mean-term deviation bound. \proofbutton{mean}
\item \ref{thm:finite} --- finite-class law via concentration. \proofbutton{finite}
\item \ref{lem:hsplit} --- harmonic split into constant, linear, higher-degree parts. \proofbutton{hsplit}
\item \ref{lem:raddud} --- entropy and Dudley bound at the population radius. \proofbutton{raddud}
\item \ref{lem:selfbound} --- self-bounding inequality for the empirical radius. \proofbutton{selfbound}
\item \ref{thm:sharpcomplexity} --- Rademacher complexity carrying a sample-free logarithm. \proofbutton{sharpcomplexity}
\item \ref{thm:sharplog} --- the law with the sample size out of the logarithm. \proofbutton{sharplog}
\item \ref{thm:genact} --- any Lipschitz activation at small width, projection floor. \proofbutton{genact}
\item \ref{thm:genact2} --- localized projection floor, sharpened small-width rate. \proofbutton{genact2}
\item \ref{cor:genactcurve} --- per-width consequences of the localized floor. (no separate proof; consequences of \ref{thm:genact2})
\item \ref{conj:M} --- statement of the open multiplier estimate; reduction and surrounding structure in the supplementary note \cite{supp}.
\item \ref{prop:upper} --- matching two-layer ReLU upper bound at $m\asymp n$. \proofbutton{upper}
\end{itemize}

\subsection{The idea of the proof}

A two-layer piecewise-linear function is a sum of ridge functions plus an affine map; each unit contributes kinks on a finite family of parallel hyperplanes. The proofs in \cite{BLN,BS,WHZ} discretize parameter space, whose covering number is finite only after bounding the parameters. Since the weights here are arbitrary, the proof instead discretizes the realized functions.

The special deterministic structure is a rigidity phenomenon (Section~\ref{sec:rigid}): kinks supported on distinct hyperplanes cannot cancel at a generic point of one kink hyperplane. At such a point all other units are locally affine, so the jump of a one-dimensional derivative equals the coefficient of the single kink under inspection; an $L$-Lipschitz function can have a derivative jump of size at most $2L$. Thus, after an exact canonical rewriting on the domain (Section~\ref{sec:canonical}), every kink coefficient is bounded in terms of $L$ (with the natural spherical factor). The canonical parameters then lie in a bounded set depending only on $L$, $m$, and $d$, not on the original weights, giving the metric entropy bound of Section~\ref{sec:entropy}. A finite-class noise-decomposition argument, following the concentration mechanism of \cite{BS} but written self-contained for the two data models, finishes the proof.

The mechanism is kink-specific. Smooth activations admit bounded-Lipschitz finite-difference families whose representing parameters must escape to infinity, so the canonical-parameter argument does not extend directly. Rigidity also fails on $\mathbb S^1$ because a kink set there consists of two points rather than a positive-dimensional sphere. These limitations are recorded in Sections~\ref{sec:rigid} and \ref{sec:scope}; they are not hidden assumptions in the main theorem.

\section{The canonical form}\label{sec:canonical}

\begin{lemma}[Reduction to ReLU kinks]\label{lem:pl}
Let $\psi:\R\to\R$ be continuous piecewise linear with $K$ pieces. If $K=1$ then $\psi$ is affine and every $f\in\NN_m$ is itself affine, already of the form \eqref{eq:class} with no ReLU units; so assume $K\ge2$, with breakpoints $\tau_1<\dots<\tau_{K-1}$ and successive slopes $s_0,\dots,s_{K-1}$ (so $\psi$ has slope $s_\kappa$ on $(\tau_\kappa,\tau_{\kappa+1})$, with $\tau_0=-\infty$, $\tau_K=+\infty$). Then for all $t\in\R$,
\begin{equation}\label{eq:plident}
\psi(t)=\psi(\tau_1)+s_0(t-\tau_1)+\sum_{\kappa=1}^{K-1}(s_\kappa-s_{\kappa-1})\,\relu(t-\tau_\kappa).
\end{equation}
Consequently every $f\in\NN_m$ with activation $\psi$ equals, at every point of $\R^d$, a network of the form \eqref{eq:class} with activation $\relu$ and width at most $(K-1)m$; the identity \eqref{eq:plident} is the classical hinge representation of a continuous piecewise-linear function \cite{Breiman,AroraBMM}.
\end{lemma}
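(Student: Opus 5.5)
The identity \eqref{eq:plident} is the classical hinge representation, and I plan to prove it by a difference-function argument. Define $g(t)$ as the difference between $\psi(t)$ and the right-hand side of \eqref{eq:plident}. Both sides are continuous on $\R$: $\psi$ by the standing convention, and the right-hand side as a finite combination of affine maps and ReLUs. Continuity is essential here, since without it the identity fails by jump terms. I will then show that $g$ is affine with zero slope on each open piece $(\tau_\kappa,\tau_{\kappa+1})$.

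For $t\in(\tau_\kappa,\tau_{\kappa+1})$ with $0\le\kappa\le K-1$, the terms $\relu(t-\tau_j)$ with $j\le\kappa$ equal $t-\tau_j$, while those with $j>\kappa$ vanish. The derivative of the right-hand side on that interval is therefore
\[
s_0+\sum_{j=1}^{\kappa}(s_j-s_{j-1})=s_\kappa,
\]
by telescoping. This matches the slope of $\psi$ there, so $g'\equiv0$ on each open piece, and $g$ is constant on each piece. Continuity of $g$ at every breakpoint then glues these constants into a single constant on all of $\R$. Evaluating at $t=\tau_1$, every ReLU term vanishes (since $\tau_1-\tau_\kappa\le0$) and the affine part equals $\psi(\tau_1)$, so $g(\tau_1)=0$. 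Hence $g\equiv0$.

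For the network consequence, I substitute $t=\inn{w_k}{x}+b_k$ into \eqref{eq:plident} for each unit. This gives
\[
a_k\psi(\inn{w_k}{x}+b_k)=a_k\psi(\tau_1)+a_ks_0\bigl(\inn{w_k}{x}+b_k-\tau_1\bigr)+\sum_{\kappa=1}^{K-1}a_k(s_\kappa-s_{\kappa-1})\,\relu\bigl(\inn{w_k}{x}+b_k-\tau_\kappa\bigr).
\]
The first two terms are affine in $x$, so I absorb them into the skip connection, replacing $v$ by $v+\sum_k a_ks_0w_k$ and $c$ by the corresponding constant. What remains are $(K-1)m$ ReLU units with weights $w_k$, biases $b_k-\tau_\kappa$, and outer coefficients $a_k(s_\kappa-s_{\kappa-1})$. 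This is a ReLU network of the form \eqref{eq:class} with width at most $(K-1)m$, and the equality holds pointwise on all of $\R^d$. The $K=1$ case was already disposed of in the statement.

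There is no real obstacle in this argument. The only points needing care are the telescoping slope check and the use of continuity to pass from "constant on each piece" to "globally constant."
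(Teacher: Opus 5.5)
Your proposal is correct and follows essentially the same route as the paper: both show that the two sides are continuous, have matching slopes on every piece via the telescoping sum, and agree at $t=\tau_1$; both then substitute $t=\inn{w_k}{x}+b_k$ and absorb the affine terms into the skip connection. Your difference function $g$ merely makes explicit the uniqueness principle that the paper states in one sentence.
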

\proofbutton{pl}

From here on $\psi=\relu$ and the width is written $m$; in the final statements it is replaced by $(K-1)m\le\bar m-1$. The next lemma is pure bookkeeping, but we spell it out because the rigidity lemma needs the precise output.

\begin{lemma}[Canonical form on a domain]\label{lem:canonical}
Let $D=B_R$ (any $R>0$) or $D=\Sd$ with $d\ge2$. Every ReLU network $f\in\NN_m$ can be rewritten, so that the two sides agree at every point of $D$, as
\begin{equation}\label{eq:canon}
f(x)=\inn{v}{x}+c+\sum_{j=1}^{m_0}\alpha_j\,\relu(\inn{u_j}{x}-t_j),\qquad m_0\le m,
\end{equation}
where $\|u_j\|=1$, $\alpha_j\neq0$, the hyperplanes $H_{u_j,t_j}$ are pairwise distinct \emph{as sets}, and
\begin{itemize}
\item[(i)] ball case: $t_j\in(-R,R)$, so $H_{u_j,t_j}\cap\operatorname{int}B_R$ is a nonempty relatively open $(d-1)$-dimensional disk;
\item[(ii)] sphere case: $t_j\in[0,1)$, so $H_{u_j,t_j}\cap\Sd$ is a $(d-2)$-sphere of radius $\sqrt{1-t_j^2}>0$.
\end{itemize}
\end{lemma}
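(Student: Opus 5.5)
The plan is to process the $m$ units of a ReLU network one at a time, keeping a running affine part, and then merge coincident hyperplanes. Start from $f(x)=\inn{v}{x}+c+\sum_k a_k\relu(\inn{w_k}{x}+b_k)$. A unit with $a_k=0$ is discarded. A unit with $w_k=0$ is the constant $a_k\relu(b_k)$ and is absorbed into $c$. Otherwise divide by $\|w_k\|$: using positive homogeneity, $a_k\relu(\inn{w_k}{x}+b_k)=a_k\|w_k\|\relu(\inn{u}{x}-t)$ with $u=w_k/\|w_k\|$ and $t=-b_k/\|w_k\|$. So every surviving unit has a unit normal and a scalar offset $t$.

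\emph{Ball case.} If $t\ge R$, then $\inn{u}{x}-t\le \|x\|-t\le 0$ on $B_R$, so the unit vanishes identically on $D$ and is dropped. If $t\le -R$, then $\inn{u}{x}-t\ge0$ on $B_R$, so the unit equals the affine function $\inn{u}{x}-t$ there and is absorbed into $(v,c)$. The remaining units have $t\in(-R,R)$.

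\emph{Sphere case.} Here we additionally need $t\ge0$. We use the identity $\relu(s)=s+\relu(-s)$. If $t<0$, this gives $\relu(\inn{u}{x}-t)=\inn{u}{x}-t+\relu(\inn{-u}{x}-(-t))$, which replaces $(u,t)$ by $(-u,-t)$ with $-t>0$ at the cost of an affine term. Then, as before, $t\ge1$ gives a unit vanishing on $\Sd$ (drop it). After the flip, $t\le-1$ cannot occur, but for uniformity the affine absorption can be applied first. So $t\in[0,1)$.

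\emph{Merging.} Two pairs $(u,t)$ and $(u',t')$ give the same hyperplane as a set iff $(u',t')=\pm(u,t)$.
\begin{itemize}
\item In the ball case, if $(u',t')=(-u,-t)$, use the same flip identity to rewrite the second unit in terms of $(u,t)$ plus an affine term. Pairs with identical $(u,t)$ then add their coefficients.
\item In the sphere case, $t,t'\ge0$ forces $(u',t')=(u,t)$ unless $t=t'=0$. In that subcase we flip as well.
\end{itemize}
After grouping, drop any group whose total coefficient is zero, so all $\alpha_j\neq0$ and the hyperplanes are pairwise distinct. Since no step creates new units, $m_0\le m$. Every rewriting is either an exact identity on $\R^d$ (homogeneity, flip) or exact on $D$ (dropping or absorbing), so the two sides agree on $D$.

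Finally, the geometric descriptions follow directly. For $|t|<R$ the hyperplane $H_{u,t}$ meets $\operatorname{int}B_R$ in an open disk of radius $\sqrt{R^2-t^2}$. For $0\le t<1$ it meets $\Sd$ in a $(d-2)$-sphere of radius $\sqrt{1-t^2}$, centred at $tu$.

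The only delicate point is the bookkeeping in the merging step: making sure that sign flips are done consistently, so that the distinct-as-sets condition really holds after combining. In particular, the case $t=0$ on the sphere (and both signs in the ball) must be handled so that $(u,t)$ and $(-u,-t)$ are not left as two separate terms. Everything else is routine.
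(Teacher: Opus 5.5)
Your proposal is correct and follows the paper's proof essentially step for step. The shared steps are: normalization by positive homogeneity, the reflection identity $\relu(s)=s+\relu(-s)$ to force $t\ge0$ on the sphere, dropping or absorbing units whose kink misses $D$, and merging units on coincident hyperplanes. The one cosmetic difference is how you resolve the $(u,t)$ versus $(-u,-t)$ ambiguity, in the ball case and in the $t=0$ sphere case: the paper fixes a global convention (first nonzero coordinate of $u$ positive), whereas you flip within each group of coincident units during merging, which amounts to the same thing.
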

\proofbutton{canonical}

\begin{remark}\label{rem:functiondata}
The rewriting changes the parameters, not the function: \eqref{eq:canon} is an identity on $D$. All later bounds constrain the canonical parameters $(\alpha_j,u_j,t_j,v,c)$; the original weights never reappear. This is the crux: the theorem's conclusion is a statement about $f$ as a function on $D$, and the canonical parameters are determined by that function, whereas the original weights are not.
\end{remark}

\section{Rigidity}\label{sec:rigid}

\subsection{The ball}

\begin{lemma}[Rigidity, ball]\label{lem:rigidity}
Let $f$ be in the canonical form \eqref{eq:canon} on $D=B_R$, with $L:=\Lip_{B_R}(f)<\infty$. Then $|\alpha_j|\le 2L$ for every $j$.
\end{lemma}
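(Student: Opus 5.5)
Fix an index $j$ and isolate the $j$-th kink by choosing a generic point $x_0$ on $H_j:=H_{u_j,t_j}$ inside $\operatorname{int}B_R$. By Lemma~\ref{lem:canonical}(i), $H_j\cap\operatorname{int}B_R$ is a nonempty relatively open $(d-1)$-dimensional disk. For each $i\ne j$ the hyperplane $H_i$ is distinct from $H_j$ as a set, so $H_i\cap H_j$ is either empty (parallel case) or an affine subspace of dimension $d-2$. A finite union of such sets has empty interior in $H_j$. We can therefore pick $x_0\in H_j\cap\operatorname{int}B_R$ with $x_0\notin H_i$ for every $i\ne j$, and then a radius $r>0$ such that the ball $B(x_0,r)$ lies in $\operatorname{int}B_R$ and misses every $H_i$, $i\ne j$.

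On $B(x_0,r)$ each term $\alpha_i\relu(\inn{u_i}{x}-t_i)$ with $i\ne j$ has constant sign of its argument, so it is affine there. Hence
\[
f(x)=\inn{w}{x}+b+\alpha_j\relu(\inn{u_j}{x}-t_j),\qquad x\in B(x_0,r),
\]
for some $w\in\R^d$ and $b\in\R$. Restrict to the segment $g(s)=f(x_0+su_j)$ for $|s|<r$. Since $\inn{u_j}{x_0}=t_j$ and $\|u_j\|=1$, this gives $g(s)=\inn{w}{x_0}+b+s\inn{w}{u_j}+\alpha_j\relu(s)$. The one-sided slopes are therefore $\inn{w}{u_j}$ for $s<0$ and $\inn{w}{u_j}+\alpha_j$ for $s>0$.

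Since $f$ is $L$-Lipschitz on $B_R$ and the segment has unit speed, both slopes have absolute value at most $L$. Their difference $\alpha_j$ thus satisfies $|\alpha_j|\le 2L$. This holds for each $j$, which proves the lemma.

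The only step needing care is the genericity choice of $x_0$, and it rests entirely on the pairwise distinctness of the hyperplanes as sets in the canonical form. Without that distinctness, two terms on the same hyperplane (for instance $u$ and $-u$ with $t$ and $-t$) could cancel. The hypothesis $t_j\in(-R,R)$ is what makes the disk nonempty and open, so that an interior point off the other hyperplanes exists.
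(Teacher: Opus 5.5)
Your proof is correct and is essentially the paper's own argument. Both pick a point of $H_{u_j,t_j}\cap\operatorname{int}B_R$ off all other kink hyperplanes; the only cosmetic difference is that you use nowhere density where the paper uses $(d-1)$-dimensional measure zero. Both then restrict to the unit-speed line in direction $u_j$ and bound the two one-sided slopes, whose difference is $\alpha_j$, by $L$.
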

\proofbutton{rigidity}

\subsection{The sphere}

\begin{lemma}[Rigidity, sphere]\label{lem:rigidity-sphere}
Let $d\ge3$ and let $f$ be in the canonical form \eqref{eq:canon} on $D=\Sd$, with $L:=\Lip_{\Sd}(f)<\infty$ (Euclidean metric). Then
\[
|\alpha_j|\sqrt{1-t_j^2}\;\le\;2L\qquad\text{for every }j.
\]
\end{lemma}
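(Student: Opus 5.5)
The plan is to imitate the ball argument (Lemma~\ref{lem:rigidity}) but to differentiate along a great circle rather than a straight line. Fix $j$ and write $S_j:=H_{u_j,t_j}\cap\Sd$. By Lemma~\ref{lem:canonical}(ii), $S_j$ is a $(d-2)$-sphere of radius $\sqrt{1-t_j^2}>0$ centred at $t_ju_j$.

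\textbf{Step 1: find a generic point of $S_j$.} We need $p\in S_j$ lying on no other kink hyperplane $H_{u_k,t_k}$ with $k\ne j$. Since $d\ge3$, $S_j$ is a sphere of dimension $d-2\ge1$ and positive radius. Its affine hull is therefore exactly $H_{u_j,t_j}$.

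Take $k\ne j$. The hyperplanes $H_{u_k,t_k}$ and $H_{u_j,t_j}$ are distinct as sets, so $H_{u_k,t_k}$ cannot contain $S_j$. Consequently $H_{u_k,t_k}\cap S_j$ is either empty or a lower-dimensional subsphere of $S_j$, which is nowhere dense in $S_j$. A finite union of such sets is still nowhere dense, so a generic $p$ exists.

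By continuity, there is then a neighbourhood $U$ of $p$ in $\R^d$ on which each term with $k\ne j$ has constant sign of $\inn{u_k}{x}-t_k$. On $U$ we can therefore write $f(x)=A(x)+\alpha_j\relu(\inn{u_j}{x}-t_j)$ with $A$ affine.

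This step is the main obstacle, and it is exactly where $d\ge3$ is used. For $d=2$ the set $S_j$ is two points, and another distinct line can pass through both of them. This is the failure recorded in Proposition~\ref{prop:d2}.

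\textbf{Step 2: a transverse great circle.} Let $e:=(u_j-t_jp)/\sqrt{1-t_j^2}$. This is the normalized tangential component of $u_j$ at $p$: it is a unit vector orthogonal to $p$, since $\inn{u_j}{p}=t_j$ and $\|u_j-t_jp\|^2=1-t_j^2$. Set $\gamma(s)=\cos s\,p+\sin s\,e\in\Sd$ and $g(s)=f(\gamma(s))$.

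Then
\[
\inn{u_j}{\gamma(s)}-t_j=t_j(\cos s-1)+\sqrt{1-t_j^2}\,\sin s .
\]
This expression vanishes at $s=0$ and has derivative $\sqrt{1-t_j^2}$ there. For small $|s|$, $\gamma(s)\in U$, so $A\circ\gamma$ is smooth. Hence the one-sided derivatives of $g$ at $0$ exist and satisfy
\[
g'(0^+)-g'(0^-)=\alpha_j\sqrt{1-t_j^2}.
\]

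\textbf{Step 3: the Lipschitz bound.} Using the Euclidean Lipschitz constant, $|g(s)-g(0)|\le L\|\gamma(s)-p\|=2L|\sin(s/2)|\le L|s|$. Hence $|g'(0^\pm)|\le L$, and so $|\alpha_j|\sqrt{1-t_j^2}\le|g'(0^+)|+|g'(0^-)|\le 2L$.
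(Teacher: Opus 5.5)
Your proof is correct and is essentially the paper's argument: the same choice of a generic point on $S_j$ by nowhere-density (you use that a finite union of closed nowhere dense sets is nowhere dense, where the paper invokes Baire), the same tangent direction $(u_j-t_jp)/\sqrt{1-t_j^2}$ and great circle, and the same chord--arc comparison putting both one-sided derivatives in $[-L,L]$. One correction to your side remark on $d=2$: no other line can pass through both points of $S_j$, since those two points determine $H_{u_j,t_j}$; what actually fails is that a single point is open, not nowhere dense, in the two-point set $S_j$, so two other kink lines (one through each point) can cover $S_j$, which is exactly the configuration used in Proposition~\ref{prop:d2}.
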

\proofbutton{rigidity-sphere}

\subsection{Rigidity fails on the circle}

The restriction to $d\ge3$ in model (S) is necessary: the coefficient bound of Lemma~\ref{lem:rigidity-sphere} is false at $d=2$.

\begin{proposition}[Failure of rigidity at $d=2$]\label{prop:d2}
For every $R>1$ and every $\Lambda>0$ there is a canonical four-unit ReLU function $F$ on $\mathbb S^1$, with four distinct kink point-pairs, such that
\[
        |\alpha_j|\sqrt{1-t_j^2}=\Lambda\qquad (j=1,\dots,4),
\]
while $\Lip_{\mathbb S^1}(F)\le \Lambda/R$.  Consequently no absolute constant $C$ can make
$|\alpha_j|\sqrt{1-t_j^2}\le C\Lip_{\mathbb S^1}(F)$ valid for all canonical representations on $\mathbb S^1$.
\end{proposition}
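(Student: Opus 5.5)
The construction will exploit the one feature of $\mathbb S^1$ that the proof of Lemma~\ref{lem:rigidity-sphere} excludes when $d\ge3$. On the circle the kink set $H_{u,t}\cap\mathbb S^1$ is a pair of points. Two \emph{distinct} lines can therefore share a kink point, and at a shared point there is no ``generic point of one kink hyperplane'' at which to read off a single jump. I will place four lines $H_{u_j,t_j}$ as the sides of a quadrilateral inscribed in $\mathbb S^1$, with vertices at angles $\theta_1<\theta_2<\theta_3<\theta_4$. Unit $j$ is $\alpha_j\relu(\inn{u_j}{x}-t_j)$, supported on the arc between consecutive vertices $\theta_j,\theta_{j+1}$. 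The side of the four-point configuration cut off by the chord is chosen so that these supports tile the circle. Each unit is then nonzero on exactly one arc.

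Step 1 is the local computation. In arclength $\theta$, unit $j$ is $\alpha_j(\cos(\theta-\phi_j)-t_j)_+$. Its one-sided derivative at either endpoint has magnitude $|\alpha_j|\sqrt{1-t_j^2}$, so the normalization $|\alpha_j|\sqrt{1-t_j^2}=\Lambda$ says that every unit enters and leaves its arc with slope exactly $\pm\Lambda$. At a shared vertex, the jump of $F'$ is the sum of the two jumps. Choosing alternating signs $\alpha_{j+1}\sqrt{1-t_{j+1}^2}=-\alpha_j\sqrt{1-t_j^2}$, as in the inscribed-square computation, makes the jumps cancel. The resulting $F$ is then $C^1$ on $\mathbb S^1$, with $|F'|=\Lambda$ at every vertex. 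Step 2 adds the affine part $\inn{v}{x}+c$, which contributes $|v|\sin(\cdot)$ to the derivative. I will choose $v$ so that the total derivative vanishes, or is $O(\Lambda/R)$, at all four vertices simultaneously. This is a consistency condition on the four vertex slopes, which the symmetry of the quadrilateral should make solvable.

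Step 3 bounds $\sup|F'|$ on each arc. On arc $j$, $F''=-\alpha_j\cos(\theta-\phi_j)-\inn{v}{x}''$. The derivative drifts from its vertex value by at most $\int_{\text{arc}}|F''|$, and that drift is what must be $\le\Lambda/R$. This is the main obstacle. A naive rectangle fails: a short arc of length $2\eta$ with $\alpha\sim\Lambda/\eta$ has $|F''|\sim\Lambda/\eta$, which gives drift of order $\Lambda$, and long arcs have drift of order $\Lambda$ as well. To fix this, I will first try to let the affine term and the units cancel to second order. On each arc the unit is itself a restricted affine function, namely $\alpha_j\inn{u_j}{x}$ minus a constant. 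So $F$ restricted to arc $j$ equals $\inn{v+\alpha_ju_j}{x}+\text{const}$, and $|F'|\le\|v+\alpha_ju_j\|$ there. The goal therefore becomes arithmetic: find four distinct unit normals $u_j$, offsets $t_j$, coefficients $\alpha_j$ and a vector $v$ with $\|v+\alpha_ju_j\|\le\Lambda/R$ for all $j$, continuity of $F$ at each vertex, and $|\alpha_j|\sqrt{1-t_j^2}=\Lambda$.

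One concrete candidate for Step 3 is to take $v=-\alpha_1u_1$ with $|\alpha_j|$ all large and the $u_j$ nearly parallel, up to sign. Then $\alpha_ju_j$ nearly coincide, so the differences $\alpha_ju_j-\alpha_1u_1$ are small, of size about $\Lambda/R$ after scaling $|\alpha_j|\approx\Lambda R'$ and tilting the normals by angles of order $1/(RR')$. The offsets $t_j$ are then dictated by the requirement that consecutive chords meet on $\mathbb S^1$ and that $\sqrt{1-t_j^2}=1/(RR')$-type factors reproduce $\Lambda$. Continuity of $F$ at vertices is handled by the constant $c$ together with the automatic vanishing of each ReLU at its endpoints. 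The concluding ``consequently'' clause is immediate: since $R>1$ is arbitrary, the ratio $\Lambda/\Lip_{\mathbb S^1}(F)\ge R$ is unbounded.
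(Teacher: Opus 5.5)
Your Step 1 is correct, and in fact forced: an uncancelled jump of size $2\Lambda$ at a vertex already gives $\Lip_{\mathbb S^1}(F)\ge\Lambda$. The gap is in Steps 2--3: no affine term can repair the tiling configuration. Step 2 assumes symmetry "should make it solvable", and that is exactly where the argument breaks.

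After Step 1, up to a global sign, the unit part of $F'$ takes the values $+\Lambda,-\Lambda,+\Lambda,-\Lambda$ at the cyclically ordered vertices $\theta_1,\dots,\theta_4$. The term $\inn vx+c$ contributes $g(\theta)=\|v\|\sin(\beta-\theta)$, which has at most two zeros on the circle.

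To get $|F'(\theta_k)|\le\Lambda/R<\Lambda$ at all four vertices, $g(\theta_k)$ would need strictly alternating signs at four cyclically ordered points. That requires four zeros, which is impossible. For the square it is immediate: $g(\theta+\pi)=-g(\theta)$, whereas you need $g(\theta_1)=g(\theta_3)$.

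Since $|F'(\theta_k)|\le\Lip_{\mathbb S^1}(F)$ (chord $\le$ arc), every tiling configuration has $\Lip_{\mathbb S^1}(F)\ge\Lambda$, whatever $v$ and $c$ are.

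Your concrete candidate in Step 3 is also incompatible with tiling. Taking $|\alpha_j|\approx\Lambda R'$ forces $\sqrt{1-t_j^2}\approx1/R'$, so each arc has length about $2/R'$, and four such arcs cannot cover $\mathbb S^1$. On the uncovered part $F=\inn vx+c$ with $\|v\|\approx\Lambda R'$, so the Lipschitz constant blows up rather than shrinks.

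The missing idea is to make the supports overlap instead of tile, so that the units cancel each other throughout their supports and not only at the kink points. The paper's construction does this as follows.

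\begin{itemize}
\item It takes four nearly coincident arcs $[A,D],[B,D],[B,C],[A,C]$ with $A=-a$, $B=-a+\eta$, $C=a-\eta$, $D=a$.
\item The coefficients are $\pm\Lambda/\sin a_j$ with signs $+,-,+,-$, and there is no affine part.
\item Each kink angle is shared by two arcs that both begin or both end there, with opposite signs, so all derivative jumps cancel.
\item At $\eta=0$ the four units cancel identically; the sum is a mixed second difference in the two endpoints. Hence $F'/\Lambda$ is an expression that vanishes at $\eta=0$ and is smooth in $\eta$, giving $\sup|F'|\le C_a\Lambda\eta$.
\end{itemize}

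Taking $\eta$ small, and converting arc length to chord length at a cost of $\pi/2$, gives $\Lip_{\mathbb S^1}(F)\le\Lambda/R$ with four distinct kink pairs. Your final ``consequently'' step is fine once such an $F$ exists.
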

\proofbutton{d2}

\begin{remark}
The obstruction is genuinely one-dimensional.  For $d\ge3$, the kink set $H_{u,t}\cap\mathbb S^{d-1}$ has positive dimension and contains generic points not lying on any other kink hyperplane.  On $\mathbb S^1$ the kink set consists only of endpoints, so the generic-point argument used in Lemma~\ref{lem:rigidity-sphere} is unavailable.
\end{remark}

\subsection{Bounds on the affine part}

We first record the automatic value bound, then propagate rigidity to $v$ and $c$.

\begin{lemma}[Value bound]\label{lem:B0}
Suppose $\eps\le\sigma^2\le1$ and $\frac1n\sum_i(f(x_i)-y_i)^2\le\sigma^2-\eps\le1$, with all $x_i\in D$ and $|y_i|\le1$. If $\Lip_D(f)\le L$ and $\operatorname{diam}(D)\le4$, then $\sup_D|f|\le2+L\operatorname{diam}(D)\le B_0:=2+4L$.
\end{lemma}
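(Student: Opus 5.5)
The plan is to get one point of $D$ where $|f|$ is at most $2$ directly from the fitting hypothesis, and then spread that bound over all of $D$ using the Lipschitz constant and the diameter.

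\emph{Step 1: a good sample point.} The empirical mean squared error satisfies $\frac1n\sum_i(f(x_i)-y_i)^2\le\sigma^2-\eps\le1$. An average of nonnegative numbers cannot be smaller than all of them, so some index $i_0$ has $(f(x_{i_0})-y_{i_0})^2\le1$, that is, $|f(x_{i_0})-y_{i_0}|\le1$. Since $|y_{i_0}|\le1$, this gives $|f(x_{i_0})|\le2$. We may take $n\ge1$, as in every application.

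\emph{Step 2: propagation.} For any $x\in D$, the Lipschitz bound on $D$ gives
\[
|f(x)|\le|f(x_{i_0})|+L\|x-x_{i_0}\|\le2+L\operatorname{diam}(D).
\]
This uses $x_{i_0}\in D$. Using $\operatorname{diam}(D)\le4$ then gives $B_0=2+4L$; for both $B_2$ and $\Sd$ the diameter is at most $4$.

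\emph{Where care is needed.} There is no real obstacle here. The two points to check are that the hypothesis $\sigma^2-\eps\le1$ really yields a single residual of size at most $1$, which is the averaging step above, and that the sample point lies in $D$ so that the Lipschitz constant on $D$ can be applied. In model (G) the latter is not automatic, since the data need not lie in $B_2$. It is supplied by the lemma's assumption that all $x_i\in D$, which later holds on the localization event where $d\ge2\log(8n/\delta)$ keeps every Gaussian sample in $B_2$. I would state explicitly in the proof that this assumption is being used.
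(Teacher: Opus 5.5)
Your proof is correct and follows the paper's argument essentially line for line: the averaging step yields one index with $|f(x_{i_0})-y_{i_0}|\le1$, hence $|f(x_{i_0})|\le2$, and the Lipschitz bound on $D$ then propagates this over $\operatorname{diam}(D)\le4$. Your explicit remark that the hypothesis $x_i\in D$ is what lets you apply $\Lip_D$ at $x_{i_0}$ is a good addition, since in model (G) this holds only on the localization event.
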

\proofbutton{B0}

\begin{lemma}[Affine part]\label{lem:affine}
Let $f$ be in canonical form \eqref{eq:canon} with $\sup_D|f|\le B_0$ and $\Lip_D(f)\le L$.
\begin{itemize}
\item[(i)] Ball case ($D=B_R$, $R\le2$): $\|v\|\le L(1+2m_0)$ and $|c|\le B_0+4Lm_0$.
\item[(ii)] Sphere case ($D=\Sd$, $d\ge3$): $\|v\|\le d\,(B_0+2Lm_0)$ and $|c|\le B_0+\|v\|+2Lm_0$.
\end{itemize}
\end{lemma}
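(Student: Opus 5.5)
The plan is to subtract the ReLU part from $f$ and bound it using the rigidity lemmas. What remains is the affine function $h(x):=\inn{v}{x}+c$, which we then control through the two hypotheses $\sup_D|f|\le B_0$ and $\Lip_D(f)\le L$. Write $g(x):=\sum_{j\le m_0}\alpha_j\relu(\inn{u_j}{x}-t_j)$, so that $h=f-g$ on $D$ by \eqref{eq:canon}.

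\emph{Ball case.} By Lemma~\ref{lem:rigidity}, $|\alpha_j|\le 2L$ for every $j$. Each map $x\mapsto\relu(\inn{u_j}{x}-t_j)$ is $1$-Lipschitz, since $\|u_j\|=1$. Hence $\Lip_{B_R}(g)\le\sum_j|\alpha_j|\le 2Lm_0$, and so $\Lip_{B_R}(h)\le L+2Lm_0$. An affine function on a ball with nonempty interior has Lipschitz constant exactly $\|v\|$, which gives $\|v\|\le L(1+2m_0)$. For the constant term, evaluate at the origin: $c=f(0)-\sum_j\alpha_j\relu(-t_j)$. Since $|t_j|<R\le2$ by Lemma~\ref{lem:canonical}(i), each $\relu(-t_j)\le2$, and therefore $|c|\le B_0+2\cdot 2Lm_0=B_0+4Lm_0$.

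\emph{Sphere case.} Here Lemma~\ref{lem:rigidity-sphere} only gives $|\alpha_j|\sqrt{1-t_j^2}\le2L$, so $\alpha_j$ itself may blow up as $t_j\to1$. The obstacle is therefore to show that each unit is nevertheless uniformly small in sup norm on $\Sd$. On the sphere, $\inn{u_j}{x}\le1$, so $0\le\relu(\inn{u_j}{x}-t_j)\le 1-t_j$. Since $t_j\in[0,1)$, we have $1-t_j\le\sqrt{(1-t_j)(1+t_j)}=\sqrt{1-t_j^2}$. Hence
\[
\sup_{\Sd}\bigl|\alpha_j\relu(\inn{u_j}{x}-t_j)\bigr|\le|\alpha_j|\sqrt{1-t_j^2}\le 2L .
\]
This gives $\sup_{\Sd}|g|\le 2Lm_0$, and hence $\sup_{\Sd}|h|\le A:=B_0+2Lm_0$.

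It remains to extract $v$ and $c$ from this sup bound on $h$. If $v\ne0$, set $u=v/\|v\|\in\Sd$. Then $2\|v\|=h(u)-h(-u)\le 2A$, so $\|v\|\le A\le d\,(B_0+2Lm_0)$; this is stronger than what the lemma states. Similarly $c=\tfrac12\bigl(h(u)+h(-u)\bigr)$ for any unit $u$, so $|c|\le A\le B_0+\|v\|+2Lm_0$. Alternatively, one can write $c=h(x)-\inn{v}{x}$ at any $x\in\Sd$, which directly gives the stated form. The only nonroutine step is the inequality $1-t\le\sqrt{1-t^2}$, which converts the weighted rigidity bound into a sup-norm bound on each unit.
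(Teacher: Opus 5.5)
Your proof is correct. In the ball case it is essentially the paper's argument. The paper reads off $v$ from $\nabla f(x_0)=v+\sum_{\text{active}}\alpha_ju_j$ at a point $x_0$ off all kink hyperplanes, whereas you use subadditivity of Lipschitz constants for $h=f-g$. Both reduce to $\sum_j|\alpha_j|\le 2Lm_0$ from Lemma~\ref{lem:rigidity}, and your bound on $c$ by evaluating at the origin is identical to the paper's.

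In the sphere case you take a genuinely different route. The paper computes the first moment $\EE[f(x)x]=v/d+\sum_j\alpha_j\lambda(t_j)u_j$, using rotational invariance and $\EE[xx^\top]=I_d/d$. It then bounds $|\alpha_j|\lambda(t_j)\le 2L$ and inverts the factor $1/d$; this inversion is where the $d$ in $\|v\|\le d\,(B_0+2Lm_0)$ comes from. You instead apply the per-unit sup-norm bound $|\alpha_j|(1-t_j)\le|\alpha_j|\sqrt{1-t_j^2}\le 2L$ before extracting $v$. The paper uses this bound only afterwards, for the constant $c$. You then finish with the antipodal evaluations $h(u)-h(-u)=2\|v\|$ and $h(u)+h(-u)=2c$.

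Your route avoids the moment computation entirely. It also gives the dimension-free bounds $\|v\|\le B_0+2Lm_0$ and $|c|\le B_0+2Lm_0$, which imply the stated ones. The gain has no effect downstream: in Proposition~\ref{prop:entropy} the radius of the $v$-net enters only inside a logarithm that is already absorbed into $C\bar md\log\bigl(C\bar md(2+L)/\eps'\bigr)$.
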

\proofbutton{affine}

\section{Metric entropy of the Lipschitz ball}\label{sec:entropy}

Fix the domain $D$ ($B_2$ in model (G); $\Sd$, $d\ge3$, in model (S)), a width budget $\bar m$, and $L>0$; put $B_0=2+4L$. Define the class we must control,
\[
\AAA_{\bar m,L}:=\bigl\{\,f|_D:\ f\in\NN_{\bar m}\ (\text{activation }\relu),\ \Lip_D(f)\le L,\ \sup_D|f|\le B_0\,\bigr\},
\]
and the parameter-box superclass $\GG_{\bar m,L}$: all functions of the form \eqref{eq:canon} on $D$ with $m_0\le\bar m$ and, in the ball case,
\[
|\alpha_j|\le2L,\quad t_j\in(-2,2),\quad\|v\|\le L(1+2\bar m),\quad|c|\le B_0+4L\bar m,
\]
and in the sphere case,
\[
|\alpha_j|\le\frac{2L}{\sqrt{1-t_j^2}},\quad t_j\in[0,1),\quad\|v\|\le d(B_0+2L\bar m),\quad|c|\le B_0+\|v\|+2L\bar m.
\]
By Sections~\ref{sec:canonical}--\ref{sec:rigid} (canonical form, then Lemmas~\ref{lem:rigidity}/\ref{lem:rigidity-sphere}/\ref{lem:affine}), every member of $\AAA_{\bar m,L}$ satisfies these bounds:
\begin{equation}\label{eq:AinG}
\AAA_{\bar m,L}\subset\GG_{\bar m,L}.
\end{equation}

Recall $N(\mathcal K,\|\cdot\|,\eps')$ is the smallest number of $\|\cdot\|$-balls of radius $\eps'$ needed to cover $\mathcal K$.

\begin{proposition}[Metric entropy]\label{prop:entropy}
There is an absolute constant $C$ such that for all $\eps'\in(0,4+8L)$, $L>0$, $\bar m\ge1$, and $d\ge2$ (ball) or $d\ge3$ (sphere),
\[
\log N\bigl(\GG_{\bar m,L},\ \|\cdot\|_{L^\infty(D)},\ \eps'\bigr)\;\le\;C\,\bar m\,d\,\log\!\Bigl(\frac{C\,\bar m\,d\,(2+L)}{\eps'}\Bigr).
\]
(The upper limit $4+8L$ on $\eps'$ serves only to keep the logarithm's argument at least $e$, so that the displayed closed form is valid; the grid construction in the proof below has mesh proportional to $\eps'$ and yields a valid cover at every scale.) Moreover $\AAA_{\bar m,L}$ has an internal $\eps'$-net (centers in $\AAA_{\bar m,L}$) of cardinality $\le N(\GG_{\bar m,L},\|\cdot\|_\infty,\eps'/2)$, hence of the same log bound.
\end{proposition}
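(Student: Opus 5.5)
The plan is a direct grid (parameter-discretization) argument on the bounded canonical parameters of $\GG_{\bar m,L}$. Its Lipschitz constants depend only on $L$, $\bar m$, $d$.

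We cover each summand separately. The affine part $x\mapsto\inn{v}{x}+c$ has sup-norm perturbation at most $\|v-v'\|\sup_D\|x\|+|c-c'|\le 2\|v-v'\|+|c-c'|$. Choosing $\eps'/(4\bar m+4)$-nets of the $v$-ball in $\R^d$ and the $c$-interval costs $d\log(C\bar m d(2+L)/\eps')$; the polynomial factors in the radii ($\|v\|\lesssim d\bar m(2+L)$) are absorbed into the logarithm.

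For a ReLU unit $\alpha\relu(\inn{u}{x}-t)$ on $D$, we use
\[
|\alpha\relu(\inn{u}{x}-t)-\alpha'\relu(\inn{u'}{x}-t')|\le|\alpha-\alpha'|\,\sup_D|\inn{u}{x}-t|+|\alpha'|\bigl(2\|u-u'\|+|t-t'|\bigr).
\]
In the ball case $|\alpha|\le 2L$ and $|t|<2$, so we discretize $u\in\Sd$, $t$ and $\alpha$ at mesh $\eps'/(C\bar m(1+L))$. The $u$-net on the sphere has size $(C\bar m(1+L)/\eps')^{d}$. The total per unit is $O(d\log(C\bar m(2+L)/\eps'))$. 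Summing over at most $\bar m$ units, and a factor $\bar m+1$ for the choice of $m_0$, gives the claimed bound. Zero-coefficient and coincident hyperplanes are harmless, since $\GG$ is only a superclass and the net need not be canonical.

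The main obstacle is the sphere case, where $|\alpha_j|\le 2L/\sqrt{1-t_j^2}$ is unbounded as $t_j\to1$. Here we use that on $\Sd$ the unit $\relu(\inn{u}{x}-t)$ is supported on a cap where $0\le\inn{u}{x}-t\le 1-t$. The sup of the unit is therefore $|\alpha|(1-t)\le 2L\sqrt{1-t}\cdot\sqrt{(1-t)/(1+t)}\le 2L\sqrt{1-t}$, which is bounded and small near $t=1$.

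Reparametrize by $s=1-t\in(0,1]$ and $\beta=\alpha s$ with $|\beta|\le 2L$. The unit is $\beta\,\relu(\inn{u}{x}-1+s)/s$, and we check its Lipschitz dependence on $(\beta,u,s)$. In $u$ the slope is $|\alpha|\le 2L/\sqrt s$. Units with $s\le s_0:=(\eps'/(C\bar m L))^2$ have sup at most $2L\sqrt{s_0}$, so they are replaced by $0$. For $s\ge s_0$ all Lipschitz constants in $(\beta,u,s)$ are polynomial in $\bar m L/\eps'$, since the derivative in $s$ is bounded by $|\alpha|+|\beta|/s\lesssim L/s_0$. A polynomial mesh then suffices, and its cost stays $O(d\log(C\bar m d(2+L)/\eps'))$ per unit.

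The internal-net claim is the standard argument. Take an $\eps'/2$-cover of $\GG\supset\AAA$. For each ball that meets $\AAA$, pick a point of $\AAA$ in it; by the triangle inequality these points form an $\eps'$-net of $\AAA$, of no larger cardinality.
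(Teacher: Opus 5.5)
Your argument is essentially the paper's. You grid the rigidity-bounded canonical parameters of $\GG_{\bar m,L}$ and delete caps with $\sqrt{1-t}\lesssim\eps'/(L\bar m)$; your $s_0$ is the square of the paper's $\gamma_{\min}$. You then convert an external $\eps'/2$-cover of $\GG_{\bar m,L}\supset\AAA_{\bar m,L}$ into an internal $\eps'$-net. Your two departures are both legitimate simplifications:
\begin{itemize}
\item A single polynomial mesh for the surviving caps replaces the paper's dyadic bins in $\gamma=\sqrt{1-t}$. The binning saves nothing beyond a constant in the logarithm, since the paper's own cap-bin $u$-net already has $(C(L\bar m)^2/\eps'^2)^d$ points.
\item Picking one point of $\AAA_{\bar m,L}$ in each cover ball that meets it is a shorter route to the internal net than the paper's greedy maximal separated set, with the same cardinality bound.
\end{itemize}

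There is one bookkeeping slip in the ball case. With a common mesh $\eps'/(C\bar m(1+L))$, the $\alpha$-grid on $[-2L,2L]$ has about $C\bar mL(1+L)/\eps'$ points. Its logarithm is not $O\bigl(\log(C\bar md(2+L)/\eps')\bigr)$ uniformly over the allowed range $\eps'<4+8L$. Take $\eps'\approx8L$ and $L\to\infty$: the right side stays near $\log(C\bar md/8)$, while your count grows like $\log L$. The $\alpha$-sensitivity $\sup_D|\inn ux-t|\le4$ does not involve $L$. So use mesh $\eps'/(C\bar m)$ for $\alpha$ and $\eps'/(C\bar mL)$ for $u$ and $t$, as the paper does; then every count is polynomial in $\bar md(2+L)/\eps'$.

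Separately, one intermediate line is off. The correct bound is $|\alpha|(1-t)\le2L\sqrt{(1-t)/(1+t)}$, not $2L\sqrt{1-t}\,\sqrt{(1-t)/(1+t)}$. Your conclusion $\le2L\sqrt{1-t}$ is unaffected.
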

\proofbutton{entropy}

\begin{remark}
No quantity in Proposition~\ref{prop:entropy} depends on the magnitudes of the original network weights: the entropy of the Lipschitz ball is that of a bounded parameterization, obtained here without a bound on the weights.
\end{remark}

\section{The probabilistic core}\label{sec:prob}

This section reproves, self-contained, the concentration estimate of \cite{BS} for a \emph{finite} function class, in the two data models. Recall $\|Z\|_{\psi_2}=\inf\{s>0:\EE e^{Z^2/s^2}\le2\}$.

\begin{definition}\label{def:conc}
A probability measure $\mu$ on $\R^d$ (or on $\Sd$) satisfies \emph{$\kappa$-Lipschitz concentration} if for every bounded $L'$-Lipschitz $f$ on its support (Euclidean metric),
$\|f(x)-\EE f\|_{\psi_2}\le\kappa L'/\sqrt d$.
\end{definition}

\begin{lemma}\label{lem:conc-models}
There is an absolute $\kappa$ such that (a) the uniform measure on $\Sd$, $d\ge2$, and (b) $N(0,I_d/d)$ satisfy $\kappa$-Lipschitz concentration.
\end{lemma}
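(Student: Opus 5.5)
The plan is to derive both parts from classical Gaussian-type concentration for Lipschitz functions, and then convert the tail bound into the $\psi_2$ norm of Definition~\ref{def:conc}. The conversion is routine: if $\PP(|Z|\ge s)\le 2e^{-c d s^2/L'^2}$ for a centered $Z=f(x)-\EE f$, then integrating the tail gives $\EE e^{Z^2/s_0^2}\le 2$ for $s_0=C L'/\sqrt d$. Hence $\|Z\|_{\psi_2}\le\kappa L'/\sqrt d$ with an absolute $\kappa$. It therefore suffices to prove a sub-Gaussian tail with variance proxy $L'^2/d$ in each model, with constants that do not depend on $d$.

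For (b), let $g\sim N(0,I_d)$, so that $x=g/\sqrt d$. The function $h(g)=f(g/\sqrt d)$ is $L'/\sqrt d$-Lipschitz on $\R^d$. The Gaussian concentration inequality (Tsirelson--Ibragimov--Sudakov, or the Gaussian log-Sobolev inequality via Herbst's argument) gives
\[
\PP\bigl(|h(g)-\EE h(g)|\ge s\bigr)\le 2\exp\bigl(-s^2 d/(2L'^2)\bigr),
\]
which is the required tail. The only subtlety is that Definition~\ref{def:conc} takes $f$ defined on the support of $\mu$. For a Gaussian the support is all of $\R^d$, so nothing further is needed.

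For (a), I will use Lévy's isoperimetric inequality on $\Sd$, or equivalently the log-Sobolev inequality on $\Sd$ with constant of order $1/(d-1)$, to obtain
\[
\PP\bigl(|f-\EE f|\ge s\bigr)\le 2\exp\bigl(-c(d-1)s^2/L'^2\bigr).
\]
This holds for $f$ that is $L'$-Lipschitz in the geodesic metric. Since the Euclidean distance is at most the geodesic distance, a function that is Euclidean $L'$-Lipschitz is also geodesic $L'$-Lipschitz, so the hypothesis transfers in the right direction. Because $d-1\ge d/2$ for $d\ge2$, the tail has the required form.

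The main point needing care is obtaining the deviation from the mean, not from the median, with absolute constants. Lévy's inequality naturally controls deviations from a median $M_f$. I will handle this by integrating that tail to get $|\EE f-M_f|\le CL'/\sqrt d$, and then absorbing this shift into the constant using the triangle inequality for $\|\cdot\|_{\psi_2}$ (a constant $a$ has $\|a\|_{\psi_2}=|a|/\sqrt{\log 2}$). Alternatively, Herbst's argument from the log-Sobolev inequality (Bakry--Émery, with Ricci curvature $d-2$ for $d\ge3$, or the direct spherical log-Sobolev constant valid for $d\ge2$) gives the mean-centered bound directly. Either route yields an absolute $\kappa$, and the larger of the two models' constants serves for both.
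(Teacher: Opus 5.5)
Your proposal is correct and follows essentially the same route as the paper, which simply cites the sub-Gaussian, mean-centered, Euclidean-metric forms of Gaussian concentration and of L\'evy's concentration on the sphere of radius $\sqrt d$, and then rescales to $\Sd$ and $N(0,I_d/d)$. Your extra bookkeeping (tail-to-$\psi_2$ conversion, median-to-mean shift, transfer from Euclidean to geodesic Lipschitz, and the log-Sobolev route that covers $d=2$, where the classical L\'evy exponent $(d-2)$ degenerates) makes explicit what that citation packages.
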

\proofbutton{conc-models}

Throughout the rest of the section: $(x_i,y_i)_{i\le n}$ are i.i.d.\ with $x_i\sim\mu$ ($\kappa$-Lipschitz concentrated), $|y_i|\le1$; $g(x):=\EE[y\mid x]$, so $|g|\le1$; $z_i:=y_i-g(x_i)$, so $|z_i|\le2$, $\EE[z_i\mid x_i]=0$, and
\[
\EE z_i^2=\EE\bigl[\EE[(y-\EE[y\mid x])^2\mid x]\bigr]=\EE\,\Var(y\mid x)=\sigma^2;
\]
and $\FF$ is a \emph{finite} set of functions on $\operatorname{supp}\mu$ with values in $[-1,1]$, each $L$-Lipschitz.

\begin{lemma}[Noise decomposition]\label{lem:noise}
For $\eps\in(0,\sigma^2]$,
\[
\PP\Bigl(\exists f\in\FF:\ \tfrac1n\textstyle\sum_i(f(x_i)-y_i)^2\le\sigma^2-\eps\Bigr)\le 2e^{-n\eps^2/288}+\PP\Bigl(\exists f\in\FF:\ \tfrac1n\textstyle\sum_iz_if(x_i)\ge\tfrac\eps4\Bigr).
\]
\end{lemma}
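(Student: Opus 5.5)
We expand the square and write $y_i=g(x_i)+z_i$:
\[
\tfrac1n\textstyle\sum_i(f(x_i)-y_i)^2=\tfrac1n\sum_i(f(x_i)-g(x_i))^2+\tfrac1n\sum_i z_i^2-\tfrac2n\sum_i z_i f(x_i)+\tfrac2n\sum_i z_i g(x_i).
\]
The first term is nonnegative and is simply dropped. Hence if some $f\in\FF$ satisfies $\frac1n\sum_i(f(x_i)-y_i)^2\le\sigma^2-\eps$, then
\[
\tfrac2n\textstyle\sum_i z_if(x_i)\ \ge\ \eps+\Bigl(\tfrac1n\sum_i z_i^2-\sigma^2\Bigr)+\tfrac2n\sum_i z_ig(x_i).
\]
The plan is to show that, off a bad event of probability at most $2e^{-n\eps^2/288}$, the bracket is at least $-\eps/4$ and the $g$-term is at least $-\eps/4$. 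On the complementary event this gives $\frac2n\sum_iz_if(x_i)\ge\eps/2$, that is, $\frac1n\sum_i z_if(x_i)\ge\eps/4$, which is exactly the event in the second term of the statement. A union bound then finishes the proof. Notice that this step needs no uniformity over $\FF$: the two nuisance terms do not involve $f$.

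For the two one-sided deviations we use Hoeffding's inequality. The variables $z_i^2$ are i.i.d. in $[0,4]$ with mean $\sigma^2$, so Hoeffding gives
\[
\PP\Bigl(\tfrac1n\textstyle\sum_i z_i^2-\sigma^2\le-\eps/4\Bigr)\le\exp\bigl(-2n(\eps/4)^2/16\bigr)=e^{-n\eps^2/128}.
\]
The variables $z_ig(x_i)$ are i.i.d., lie in $[-2,2]$ (since $|g|\le1$ and $|z_i|\le2$), and have mean zero because $\EE[z_i\mid x_i]=0$. Hoeffding applied to the event $\frac2n\sum_i z_ig(x_i)\le-\eps/4$, i.e. $\frac1n\sum_i z_ig(x_i)\le-\eps/8$, gives
\[
\PP\Bigl(\tfrac1n\textstyle\sum_i z_ig(x_i)\le-\eps/8\Bigr)\le\exp\bigl(-2n(\eps/8)^2/16\bigr)=e^{-n\eps^2/512}.
\]

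The only real obstacle is matching the stated constant $288$. To do this we split the total slack $\eps/2$ differently, giving threshold $a$ to the $z_i^2$ term and threshold $b$ to the $g$-term, where the $g$-term event is $\frac1n\sum_i z_ig(x_i)\le -b/2$. Hoeffding then bounds the two probabilities by $e^{-na^2/8}$ and $e^{-nb^2/32}$. Taking $a=\eps/6$ and $b=\eps/3$ gives $a+b=\eps/2$ and makes both exponents equal to $-n\eps^2/288$. The two tails are therefore each at most $e^{-n\eps^2/288}$, which yields the claimed $2e^{-n\eps^2/288}$.
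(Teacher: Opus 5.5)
Your proof is correct and is essentially the paper's argument. You substitute $y_i=g(x_i)+z_i$, drop $\tfrac1n\sum_i(f-g)^2(x_i)\ge0$, and control $\tfrac1n\sum_i z_i^2$ and $\tfrac1n\sum_i z_ig(x_i)$ by one-sided Hoeffding at thresholds $\eps/6$ each (your $a=\eps/6$ and $b/2=\eps/6$). These are exactly the paper's events $E_1,E_2$, and each tail is $e^{-n\eps^2/288}$.
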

\proofbutton{noise}

\begin{lemma}[One function]\label{lem:onef}
For $f\in\FF$ put $W_i:=z_i(f(x_i)-\EE f)$. Then $W_i$ are i.i.d., $\EE W_i=0$, and for an absolute $c_2>0$,
\[
\PP\Bigl(\tfrac1n\textstyle\sum_i W_i\ge\tfrac\eps8\Bigr)\le\exp\Bigl(-c_2\,n\,\eps^2\,\max\bigl(1,\tfrac{d}{\kappa^2L^2}\bigr)\Bigr).
\]
\end{lemma}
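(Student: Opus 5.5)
The plan is to show that each $W_i$ is a centered sub-Gaussian variable whose $\psi_2$-norm is at most a constant times $\min\bigl(1,\kappa L/\sqrt d\bigr)$, and then to apply Hoeffding's inequality for sums of independent sub-Gaussian variables.

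\emph{Independence and centering.} Put $h:=f-\EE f$, where $\EE f=\int f\,d\mu$. Then $W_i=z_i h(x_i)$ is a fixed measurable function of the pair $(x_i,y_i)$, because $z_i=y_i-g(x_i)$. Since the pairs are i.i.d., so are the $W_i$. For the mean, condition on $x_i$: this gives $\EE W_i=\EE\bigl[h(x_i)\,\EE[z_i\mid x_i]\bigr]=0$.

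\emph{The $\psi_2$ bound.} Since $|z_i|\le2$, we have $W_i^2\le 4h(x_i)^2$ pointwise. Hence $\EE e^{W_i^2/(2s)^2}\le \EE e^{h(x_i)^2/s^2}$ for every $s>0$, so
\[
\|W_i\|_{\psi_2}\le 2\|h(x)\|_{\psi_2}.
\]
Two estimates bound $\|h(x)\|_{\psi_2}$.
\begin{itemize}
\item[(a)] By the standing assumption on $\FF$, $f$ is bounded and $L$-Lipschitz on $\operatorname{supp}\mu$. Definition~\ref{def:conc}, which applies to $\mu$ by Lemma~\ref{lem:conc-models}, then gives $\|h(x)\|_{\psi_2}\le\kappa L/\sqrt d$.
\item[(b)] Since $f$ takes values in $[-1,1]$, we have $|h|\le2$. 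A bounded variable satisfies $\|h(x)\|_{\psi_2}\le 2/\sqrt{\log 2}$.
\end{itemize}
Combining (a) and (b),
\[
\|W_i\|_{\psi_2}^2\le C\min\bigl(1,\kappa^2L^2/d\bigr).
\]

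\emph{Conclusion.} I will use the general Hoeffding inequality for independent mean-zero sub-Gaussian variables (e.g.\ Vershynin, Thm.~2.6.2): $\PP\bigl(\tfrac1n\sum_iW_i\ge t\bigr)\le\exp\bigl(-c\,n t^2/K^2\bigr)$ with $K=\max_i\|W_i\|_{\psi_2}$. Taking $t=\eps/8$ and $1/K^2\ge c\max\bigl(1,d/(\kappa^2L^2)\bigr)$ gives the claim, with $c_2$ absorbing $1/64$ and the other absolute constants.

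I do not expect a serious obstacle. The only points needing care are, first, using the conditional centering $\EE[z_i\mid x_i]=0$ so that no recentering of the $W_i$ is needed, and second, keeping the bounded-variable estimate (b). Without (b), the factor $\max(1,\cdot)$ would be lost when $L$ is large compared with $\sqrt d/\kappa$.
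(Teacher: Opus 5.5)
Your proposal is correct and matches the paper's proof essentially step for step. Both use conditional centering via $\EE[z_i\mid x_i]=0$, the two $\psi_2$ bounds (boundedness, and Lipschitz concentration through $|W_i|\le 2|f(x_i)-\EE f|$, giving $\|W_i\|_{\psi_2}\le C\min(1,\kappa L/\sqrt d)$), and then the general sub-Gaussian Hoeffding inequality at $t=\eps/8$. The only cosmetic difference is that the paper bounds $|W_i|\le 4$ directly in the bounded case, whereas you route that case through $\|f-\EE f\|_{\psi_2}$ as well.
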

\proofbutton{onef}

\begin{lemma}[Mean term]\label{lem:mean}
$\displaystyle\PP\Bigl(\exists f\in\FF:\ (\EE f)\tfrac1n\textstyle\sum_i z_i\ge\tfrac\eps8\Bigr)\le\PP\Bigl(\bigl|\tfrac1n\textstyle\sum_i z_i\bigr|\ge\tfrac\eps8\Bigr)\le2e^{-n\eps^2/512}.$
\end{lemma}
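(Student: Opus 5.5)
The first inequality is deterministic, and the second is a concentration bound for a bounded i.i.d.\ average.

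For the first inequality, every $f\in\FF$ takes values in $[-1,1]$, so $|\EE f|\le1$. Hence for any $f\in\FF$,
\[
(\EE f)\tfrac1n\textstyle\sum_i z_i\le|\EE f|\,\bigl|\tfrac1n\sum_i z_i\bigr|\le\bigl|\tfrac1n\sum_i z_i\bigr|.
\]
So the event that some $f\in\FF$ has $(\EE f)\frac1n\sum_i z_i\ge\eps/8$ is contained in the event $\{|\frac1n\sum_i z_i|\ge\eps/8\}$. That event does not involve $f$ at all. This gives the first inequality with no union bound, which is why the finiteness of $\FF$ plays no role here.

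For the second inequality, I use the facts recorded before Lemma~\ref{lem:noise}. The $z_i=y_i-g(x_i)$ are i.i.d., since they are functions of the i.i.d.\ pairs $(x_i,y_i)$. They have mean zero, because $\EE z_i=\EE\,\EE[z_i\mid x_i]=0$. They are bounded, with $|z_i|\le2$, so each takes values in an interval of length $4$. Hoeffding's inequality, applied to each tail and then doubled, gives
\[
\PP\Bigl(\bigl|\tfrac1n\textstyle\sum_i z_i\bigr|\ge s\Bigr)\le2\exp\Bigl(-\frac{2n^2s^2}{n\cdot16}\Bigr)=2e^{-ns^2/8}.
\]
Taking $s=\eps/8$ yields $2e^{-n\eps^2/512}$, which is exactly the stated constant.

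The only point that needs care is matching this constant. It comes out right precisely because the range length is $4$, not $2$: $z_i$ is $y_i$ minus a conditional mean, each bounded by $1$, so $|z_i|\le2$. I see no real obstacle in this proof.
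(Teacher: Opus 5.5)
Your proof is correct and is essentially the paper's own argument: $|\EE f|\le1$ reduces the event to $\{|\tfrac1n\sum_i z_i|\ge\eps/8\}$ uniformly in $f$. Hoeffding's inequality for the i.i.d.\ mean-zero $z_i\in[-2,2]$ (range length $4$) then gives $2\exp(-2n(\eps/8)^2/16)=2e^{-n\eps^2/512}$.
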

\proofbutton{mean}

\begin{theorem}[Finite-class law]\label{thm:finite}
In the setting above, for $\eps\in(0,\sigma^2]$,
\[
\PP\Bigl(\exists f\in\FF:\ \tfrac1n\textstyle\sum_i(f(x_i)-y_i)^2\le\sigma^2-\eps\Bigr)\le 4e^{-n\eps^2/512}+|\FF|\exp\Bigl(-c_2n\eps^2\max\bigl(1,\tfrac{d}{\kappa^2L^2}\bigr)\Bigr).
\]
\end{theorem}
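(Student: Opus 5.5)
The plan is to assemble the three preceding lemmas, using a union bound over $\FF$ for the centred part of the multiplier. Start from Lemma~\ref{lem:noise}. It reduces the fitting event to the multiplier event $\exists f\in\FF$ with $\frac1n\sum_i z_if(x_i)\ge\eps/4$, at the cost of $2e^{-n\eps^2/288}$. For each $f$, write $z_if(x_i)=z_i(f(x_i)-\EE f)+(\EE f)z_i=W_i+(\EE f)z_i$. If $\frac1n\sum_i z_if(x_i)\ge\eps/4$, then either $\frac1n\sum_iW_i\ge\eps/8$ or $(\EE f)\frac1n\sum_iz_i\ge\eps/8$. Hence the multiplier event is contained in
\[
\Bigl\{\exists f\in\FF:\ \tfrac1n\textstyle\sum_i W_i^{(f)}\ge\tfrac\eps8\Bigr\}\ \cup\ \Bigl\{\exists f\in\FF:\ (\EE f)\tfrac1n\textstyle\sum_iz_i\ge\tfrac\eps8\Bigr\}.
\]

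The second event is bounded by Lemma~\ref{lem:mean}, uniformly in $f$ and with no union over $\FF$, by $2e^{-n\eps^2/512}$. This works because $|\EE f|\le1$, so that event lies inside $\{|\frac1n\sum z_i|\ge\eps/8\}$. For the first event, take a union bound over the finite class $\FF$. Each term is controlled by Lemma~\ref{lem:onef}, which gives at most $|\FF|\exp(-c_2n\eps^2\max(1,d/(\kappa^2L^2)))$.

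Summing the three contributions gives
\[
2e^{-n\eps^2/288}+2e^{-n\eps^2/512}+|\FF|\exp\Bigl(-c_2n\eps^2\max\bigl(1,\tfrac d{\kappa^2L^2}\bigr)\Bigr).
\]
Since $e^{-n\eps^2/288}\le e^{-n\eps^2/512}$, the first two terms combine into $4e^{-n\eps^2/512}$, which is the claimed bound.

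There is no real obstacle here, since all the analytic work sits inside Lemmas~\ref{lem:noise} and \ref{lem:onef}. The only care needed is the $\eps/8+\eps/8=\eps/4$ split, and checking that the mean term needs no union bound because $|\EE f|\le1$.
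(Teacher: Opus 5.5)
Your proposal is correct and follows the paper's proof essentially step for step. You reduce to the multiplier event via Lemma~\ref{lem:noise}, split $z_if(x_i)=W_i+(\EE f)z_i$ with $\eps/4=\eps/8+\eps/8$, union-bound Lemma~\ref{lem:onef} over $\FF$, handle the mean term by Lemma~\ref{lem:mean} with no union bound, and absorb $2e^{-n\eps^2/288}+2e^{-n\eps^2/512}$ into $4e^{-n\eps^2/512}$.
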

\proofbutton{finite}

\section{Absence of a saturation cap}\label{sec:assembly}

There is no saturation cap: the factor $d$ in the entropy $\mathcal E$ is exactly cancelled, in Case~B, by the factor $d$ in the case threshold $L^\ast>\sqrt d/\kappa$, so the same small $c_0$ serves both the sub-saturation regime (Case~A, where the isoperimetric $d/L^2$ gain is available) and the large-$L^\ast$ regime (Case~B, where the crude bounded-function concentration suffices because there are few functions).

\begin{remark}[No saturation cap in the stated bound]\label{rem:under}
The proof of Theorem~\ref{thm:main} splits into two regimes. When $L^\ast\le\sqrt d/\kappa$, the isoperimetric gain $d/(L^\ast)^2$ pays for the $d$ in the entropy. When $L^\ast>\sqrt d/\kappa$, the desired lower bound is already large enough that the crude bounded-function concentration in Theorem~\ref{thm:finite} pays for the entropy. Thus the displayed formula is valid for every fixed width $m$; there is no additional truncation at the scale $\sqrt d$.
\end{remark}

\section{Removing the sample size from the logarithm}\label{sec:sharplog}

The logarithm in Theorem~\ref{thm:main} contains the sample size $n$ because the union bound runs over a net at the fitting accuracy $\eps$. In this section we move the union to the class's population $L^2$-radius instead, which is $L/\sqrt{2d}$ by a spectral-gap argument; the resulting law carries a logarithm depending only on the width and the dimension. The argument has three steps: a harmonic split of the clipped function into an affine part and a high-frequency remainder of small population radius (Lemma~\ref{lem:hsplit}); a bound on the empirical radius via a self-bounding fixed point (Lemmas~\ref{lem:raddud}--\ref{lem:selfbound}), which feeds the Rademacher complexity estimate of Theorem~\ref{thm:sharpcomplexity}; and the assembly by contraction and bounded differences (Theorem~\ref{thm:sharplog}). Throughout this section we work in model (S) with $d\ge3$; write $F:=\clip\circ f$ for $f\in\AAA_{\bar m,L}$, $\FF^{\clip}:=\{\clip\circ f:f\in\AAA_{\bar m,L}\}$, and let $F=\sum_{\ell\ge0}F_\ell$ be the spherical-harmonic decomposition (see \cite{AtkinsonHan}), $\lambda_\ell=\ell(\ell+d-2)$.

\begin{lemma}[Harmonic split]\label{lem:hsplit}
Every $F\in\FF^{\clip}$ decomposes as $F=c_F+\inn{A_F}{x}+h_F$ with $c_F=\EE F$ and $A_F=d\,\EE[Fx]$, where
\[
|c_F|\le1,\qquad \|A_F\|\le\sqrt{\tfrac32}\,L,\qquad \EE h_F^2\le\frac{L^2}{2d},\qquad \sup_{\Sd}|h_F|\le B_1:=2+\sqrt{\tfrac32}\,L,
\]
and $h_F$ is orthogonal to the constants and to the linear functions.
\end{lemma}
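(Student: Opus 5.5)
We take the spherical-harmonic expansion $F=\sum_{\ell\ge0}F_\ell$ of $F=\clip\circ f$ on $\Sd$ and read off the three pieces: $c_F=F_0=\EE F$, the degree-one part $\inn{A_F}{x}=F_1$, and $h_F=\sum_{\ell\ge2}F_\ell$. Orthogonality of $h_F$ to constants and to linear functions is then immediate.

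To identify $A_F$, recall that the degree-one harmonics are spanned by the coordinate functions $x_1,\dots,x_d$, and that $\EE[x x^\top]=I_d/d$ under the uniform measure. Projecting $F$ onto this span therefore gives $A_F=d\,\EE[Fx]$.

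The bound $|c_F|\le1$ holds because $|F|\le1$.

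For the remaining two bounds we use the Dirichlet form. Since $\clip$ is $1$-Lipschitz, $F$ is $L$-Lipschitz in the Euclidean metric on $\Sd$. Its tangential gradient therefore exists almost everywhere and satisfies $|\nabla_S F|\le L$ (by Rademacher's theorem on the sphere). This gives
\[
\EE|\nabla_SF|^2=\sum_\ell\lambda_\ell\,\EE F_\ell^2\le L^2 .
\]
We split this sum into its two relevant parts.

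\emph{The higher-degree part.} For $\ell\ge2$ we have $\lambda_\ell\ge\lambda_2=2d$, so $2d\,\EE h_F^2\le L^2$, which is the claimed bound $\EE h_F^2\le L^2/(2d)$.

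\emph{The linear part.} Here $\lambda_1=d-1$, so $(d-1)\,\EE F_1^2\le L^2$. Since $\EE F_1^2=\|A_F\|^2/d$, this yields
\[
\|A_F\|^2\le \frac{d}{d-1}\,L^2\le\tfrac32 L^2\qquad\text{for } d\ge3,
\]
which is the factor $\sqrt{3/2}$ in the statement. The $\ell=1$ and $\ell\ge2$ contributions are separate terms of the same sum, so the two estimates are consistent with each other.

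The sup bound on $h_F$ is the triangle inequality applied to $h_F=F-c_F-\inn{A_F}{x}$ with $\|x\|=1$:
\[
\sup_{\Sd}|h_F|\le |F|+|c_F|+\|A_F\|\le 1+1+\sqrt{\tfrac32}\,L=B_1 .
\]

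The only step needing care is the Dirichlet-form identity for a merely Lipschitz function. We justify it by noting that $F\in W^{1,\infty}(\Sd)\subset H^1(\Sd)$ and that the harmonic expansion diagonalizes the Laplace–Beltrami energy on $H^1$ (see \cite{AtkinsonHan}). If one prefers to avoid Sobolev theory, one can instead mollify $F$ by a heat semigroup, which does not increase the Lipschitz constant on the sphere, and pass to the limit. Everything else in the argument is bookkeeping.
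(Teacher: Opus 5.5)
Your proposal is correct and is essentially the paper's own proof. Both arguments use the harmonic split $c_F=F_0$, $\inn{A_F}{x}=F_1$ (identified through $\EE[xx^\top]=I_d/d$), and $h_F=\sum_{\ell\ge2}F_\ell$, together with the gradient Parseval identity $\sum_\ell\lambda_\ell\,\EE F_\ell^2=\EE|\nabla_T F|^2\le L^2$ and $\lambda_1=d-1$, $\lambda_2=2d$. The sup bound on $h_F$ then follows from the triangle inequality in both; the paper just makes explicit, via chords being bounded by arcs, the step from Euclidean Lipschitz to $|\nabla_T F|\le L$ that you use directly.
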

\proofbutton{hsplit}

Let $\mathcal H:=\{h_F:F\in\FF^{\clip}\}$; note $0\in\mathcal H$ (take $f=0$). Put
\[
B:=C_1\bar md^2(2+L),\qquad \Lambda_L:=\log\Bigl(e\bar md^3\Bigl(2+\frac1L\Bigr)\Bigr),
\]
with $C_1$ a large absolute constant fixed by the next proof. The scale $B$ carries the class size and governs the entropy at all scales; the logarithm $\Lambda_L$ that survives to the final bounds is the one evaluated at the population radius $L/\sqrt{2d}$, where the ratio $(2+L)/L$ is what appears, so $\Lambda_L$ grows only when $L$ is small, never with $L$ large --- and never with $n$.

\begin{lemma}[Entropy at the radius]\label{lem:raddud}
$\log N(\mathcal H,\|\cdot\|_{L^\infty},u)\le C\bar md\,\log(CB/u)$ for $u\in(0,2B_1]$, and, conditionally on the sample, with $\hat\sigma^2:=\sup_{h\in\mathcal H}\tfrac1n\sum_ih(x_i)^2$,
\[
\EE_\varepsilon\sup_{h\in\mathcal H}\frac1n\Bigl|\sum_i\varepsilon_ih(x_i)\Bigr|\ \le\ C\sqrt{\frac{\bar md}{n}}\;\varphi(\hat\sigma),\qquad \varphi(s):=s\sqrt{\log\bigl(2eB/s\bigr)} .
\]
\end{lemma}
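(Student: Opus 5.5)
The plan has two parts: first transfer the entropy bound of Proposition~\ref{prop:entropy} to $\mathcal H$, then apply Dudley's entropy integral conditionally on the sample.

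\emph{Entropy.} The maps $f\mapsto F=\clip\circ f\mapsto h_F=F-c_F-\inn{A_F}{x}$ are Lipschitz in sup norm with an absolute constant. Indeed, $\clip$ is $1$-Lipschitz, and $|c_F-c_G|\le\|F-G\|_\infty$. For the linear part, $\|A_F-A_G\|=d\,\|\EE[(F-G)x]\|\le d\,\|F-G\|_{L^2}\,\sup_{\|e\|=1}(\EE\inn{e}{x}^2)^{1/2}=\sqrt d\,\|F-G\|_\infty$. Hence $\sup_{\Sd}|\inn{A_F-A_G}{x}|\le\sqrt d\,\|F-G\|_\infty$.

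So an $\eta$-net of $\AAA_{\bar m,L}$ in $L^\infty(\Sd)$ maps to a $(2+\sqrt d)\eta$-net of $\mathcal H$. Taking $\eta=u/(2+\sqrt d)$ in Proposition~\ref{prop:entropy}, through its internal net of $\AAA_{\bar m,L}$, gives
\[
\log N(\mathcal H,\|\cdot\|_{L^\infty},u)\le C\bar md\log\bigl(C\bar md^{3/2}(2+L)/u\bigr)\le C\bar md\log(CB/u),
\]
because $B=C_1\bar md^2(2+L)$ dominates $\bar md^{3/2}(2+L)$. Two side conditions need checking. The restriction $\eta<4+8L$ in Proposition~\ref{prop:entropy} holds because $u\le2B_1=4+2\sqrt{3/2}\,L$. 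The logarithm's argument stays at least $e$ once $C_1$ is large, since $u\le 2B_1\le CB/(\bar md^2)$.

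\emph{Dudley.} Conditionally on $x_1,\dots,x_n$, the Rademacher process $h\mapsto n^{-1/2}\sum_i\varepsilon_ih(x_i)$ is sub-Gaussian with respect to the empirical metric $\|h-h'\|_{n}:=(\frac1n\sum_i(h-h')(x_i)^2)^{1/2}$, which is bounded by $\|\cdot\|_{L^\infty}$. Since $0\in\mathcal H$, every $h$ has $\|h\|_n\le\hat\sigma$. I would use the version of Dudley's bound with absolute value and base point $0$; the zero term vanishes because $h=0$ lies in the class. This gives
\[
\EE_\varepsilon\sup_h\frac1n\Bigl|\sum_i\varepsilon_ih(x_i)\Bigr|\le\frac{C}{\sqrt n}\int_0^{\hat\sigma}\sqrt{\log N(\mathcal H,\|\cdot\|_n,u)}\,du\le C\sqrt{\frac{\bar md}{n}}\int_0^{\hat\sigma}\sqrt{\log(CB/u)}\,du .
\]
Replacing the diameter by $\hat\sigma$ is legitimate, since the chaining from $0$ only needs radius $\hat\sigma$ around $0$. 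Also $\hat\sigma\le B_1\le 2B_1$, so the entropy bound is valid on the whole range of integration.

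\emph{Integral.} The last step is the elementary estimate $\int_0^s\sqrt{\log(A/u)}\,du\le Cs\sqrt{\log(eA/s)}$ for $s\le A/e$. This follows by substituting $u=se^{-r}$, or by noting that the integrand is decreasing and its tail integral is dominated by the endpoint value. Taking $A=CB$ and absorbing the constant $C$ inside the logarithm into $2eB$, again at the cost of adjusting absolute constants and using $s\le B_1\ll B$, gives $C\varphi(\hat\sigma)$.

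The main obstacle I anticipate is the linear-part projection. One must make sure the $\sqrt d$ loss from $A_F$ costs only a power of $d$ inside the logarithm, which is why $B$ carries $d^2$. A second point is to confirm that the internal net from Proposition~\ref{prop:entropy} indeed lives in the same clipped class, so that the centers are genuine elements $h_F\in\mathcal H$; if they were not, the covering would pick up a harmless factor $2$ in scale.
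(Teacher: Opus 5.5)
Your proposal is correct and follows the paper's proof essentially step for step: transfer Proposition~\ref{prop:entropy} from $\AAA_{\bar m,L}$ through $\clip$ to $\mathcal H$ with the factor $2+\sqrt d$ absorbed into the logarithm, run Dudley's chaining conditionally from the base point $0\in\mathcal H$ using $L^\infty$ entropy, and finish with the same elementary integral estimate via the substitution $u=se^{-r}$. The only blemish is your opening phrase ``Lipschitz in sup norm with an absolute constant'', which should read ``with constant $2+\sqrt d$''; your own computation, like the paper's, gives that constant, and it costs only the $d^{3/2}\le d^2$ inside the logarithm.
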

\proofbutton{raddud}

\begin{lemma}[Radius self-bounding]\label{lem:selfbound}
Let $\widetilde R:=\EE_{x,\varepsilon}\sup_{h\in\mathcal H}\tfrac1n|\sum_i\varepsilon_ih(x_i)|$. Then $\EE_x\hat\sigma^2\le\dfrac{L^2}{2d}+8B_1\widetilde R$.
\end{lemma}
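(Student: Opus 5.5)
This is a standard symmetrization argument for the supremum of the empirical process $\tfrac1n\sum_i h(x_i)^2-\EE h^2$ over $\mathcal H$, followed by contraction. We write
\[
\hat\sigma^2=\sup_{h\in\mathcal H}\frac1n\sum_i h(x_i)^2\le\sup_{h\in\mathcal H}\EE h^2+\sup_{h\in\mathcal H}\Bigl(\frac1n\sum_i h(x_i)^2-\EE h^2\Bigr).
\]
The first term is at most $L^2/(2d)$ by Lemma~\ref{lem:hsplit}. For the second we take expectation over the sample. Gin\'e--Zinn symmetrization, with an independent ghost sample and Rademacher signs, gives
\[
\EE_x\sup_{h\in\mathcal H}\Bigl(\frac1n\sum_i h(x_i)^2-\EE h^2\Bigr)\le 2\,\EE_{x,\varepsilon}\sup_{h\in\mathcal H}\frac1n\Bigl|\sum_i\varepsilon_i h(x_i)^2\Bigr|.
\]
Measurability is handled by the separability convention of \S\ref{ssec:notation}.

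Next we remove the square. By Lemma~\ref{lem:hsplit}, every $h\in\mathcal H$ takes values in $[-B_1,B_1]$. On that interval the map $\phi(t)=t^2$ is $2B_1$-Lipschitz and satisfies $\phi(0)=0$. The Ledoux--Talagrand contraction principle, applied to the absolute-value supremum, costs a factor $2$ in addition to the Lipschitz constant. This gives
\[
\EE_\varepsilon\sup_h\frac1n\Bigl|\sum_i\varepsilon_i h(x_i)^2\Bigr|\le 2\cdot 2B_1\,\EE_\varepsilon\sup_h\frac1n\Bigl|\sum_i\varepsilon_i h(x_i)\Bigr|.
\]
The fact that $0\in\mathcal H$ is convenient here, since it makes the absolute-value version harmless. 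Combining the steps yields a bound of the form $\EE_x\hat\sigma^2\le L^2/(2d)+c\,B_1\widetilde R$.

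The main obstacle is bookkeeping: landing exactly on the constant $8$. Symmetrization contributes $2$, and contraction contributes $2B_1$ times the contraction factor $2$, for a total of $8B_1$. The key is to avoid spending an extra factor of $2$ by symmetrizing the absolute value. We only need the one-sided supremum in the first display, since $\hat\sigma^2$ is bounded above. We then bound it by the absolute-value Rademacher supremum and apply contraction in its form for absolute values, $\EE\sup|\sum\varepsilon_i\phi(h_i)|\le2L_\phi\EE\sup|\sum\varepsilon_i h_i|$, valid when $\phi(0)=0$. This gives $2\cdot2\cdot 2B_1=8B_1$.
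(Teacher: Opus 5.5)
Your proposal is correct and is essentially the paper's proof: split off $\sup_h\EE h^2\le L^2/(2d)$ via Lemma~\ref{lem:hsplit}, symmetrize the centered supremum (factor $2$), and apply the Ledoux--Talagrand contraction principle in its absolute-value form to the contraction $s\mapsto s^2/(2B_1)$ on $[-B_1,B_1]$ (factor $2\cdot 2B_1$), which gives exactly $8B_1\widetilde R$. One step is left implicit in both your argument and the paper's, and it is harmless: before invoking the theorem, $s\mapsto s^2/(2B_1)$ should be extended to a contraction on all of $\R$, for instance $\min(s^2,B_1^2)/(2B_1)$; this changes nothing because every $h\in\mathcal H$ takes values in $[-B_1,B_1]$.
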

\proofbutton{selfbound}

\begin{theorem}[Complexity with a sample-free logarithm]\label{thm:sharpcomplexity}
There is an absolute constant $C$ such that for all $n,\bar m\ge1$, $d\ge3$, $L>0$,
\[
\EE_{x,y}\ \sup_{f\in\AAA_{\bar m,L}}\ \frac1n\Bigl|\sum_{i=1}^ny_i\,\clip(f(x_i))\Bigr|\ \le\ C\Bigl[\frac{1+L}{\sqrt n}\ +\ L\sqrt{\frac{\bar m\,\Lambda_L}{n}}\ +\ (1+L)\,\frac{\bar md\,\Lambda_L}{n}\Bigr],
\]
where $y_1,\dots,y_n$ are independent uniform signs, independent of the $x_i$.
\end{theorem}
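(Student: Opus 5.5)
We plan to bound the supremum by splitting each clipped function with the harmonic split of Lemma~\ref{lem:hsplit}, $\clip(f)=c_F+\inn{A_F}{x}+h_F$, and handling the three pieces separately:
\[
\frac1n\Bigl|\sum_iy_iF(x_i)\Bigr|\le |c_F|\,\Bigl|\frac1n\sum_iy_i\Bigr|+\|A_F\|\,\Bigl\|\frac1n\sum_iy_ix_i\Bigr\|+\frac1n\Bigl|\sum_iy_ih_F(x_i)\Bigr|.
\]
Since $|c_F|\le1$, the first term has expectation at most $n^{-1/2}$. Since $\|A_F\|\le\sqrt{3/2}\,L$ and $\EE\|\frac1n\sum_iy_ix_i\|\le(\EE\|\cdot\|^2)^{1/2}=n^{-1/2}$ on the sphere, the second term is $O(L/\sqrt n)$. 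Together these give the $(1+L)/\sqrt n$ contribution.

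The remaining term is $\widetilde R$ from Lemma~\ref{lem:selfbound}, because the $y_i$ are Rademacher signs. Take expectation in $x$ of the conditional Dudley bound in Lemma~\ref{lem:raddud}. The function $\varphi$ is concave and increasing on the relevant range $s\le 2B_1\le 2eB$; one can check that $s\sqrt{\log(2eB/s)}$ is concave there, or replace it by a concave majorant. So Jensen applied to $\hat\sigma=\sqrt{\hat\sigma^2}$ gives
\[
\widetilde R\le C\sqrt{\bar md/n}\;\varphi\bigl(\sqrt{\EE\hat\sigma^2}\bigr).
\]
Inserting Lemma~\ref{lem:selfbound} then yields the fixed-point inequality
\[
\widetilde R\le C\sqrt{\bar md/n}\;\varphi\Bigl(\sqrt{L^2/(2d)+8B_1\widetilde R}\Bigr).
\]

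We solve this inequality in two cases.

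\emph{Case 1: $8B_1\widetilde R\le L^2/(2d)$.} The argument of $\varphi$ is at most $L/\sqrt d$, and
\[
\widetilde R\le C\sqrt{\bar md/n}\,(L/\sqrt d)\sqrt{\log(2eB\sqrt d/L)}=CL\sqrt{\bar m/n}\,\sqrt{\log(2eB\sqrt d/L)}.
\]
Now $B\sqrt d/L=C_1\bar md^{5/2}(2+L)/L\le C_1\bar md^3(1+2/L)$, so this logarithm is at most $C\Lambda_L$. This gives the middle term.

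\emph{Case 2: $8B_1\widetilde R> L^2/(2d)$.} Here $s:=\sqrt{16B_1\widetilde R}$ and $\widetilde R\le C\sqrt{\bar md/n}\,s\sqrt{\log(2eB/s)}$. Squaring gives
\[
\widetilde R\le C\,\frac{\bar md}{n}\,B_1\log(2eB/s).
\]
Because $s\ge L/\sqrt{d}$ in this case, the logarithm is again $\lesssim\Lambda_L$. With $B_1\lesssim1+L$ this is the last term $(1+L)\bar md\Lambda_L/n$.

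Summing the three pieces proves the theorem. The main obstacle is the fixed-point step. We need $\varphi$ monotone and concave so that Jensen and the substitution are legitimate, and we must check that every logarithm produced is controlled by $\Lambda_L$ with no dependence on $n$. This is exactly why the lower bound $s\gtrsim L/\sqrt d$ must be used in Case 2, and not the trivially small $s$.
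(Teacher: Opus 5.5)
Your proposal is correct and follows the paper's proof essentially step for step: the same harmonic split and $(1+L)/\sqrt n$ affine-sector bound, the same Jensen step, and the same two-case solution of the fixed-point inequality $\widetilde R\le C\sqrt{\bar md/n}\,\varphi\bigl((L^2/(2d)+8B_1\widetilde R)^{1/2}\bigr)$, with the case-2 lower bound on the argument keeping the logarithm at $\Lambda_L$. The only cosmetic difference is in the Jensen step. You use concavity of $\varphi$ and then monotonicity together with $\EE\hat\sigma\le(\EE\hat\sigma^2)^{1/2}$, whereas the paper applies Jensen directly to the concave $\psi(s)=\varphi(\sqrt s)$ at $\hat\sigma^2$; the two are equivalent, and monotonicity on the needed range $s\le 4B_1\le 2eB/\sqrt e$ holds because $C_1$ is large.
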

\proofbutton{sharpcomplexity}

\begin{theorem}[The law with a sample-free logarithm]\label{thm:sharplog}
There are absolute constants $c_0,C_0>0$ such that the following holds in model (S), $d\ge3$. Let $\eps\in(0,\sigma^2]$, $\delta\in(0,1)$, and $\bar\Lambda:=\log\bigl(e\bar md^3(2+1/\eps)\bigr)$. If
\[
n\ \ge\ C_0\Bigl[\eps^{-2}\log(8/\delta)\ +\ \eps^{-1}\bar md\,\bar\Lambda\ +\ \bar md^2\,\bar\Lambda\Bigr],
\]
then with probability at least $1-\delta$ every $f\in\NN_m$ with $\frac1n\sum_i(f(x_i)-y_i)^2\le\sigma^2-\eps$ satisfies
\[
\Lip_{\Sd}(f)\ \ge\ c_0\,\eps\,\sqrt{\frac{n}{\bar m\,\bar\Lambda}}\,.
\]
In particular, in the setting of Conjecture~\ref{conj:bln}, once $n\ge C\bigl[md^2\log(emd)+\log(8/\delta)\bigr]$, every width-$m$ network with arbitrary weights fitting the data satisfies $\Lip_{\Sd}(f)\ge c\sqrt{n/(m\log(Cmd))}$: the sample size has left the logarithm.
\end{theorem}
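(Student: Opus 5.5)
We argue by contradiction at a fixed Lipschitz level and then use the complexity bound of Theorem~\ref{thm:sharpcomplexity} in place of a union bound over an $\eps$-net. Put $L^\ast:=c_0\eps\sqrt{n/(\bar m\bar\Lambda)}$. It suffices to show that, with probability at least $1-\delta$, no $f\in\NN_m$ with $\Lip_{\Sd}(f)\le L^\ast$ fits the data $\eps$ below the noise floor. By Lemma~\ref{lem:pl}, such an $f$ is a ReLU network of width at most $\bar m-1$. By Lemma~\ref{lem:B0} it satisfies $\sup_{\Sd}|f|\le 2+4L^\ast$, so $f|_{\Sd}\in\AAA_{\bar m,L^\ast}$. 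Since $|y_i|\le1$, clipping can only decrease the squared error, so $F=\clip\circ f$ also fits: $\frac1n\sum_i(F(x_i)-y_i)^2\le\sigma^2-\eps$.

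Next we run the noise decomposition of Lemma~\ref{lem:noise}, but without its finite-class union bound. Expanding $(F-g-z)^2$ with $z_i=y_i-g(x_i)$, and using $\frac1n\sum z_i^2\ge\sigma^2-\eps/4$ (Hoeffding, probability $e^{-cn\eps^2}$) together with $\frac1n\sum(F-g)^2\ge0$, fitting forces
\[
\sup_{F\in\FF^{\clip}}\frac1n\sum_i z_iF(x_i)\ge\eps/8 .
\]
This supremum of a bounded-differences function of the $n$ i.i.d.\ pairs $(x_i,y_i)$ (each coordinate changes it by at most $4/n$) concentrates: it exceeds its mean by $\eps/16$ with probability at most $e^{-cn\eps^2}\le\delta/2$, using $n\ge C_0\eps^{-2}\log(8/\delta)$. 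It therefore suffices to show that its mean is at most $\eps/16$.

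To bound the mean, we symmetrize. Since $\EE[z_iF(x_i)]=0$, the standard argument gives
\[
\EE\sup\frac1n\sum z_iF(x_i)\le 2\,\EE\sup\frac1n\Bigl|\sum\varepsilon_iz_iF(x_i)\Bigr|.
\]
The multipliers $z_i\in[-2,2]$ are removed by the contraction principle (conditionally on the data, $t\mapsto z_it$ is $2$-Lipschitz and vanishes at $0$). This reduces the mean to at most $C$ times the quantity bounded in Theorem~\ref{thm:sharpcomplexity} at $L=L^\ast$. Each of its three terms must then be at most $c\eps$.
\begin{itemize}
\item The term $L\sqrt{\bar m\Lambda_L/n}$ is $\le c_0\eps\sqrt{\Lambda_{L^\ast}/\bar\Lambda}$ by the choice of $L^\ast$.
\item The term $(1+L)/\sqrt n$ splits into $1/\sqrt n\le c\eps$, from the first sample-size term, and $L^\ast/\sqrt n\le c_0\eps$.
\item The term $(1+L)\bar md\Lambda_L/n$ splits into $\bar md\Lambda/n\le c\eps$, from the second sample-size term, and $L^\ast\bar md\Lambda/n=c_0\eps\sqrt{\bar m\,d^2\Lambda^2/(n\bar\Lambda)}\le c\eps$, from the third term $n\ge C_0\bar md^2\bar\Lambda$.
\end{itemize}

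The step I expect to be the main obstacle is comparing the logarithms, namely proving $\Lambda_{L^\ast}\le C\bar\Lambda$. This is where the $n$-independence of the logarithm must be protected. $\Lambda_L$ grows only as $L\to0$, through the factor $2+1/L$. Since $\bar\Lambda$ is built from $2+1/\eps$, it suffices that $L^\ast\ge c\eps$. This holds once $n\ge C\bar m\bar\Lambda/c_0^2$, which is implied by the sample-size hypothesis. For safety I would also handle small $L$ by monotonicity: if $L^\ast<\eps$, the class at level $\max(L^\ast,\eps)$ contains the one at level $L^\ast$, and using $\Lambda_{\max(L^\ast,\eps)}\le\bar\Lambda$ instead keeps the argument valid.

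For the final claim about Conjecture~\ref{conj:bln}, take $\sigma^2=1$ and $\eps=1/2$ (exact fitting has error $0$). Then $\bar\Lambda\asymp\log(emd)$ and $\bar m\asymp m$ for ReLU, and the three sample-size conditions collapse to $n\ge C[md^2\log(emd)+\log(8/\delta)]$.
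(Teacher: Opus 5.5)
Your proposal is correct and follows the paper's proof essentially step for step: clip, apply Lemma~\ref{lem:noise} pointwise in $f$, use bounded differences, symmetrize and contract away the $z_i$, then invoke Theorem~\ref{thm:sharpcomplexity} at $L=L^\ast$ and check the three terms against the three sample-size conditions. The one genuine deviation is the logarithm comparison. You get $L^\ast\ge\eps$ from $n\ge C_0\bar md^2\bar\Lambda$, and hence $\Lambda_{L^\ast}\le\bar\Lambda$ outright, whereas the paper uses $\eps\sqrt n\ge1$ to get $1/L^\ast\le\sqrt{\bar m\bar\Lambda}/c_0$ and then $\Lambda_{L^\ast}\le C\bar\Lambda$; both work. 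One small fix: your recap of the noise decomposition also needs the Hoeffding event $\frac1n\sum_i z_ig(x_i)\ge-\eps/4$ (the paper's $E_2$) to absorb the cross term $\frac2n\sum_i z_ig(x_i)$, which you did not list.
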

\proofbutton{sharplog}

\begin{remark}[Toward a fully width- and dimension-free logarithm]\label{rem:residual}
Two remarks on the residual factor $\log(Cmd)$. First, no single-scale packing argument can show that any logarithm is necessary: by Proposition~\ref{prop:entropy} and Lemma~\ref{lem:hsplit}, every $u$-packing of the class in $L^2(P_n)$ has log-cardinality at most $C\bar md\log(CB/u)$ while the class has $L^2(P_n)$-radius bounded, by Lemmas~\ref{lem:raddud}--\ref{lem:selfbound}, by $CL/\sqrt d$ up to logarithmic factors and a lower-order fluctuation term, and Theorem~\ref{thm:sharpcomplexity} already refutes a lower bound containing $\log n$ or $\log(1/\eps)$. Second, the extreme thresholds are not the obstruction: for $\delta_0=1/(\bar md\log(e\bar md))$, a dyadic-shell argument using a standard localized entropy bound $\EE\sup_{\|u\|=1,\,t\ge1-\Delta}|\sum_iy_i\relu(\inn u{x_i}-t)|\le C\Delta\sqrt{nd\log(e/\Delta)}$ shows that all units with $1-t_k\le\delta_0$ contribute at most $CL\sqrt{\bar m/n}$ to the supremum --- already below the conjectured rate (we omit the routine chaining details). What remains is the moderate-threshold regime, where the empirical Gram of adversarially placed units must be compared to its population counterpart uniformly; we leave this to future work.
\end{remark}

\section{General activations at small width}\label{sec:genact}

Everything so far concerns piecewise-linear activations. The following result holds for \emph{every} Lipschitz activation --- indeed for every function factoring through an $(m{+}1)$-dimensional linear projection --- and at width $m=1$ it matches the conjectured rate exactly. It is a projection-capacity floor: it uses no structure of $\psi$ beyond the factorization $f(x)=g(Px)$.

\begin{theorem}[Any activation, small width]\label{thm:genact}
There are absolute constants $c,C>0$ such that the following holds in model (S) with $y_1,\dots,y_n$ i.i.d.\ uniform signs independent of the data. If $n\ge C\,(d\log n+md)$ and $m\le c\,\min(n,d)$, then with probability at least $1-2e^{-n/C}$, for \emph{every} Lipschitz $\psi$ and every width-$m$ network $f\in\NN_m$ with arbitrary weights that fits the data exactly,
\[
\Lip_{\Sd}(f)\ \ge\ \frac{c\,n^{1/(m+1)}}{\sqrt{m+1}}\,.
\]
In particular, at $m=1$ every exact interpolant with any Lipschitz activation satisfies $\Lip_{\Sd}(f)\ge c\sqrt n$: Conjecture~\ref{conj:bln} holds at width one, for all activations, with no logarithmic loss.
\end{theorem}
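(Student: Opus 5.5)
The proof uses only the factorization $f(x)=g(Px)$. Here $P:\R^d\to\R^{m+1}$ has orthonormal rows spanning $w_1,\dots,w_m,v$ (padded if necessary), and $g$ is defined on the $(m+1)$-dimensional image. For $x$ on the sphere put $E=\operatorname{span}$ of the rows and $z=Px\in B_1^{m+1}$. Nothing about $\psi$ is used beyond this.

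The plan is to show that any interpolant is forced to have two data points $x_i,x_j$ with $y_i\ne y_j$ whose projections are close, namely $\|Px_i-Px_j\|\le r$. Then $|f(x_i)-f(x_j)|=2$. To convert this into a Lipschitz bound on $\Sd$, which uses the Euclidean distance $\|x_i-x_j\|\approx\sqrt2$ and not the projected distance, I would not compare $x_i$ with $x_j$ directly. Instead I would compare $x_i$ with the point $x'=Px_j+\sqrt{1-\|Px_j\|^2}\,\pi(x_i)$, where $\pi(x_i)$ is the normalized component of $x_i$ orthogonal to $E$. This $x'$ lies on $\Sd$, satisfies $f(x')=f(x_j)$ because $f$ depends only on $Px$, and obeys $\|x_i-x'\|\lesssim \|Px_i-Px_j\|+\bigl|\sqrt{1-\|Px_i\|^2}-\sqrt{1-\|Px_j\|^2}\bigr|$. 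Because $m\le cd$, with high probability every data point has $\|Px_i\|\le 1/2$, uniformly over all $(m+1)$-dimensional subspaces; this follows from the covering and union bound below. The square-root terms are then Lipschitz in $Px$, so $\|x_i-x'\|\le Cr$ and $\Lip_{\Sd}(f)\ge 2/(Cr)$.

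It therefore suffices to prove that, with high probability and simultaneously for all subspaces $E$ of dimension $m+1$, the projected labelled points cannot be $r$-separated across labels with $r\approx C\sqrt{m+1}\,n^{-1/(m+1)}$. Fix $E$. The projections $Px_i$ concentrate in a ball of radius about $\sqrt{(m+1)/d}$, and their density there is roughly Gaussian, so they are dense. I would cover $P(B)$ by cubes of side $r/\sqrt{m+1}$ at scale $\sqrt{(m+1)/d}$, giving about $N\approx (d^{-1/2}(m+1)/r)^{m+1}$ cells. An interpolant with separation $r$ needs every cell to be monochromatic. For fixed $E$, the probability under random signs that all cells are monochromatic is at most $\prod_{\text{cells}}2^{1-n_c}=2^{N'-n}$, where $N'$ is the number of occupied cells. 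With $N'\le n/2$ this is at most $2^{-n/2}$.

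The main obstacle is the union bound over all subspaces $E$. I would use a net on the Grassmannian of $(m+1)$-planes at resolution $\eta\approx r/\sqrt{m+1}$; it has size $(C/\eta)^{d(m+1)}$, with log-cardinality about $d(m+1)\log n$. A perturbation of $E$ moves each projection by at most $\eta$, which is absorbed by doubling $r$. The requirement $n\ge C(d\log n+md)$ is exactly what lets $2^{-n/2}$ beat this net. The remaining care is the scale bookkeeping: the $d^{-1/2}$ from the concentration of projections has to cancel against the normalization, so that the final separation threshold gives $\Lip\ge c\,n^{1/(m+1)}/\sqrt{m+1}$ rather than a bound degraded by $\sqrt d$. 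I expect to get this by working directly with $\|x_i-x'\|$ in ambient units and choosing $N'\approx n/2$. This yields $r\approx\sqrt{m+1}\,n^{-1/(m+1)}$ up to the cell-count constant, using $m\le cn$ so that $n^{1/(m+1)}$ stays bounded below.
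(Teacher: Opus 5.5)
\textbf{Where the proposal fails: the localization step.} Your architecture is the paper's. You factor $f=g\circ P$ through an $(m+1)$-dimensional projection and transport along the fiber to $x'=Px_j+\sqrt{1-\|Px_j\|^2}\,\pi(x_i)$. You then pigeonhole the projected points into $\asymp n$ cells of diameter $\asymp\sqrt{m+1}\,n^{-1/(m+1)}$ in the radius-$\tfrac12$ ball, use label independence for a fixed projection, and union-bound over a net. Your probability step, $\prod_c2^{1-n_c}=2^{N'-n}$ for all occupied cells to be monochromatic, is a clean substitute for the paper's disjoint-pairs binomial bound.

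The problem is the claim that, with high probability, every data point has $\|Px_i\|\le\tfrac12$ uniformly over all $(m+1)$-dimensional subspaces. This is false deterministically: a subspace containing $x_i$ gives $\|Px_i\|=1$, and since the network is chosen after the data, $E$ may be adapted to the sample. No covering argument can produce it. You use the claim twice. First, it makes the square-root term in the transport estimate Lipschitz; near $\|Px\|=1$ that term is only controlled by $\sqrt{2r}$, which yields just $L\gtrsim r^{-1/2}$. Second, it places every point in a cell of the radius-$\tfrac12$ ball.

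The missing idea is a \emph{counting} bound that is uniform in $E$. On the event $\mathrm{Op}:=\lambda_{\max}(\sum_ix_ix_i^\top)\le C(1+n/d)$, which has probability $1-e^{-n/C}$, every rank-$(m+1)$ projection satisfies $\sum_i\|Px_i\|^2\le(m+1)\,\mathrm{Op}$. Hence at most $4(m+1)\mathrm{Op}\le n/8$ points are high, for every $P$ at once. You then run your cell argument only on the $\ge\tfrac78n$ low points of each net element $\hat P$, which gives $2^{N'-7n/8}\le2^{-3n/4}$ with $N'\le n/8$. For the true $P$, note that $\hat P$-low points are $(\tfrac12+\eta)$-low, so the transport constant stays absolute. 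This is where $m\le c\min(n,d)$ actually enters: both halves are needed to get $(m+1)(1+n/d)\le cn$. The regime where $n^{1/(m+1)}$ is bounded is trivial, since the target is then an absolute constant and one opposite-label pair gives it.

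\textbf{Net count.} Your net count is wrong in a way that matters. With log-cardinality $d(m+1)\log n$, the hypothesis $n\ge C(d\log n+md)$ would not suffice: take $m\asymp\log n$ and $d\asymp n/\log n$, as large as the hypothesis allows. The point you need is that the mesh is $\eta\asymp n^{-1/(m+1)}$. Then $\log(1/\eta)\le\frac{\log n}{m+1}+C$, and the log-cardinality is $(m+1)d\log(C/\eta)\le d\log n+Cmd$, which is precisely the hypothesis.

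\textbf{Scale remark.} Your remark about scales is backwards. Localizing at the typical radius $\sqrt{(m+1)/d}$ would \emph{improve} the bound by a factor $\sqrt{d/(m+1)}$; compare \eqref{eq:floorI} in Theorem~\ref{thm:genact2}. It would not degrade it. Doing that uniformly in $E$, however, again requires the spectral count, now at threshold $\tau\asymp\sqrt{(m+1)(1/n+1/d)}$. Working in the radius-$\tfrac12$ ball with $\asymp n$ cells, where you end up, is what gives $c\,n^{1/(m+1)}/\sqrt{m+1}$ here.
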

\proofbutton{genact}

\begin{remark}
Exact fitting can be relaxed to fitting at a fixed accuracy: if $\frac1n\sum_i(f(x_i)-y_i)^2\le c_1$ for a small absolute constant $c_1$, then at least $\tfrac n{32}$ of the opposite-label pairs above consist of two points with $|f(x_i)-y_i|\le\tfrac12$ (at most $4c_1n$ points violate this, by Markov), and each such pair still forces $|f(x_i)-f(x_j)|\ge1$, so the conclusion holds with $c$ halved. We do not pursue the version with fitting error $\sigma^2-\eps$ for arbitrary $\eps$.
\end{remark}

The floor of Theorem~\ref{thm:genact} localizes the pairing at the trivial scale. Two refinements --- localizing at the typical projection radius $\sqrt{p/d}$, and replacing the worst-case pigeonhole by a birthday count of random pair collisions --- give a much stronger floor at small width. The concentration step requires care: the number of collision pairs is far below the scale at which bounded-difference inequalities are useful, and we use negative association instead.

\begin{theorem}[Localized projection floor]\label{thm:genact2}
There are absolute constants $c,C>0$ such that the following holds in model (S) with $y_1,\dots,y_n$ i.i.d.\ uniform signs independent of the data. Put $p:=m+1$ and $\Lambda:=\log(nd)$, and assume $d\ge Cp$, $n\ge Cp\,d\,\Lambda$, and $p\le c\log n$. Then with probability at least $1-Ce^{-d/C}$, for \emph{every} Lipschitz $\psi$ and every width-$m$ network $f\in\NN_m$ with arbitrary weights that fits the data exactly,
\begin{equation}\label{eq:floorI}
\Lip_{\Sd}(f)\;\ge\;\frac{c\sqrt d}{p}\,n^{1/p},
\end{equation}
and, sharpening this,
\begin{equation}\label{eq:floorII}
\Lip_{\Sd}(f)\;\ge\;\frac{c\sqrt d}{p}\Bigl(\frac{n^2}{p\,d\,\Lambda}\Bigr)^{1/p}.
\end{equation}
\end{theorem}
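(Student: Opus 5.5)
The plan is to use only the factorization $f(x)=g(Px)$. Here $P:\R^d\to\R^p$ is the orthogonal projection onto $V=\operatorname{span}(w_1,\dots,w_m,v)$, of dimension at most $p=m+1$. We will not use $\Lip(g)$. Instead we argue directly on pairs of data points: if $x_i,x_j$ have opposite labels and $\|Px_i-Px_j\|$ is small, then $f(x_i)-f(x_j)=\pm2$. Since $f$ is constant along directions orthogonal to $V$, the points $x_i$ and $x_i'=x_i-Px_i+Px_j$ have the same $f$-value. We then move within $\Sd$ along a short path from $x_i$ toward a point $\tilde x_i\in\Sd$ with $P\tilde x_i=Px_j$, and from there to $x_j$ inside the fibre $\{P=Px_j\}\cap\Sd$, on which $f$ is constant. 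This shows that $\Lip_{\Sd}(f)\cdot\|Px_i-Px_j\|\gtrsim 1$, up to a correction controlled by $\|Px_i\|,\|Px_j\|\le\sqrt{Cp/d}\ll1$. The fibre is a sphere of dimension $d-p-1\ge1$, hence connected, so the step inside it costs nothing. The whole problem therefore reduces to showing that, with high probability, \emph{every} $p$-dimensional subspace $V$ has an opposite-label pair with $\|P_V(x_i-x_j)\|\le\rho$, where $\rho\approx\frac{p}{\sqrt d}\,n^{-1/p}$ for \eqref{eq:floorI}, and $\rho\approx\frac{p}{\sqrt d}(pd\Lambda/n^2)^{1/p}$ for \eqref{eq:floorII}.

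For \eqref{eq:floorI} the argument is localization followed by pigeonhole. Fix $V$. Projections of uniform sphere points have $\|P_Vx\|\approx\sqrt{p/d}$ with Gaussian-like fluctuations. Uniformly over $V$ (using a net on the Grassmannian of size $e^{Cpd\log(\cdot)}$ and Bernstein), at least $n/2$ of the points satisfy $\|P_Vx_i\|\le C\sqrt{p/d}$, and among them at least $n/8$ carry each label. This step uses $n\ge Cpd\Lambda$. The ball of radius $C\sqrt{p/d}$ in $V$ is covered by $(C'\sqrt{p/d}/\rho)^p$ cells of diameter $\rho$. Our first attempt at a deterministic pigeonhole, which pairs points of the two labels cell by cell, fails: an adversarial $V$ could place all points of one label away from the other. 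To fix this we first fix the labels. The unconditional randomness of the labels makes a single fixed partition into cells have, with probability $1-e^{-cn}$, a mixed-label cell whenever the number of cells is at most $cn$. So we union-bound over a net of subspaces (cost $e^{Cpd\log d}$, affordable since $n\ge Cpd\Lambda$) and over a fixed grid on each net subspace, absorbing the net error into $\rho$. This gives $\rho\approx\sqrt{p/d}\,n^{-1/p}$ times a dimensional factor of order $\sqrt p$, i.e. \eqref{eq:floorI}.

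For \eqref{eq:floorII} we replace pigeonhole by a birthday count. For a fixed $V$, the expected number of opposite-label pairs whose projections lie within $\rho$ is about $n^2\cdot(\rho\sqrt{d}/(C\sqrt p))^p$. We choose $\rho$ so that this expectation is about $C pd\Lambda$, large enough to beat the net union bound $e^{Cpd\Lambda}$. That is exactly the exponent $(n^2/(pd\Lambda))^{1/p}$. We then need the collision count to be positive with probability $1-e^{-Cpd\Lambda}$, simultaneously for every net point. We expect this concentration to be the main obstacle: the count is a U-statistic whose mean, about $pd\Lambda$, is far below the scale at which bounded-difference inequalities bite. Following the paper's hint, we will bin points into a fixed grid of cells in $V$ and count the indicators ``cell $Q$ contains at least one point of each label''. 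Occupancy indicators of balls-in-bins are negatively associated, and so are monotone functions of disjoint blocks of them. Chernoff's lower tail therefore applies to this sum as if it were independent, giving $\PP(\text{no mixed cell})\le e^{-c\,\EE[\cdot]}$. The remaining checks are that the condition $p\le c\log n$ keeps the cells larger than the net-approximation error, so that moving between $V$ and its net neighbour changes projections by $o(\rho)$, and that $d\ge Cp$ keeps the sphere-geometry correction in the path argument at most a constant factor.
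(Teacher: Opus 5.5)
Your architecture is the paper's. You factor $f$ through the rank-$p$ projector $P$ and transport along the fibre. You take a net of $p$-frames with a fixed cubical grid on each net point. For \eqref{eq:floorI} you use pigeonhole plus independence of the labels, and for \eqref{eq:floorII} a birthday count with negative association. Your localization by net plus Bernstein (instead of the uniform bound via $\lambda_{\max}(\sum_ix_ix_i^\top)$) is a legitimate variant. So are your (cell, label) bins, which make the mixed-cell indicators themselves negatively associated.

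Two small slips:
\begin{itemize}
\item The point $x_i-Px_i+Px_j$ has the $f$-value of $x_j$, not of $x_i$, and it does not lie on $\Sd$. What you need is $\tilde x_i:=Px_j+\sqrt{1-\|Px_j\|^2}\,(x_i-Px_i)/\|x_i-Px_i\|\in\Sd$. It satisfies $f(\tilde x_i)=f(x_j)$ and $\|x_i-\tilde x_i\|\le2\|P(x_i-x_j)\|$. No path or connectedness of the fibre is needed, since $\Lip_{\Sd}$ uses the chord metric.
\item The hypothesis $p\le c\log n$ is really needed in \eqref{eq:floorI}, to make the number $(2\tau/s+2)^p$ of cells meeting the low ball at most $n/8$. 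It is not about net error, since the net mesh is proportional to the cell size by construction.
\end{itemize}

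The genuine gap is in the concentration step for \eqref{eq:floorII}. You calibrate $\rho$ by the expected number of close opposite-label \emph{pairs*}, but the quantity you concentrate is the number of mixed \emph{cells}. Its mean is $\sum_Q\PP(I_Q=1)$ with $\PP(I_Q=1)\asymp\min(1,n^2q_Q^2)$, so a cell counts at most once however many points it holds. A lower bound on the pair count (for instance by Cauchy--Schwarz over core cells) therefore bounds the mixed-cell mean only when the core mass sits in cells with $q_Q\lesssim1/n$. Nothing in your sketch ensures this, and it can fail.

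Near the threshold $n\asymp pd\Lambda$, with cells sized from a cube count of the core as in the paper, a typical core cell has mass of order $(C\sqrt p)^p\,pd\Lambda/n^2$. For moderately large $p$ this is $\gg1/n$. The number of such cells, and hence $\EE[\#\text{mixed}]$, is then far below $pd\Lambda$. So $e^{-c\,\EE[\cdot]}$ cannot pay for the $e^{Cpd\Lambda}$ net, even though same-cell pairs are plentiful.

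The paper closes this with a light/mid/heavy split of the core cells:
\begin{itemize}
\item light cells ($q_c\le1/n$) are handled by Cauchy--Schwarz;
\item mid cells are counted directly: there are at least $n/128$ of them, each doubly occupied with probability at least $\tfrac15$;
\item heavy cells ($q_c>32/n$) are handled by pigeonhole pairing inside them, since at least $n/8$ points fall in at most $n/32$ cells;
\item in every case this is followed by the $2^{-|\Pi|}$ label bound on disjoint pairs.
\end{itemize}

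Within your framework, the repair is to use the full negative-association inequality $\PP(\bigcap_Q\{I_Q=0\})\le\prod_Q\PP(I_Q=0)$ rather than Chernoff on the sum. A cell with $q_Q>32/n$ is monochromatic with probability at most $2e^{-nq_Q/2}$. Hence heavy core mass at least $\tfrac14$ gives failure probability at most $2^{n/32}e^{-n/8}$. Light and mid cells then work as you intended, using $\PP(I_Q=1)\gtrsim n^2q_Q^2$ and $\PP(I_Q=1)\gtrsim1$ respectively.
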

\proofbutton{genact2}

\begin{corollary}[Per-width consequences]\label{cor:genactcurve}
Denote by $F_m$ the right side of \eqref{eq:floorII}. Within the admissible range $Cp\le d\le n/(Cp\Lambda)$:
at $m=1$, $F_1=cn/\sqrt{2\Lambda}$ --- Conjecture~\ref{conj:bln} holds at width one with margin $\sqrt{n/\Lambda}$, for every admissible $d$, and no floor of this strength can extend to $d\gtrsim n$ (Remark~\ref{rem:genact2scope});
at $m=2$, $F_2=c\,d^{1/6}n^{2/3}\Lambda^{-1/3}\ge c'\sqrt n$ once $nd\ge C\Lambda^2$ --- the conjecture holds at width two on the entire admissible range;
at $m=3$, $F_3=c\,d^{1/4}n^{1/2}\Lambda^{-1/4}\ge c'\sqrt{n/3}$ once $d\ge C\Lambda$;
and in general $F_m\ge c\sqrt{n/m}$ exactly when $d\ge n^{(m-3)/(m-1)}\gamma_m$ with $\gamma_m\le(C(m+1)\Lambda)^{2(m+2)/(m-1)}$ a polylogarithmic factor, the admissible window being nonempty precisely for $m\le c\log n/\log\log n$.
\end{corollary}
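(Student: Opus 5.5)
The corollary is pure bookkeeping on \eqref{eq:floorII}. I will substitute $p=m+1$ into
\[
F_m=\frac{c\sqrt d}{p}\Bigl(\frac{n^2}{p\,d\,\Lambda}\Bigr)^{1/p}=\frac{c}{p^{1+1/p}}\,d^{\frac12-\frac1p}\,n^{2/p}\,\Lambda^{-1/p}
\]
for each width and simplify, absorbing powers of the bounded quantity $p$ into $c$. Throughout I stay inside the hypotheses of Theorem~\ref{thm:genact2} ($d\ge Cp$, $n\ge Cpd\Lambda$, $p\le c\log n$), which is exactly the stated admissible range $Cp\le d\le n/(Cp\Lambda)$.

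\textbf{Small widths.} At $m=1$ ($p=2$) the $d$-dependence cancels and $F_1=c\,n/\sqrt{2\Lambda}$. Its ratio to $\sqrt n$ is $\sqrt{n/\Lambda}$, which is the claimed margin. The statement that no floor of this strength survives for $d\gtrsim n$ I would simply cite from Remark~\ref{rem:genact2scope} (the linear-activation interpolant), not reprove.

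At $m=2$ ($p=3$) the formula gives $F_2=c\,d^{1/6}n^{2/3}\Lambda^{-1/3}$. Then $F_2\ge c'\sqrt n$ is equivalent to $(nd)^{1/6}\ge C\Lambda^{1/3}$, i.e.\ $nd\ge C\Lambda^2$. Since $n\ge Cpd\Lambda$, this holds throughout the admissible range.

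At $m=3$ ($p=4$) the formula gives $F_3=c\,d^{1/4}n^{1/2}\Lambda^{-1/4}$. Then $F_3\ge c'\sqrt{n/3}$ is equivalent to $d\ge C\Lambda$.

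\textbf{General $m$.} For $p\ge3$ I will solve $F_m\ge c\sqrt{n/m}$ for $d$. The condition reads
\[
d^{(p-2)/(2p)}\ge C\,p^{1+1/p}\,m^{-1/2}\,(p\Lambda)^{1/p}\,n^{(p-4)/(2p)}.
\]
Raising both sides to the power $2p/(p-2)$ gives $d\ge n^{(p-4)/(p-2)}\gamma_m$, and $(p-4)/(p-2)=(m-3)/(m-1)$. The constant is
\[
\gamma_m=\bigl[(Cp^{1+1/p}m^{-1/2})^{2p}(p\Lambda)^2\bigr]^{1/(p-2)}.
\]
Bounding $p^{2p+2}m^{-p}\le (Cp)^{2p}$, this is at most $(Cp\Lambda)^{2(p+1)/(p-2)}=(C(m+1)\Lambda)^{2(m+2)/(m-1)}$, as claimed. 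One has to check that the $p$-powers really fit under this exponent with a uniform $C$.

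\textbf{Nonemptiness of the window.} The window is $n^{(m-3)/(m-1)}\gamma_m\le d\le n/(Cp\Lambda)$. It is nonempty iff $n^{2/(m-1)}\ge Cp\Lambda\,\gamma_m$. Using the bound on $\gamma_m$, this becomes $n^2\ge (Cp\Lambda)^{3(m+1)}$ up to constants, i.e.\ $\log n\gtrsim m\log(m\Lambda)$.

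Since $\Lambda=\log(nd)\le 2\log n$ in the admissible range, this is equivalent to $m\le c\log n/\log\log n$. This range is compatible with the hypothesis $p\le c\log n$. For the converse ("precisely"), I would note that for larger $m$ the left side $n^{2/(m-1)}$ is $e^{O(\log\log n)}$, while the right side exceeds it.

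\textbf{Main obstacle.} The only delicate point is uniform constant-tracking in $\gamma_m$ and in the converse direction of the window criterion, since the exponents grow with $m$. I would handle it by writing every bound in the form $(Cp\Lambda)^{\text{exponent}}$ from the start.
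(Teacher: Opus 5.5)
Your proposal is correct and follows the intended argument. The paper gives no separate proof; the corollary is a direct substitution of $p=m+1$ into \eqref{eq:floorII}, which is what you carry out.

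Three small touch-ups:
\begin{itemize}
\item At $m=2$, the inequality $nd\ge C\Lambda^2$ does not follow from $n\ge Cpd\Lambda$ alone, which only gives $nd\gtrsim d^2\Lambda$. It follows instead from $\Lambda=\log(nd)$, because $x\ge C\log^2x$ once $x$ exceeds a constant, and $nd\ge n$ is large in the admissible range.
\item For the ``precisely'' direction of the window claim, an upper bound on $\gamma_m$ cannot show emptiness. Use instead the lower bound $Cp\Lambda\gamma_m\ge C\log n$, valid since $\gamma_m\ge1$.
\item Your general inequality double-counts a factor $p^{1/p}$; this is harmless.
\end{itemize}
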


\begin{remark}[Scope]\label{rem:genact2scope}
The hypothesis $n\ge Cpd\Lambda$ is not an artifact of the proof. Since the conjecture quantifies over all Lipschitz activations, one may take $\psi(t)=t$: for $d\ge Cn$ the minimum-norm solution of $Xv=y$ has $\|v\|\le2\sqrt n$ with high probability ($\sigma_{\min}(X)\ge\tfrac12$), giving an exact interpolant with $\Lip_{\Sd}(f)\le2\sqrt n$ at \emph{every} width --- consistent with the conjectured $\sqrt{n/m}$, but excluding any floor beyond $C\sqrt n$ there. More generally the projector net costs $e^{cpd\Lambda}$ against a label budget of at most $e^n$, confining this method to $pd\Lambda\le n/C$. For $m\ge2$ and general $\psi$ the regime $d\gtrsim n/(m\Lambda)$ remains open; for piecewise-linear $\psi$ it is covered by Theorem~\ref{thm:main}.
\end{remark}

\begin{remark}
For $m\ge2$ the floors above still sit below the conjectured $\sqrt{n/m}$ outside the ranges of Corollary~\ref{cor:genactcurve}; closing that gap for general activations at moderate width is the remaining open case, and the kink mechanism of Sections~\ref{sec:canonical}--\ref{sec:rigid} is provably unavailable there (Section~\ref{sec:scope}).
\end{remark}

\section{Toward the log-free law}\label{sec:band}

For widths $m\ge c^2n/2$ the log-free conjecture is immediate (Section~\ref{sec:scope}: the trivial floor), and at $m=1$ it is Theorem~\ref{thm:genact}; elsewhere it remains open. In the critical band of widths it reduces to a single sharply-stated multiplier estimate, which we state here and leave open; the reduction, and the unconditional structure surrounding it, are developed in the supplementary note \cite{supp}. Throughout: model (S), with labels $y_1,\dots,y_n$ i.i.d.\ uniform on $\{\pm1\}$ and independent of the data --- the pure-noise case $\sigma^2=1$; every expectation $\EE_y$ and every probability below is with respect to this law. Width band $m=n/T$ with $4\le T\le\log^Cn$, target Lipschitz level $L^\ast=c_0\eps\sqrt T$, and $\mathrm{Op}:=\lambda_{\max}(\sum_ix_ix_i^\top)$, which satisfies $\mathrm{Op}\le C_1(1+n/d)=:\overline{\mathrm{Op}}$ with probability $1-2e^{-c\min(n,d)}$ (the rows $\sqrt d\,x_i$ are isotropic with absolute sub-Gaussian norm; \cite[Thm.~4.6.1]{Vershynin}). We say $f$ \emph{fits} if $\frac1n\sum_i(f(x_i)-y_i)^2\le1-\eps$, and write $t_\ast:=\min\bigl(1,\,320\,L^\ast\overline{\mathrm{Op}}/(\eps T)\bigr)$.

\begin{conjecture}[Mesoscopic multiplier estimate]\label{conj:M}
There are absolute constants $C,c_0>0$ such that in the band, for $d\ge\eps^{-2}\log^Cn$, with probability at least $1-1/n$ over the data:
\[
\EE_y\ \sup\ \sum_{i=1}^ny_i\,\clip\Bigl(\inn v{x_i}+c+\!\!\sum_{k:\,t_k\le t_\ast}\!\!\alpha_k\relu(\inn{u_k}{x_i}-t_k)\Bigr)\ \le\ \frac{\eps n}4,
\]
the supremum over all $(v,c,(\alpha_k,u_k,t_k))$ arising as the affine-plus-low-threshold part of a canonical $f$ with $\Lip_{\Sd}(f)\le L^\ast$, $\sup_{\Sd}|f|\le2+2L^\ast$, and rigidity $|\alpha_k|\sqrt{1-t_k^2}\le2L^\ast$.
\end{conjecture}

Every construction we have tested numerically --- aimed same-sign clusters, stacked caps, profile spikes, adaptive groups --- stays at or below $2mL^\ast\overline{\mathrm{Op}}/t_\ast=\eps n/160$ against this threshold (\texttt{numerics/check\_sector\_throttle.py}); we record this as evidence for Conjecture~\ref{conj:M}, not a proof. In the supplementary note \cite{supp} we prove that Conjecture~\ref{conj:M} implies the log-free law in the band: with probability at least $1-1/n-2e^{-c\min(n,d)}-e^{-\eps^2n/128}$, no width-$m$ network with arbitrary weights and $\Lip_{\Sd}(f)\le c_0\eps\sqrt{n/m}$ fits $\eps$ below the noise floor. The note also proves, unconditionally, the structure surrounding the estimate: fitting, occupancy, value-mass, serving-capacity and pile-up lemmas; an affine supremum identity showing that the affine sector, with no bound whatsoever on its coefficients, carries only $O(d)$ of the fitting functional; the single-direction case settled for \emph{every} Lipschitz activation --- networks of arbitrarily many units along one axis cannot fit below Lipschitz constant $c\eps\min(\sqrt n,d)$, with no width bound and no logarithm; stratified isolation of pairwise-incoherent clusters above the coherence floor, and a counterexample showing that per-cluster rigidity fails below it; a dimension-free cap-mass bound; forced-depth and deep-peel lemmas that remove the deep sector; and a deterministic slice computation locating exactly where label randomness becomes necessary. Conjecture~\ref{conj:M} remains open.

\section{Sharpness and scope}\label{sec:scope}

\paragraph{Depth.} Theorem~\ref{thm:main} is a depth-two phenomenon. Bubeck and Sellke \cite[Section~A]{BS} show that with a third layer, unbounded weights let a network fit generic data below the noise floor with Lipschitz constant far below the law's threshold at the same parameter count. So the polynomial-boundedness assumption of \cite{BS} is necessary at depth three and, by the present paper, superfluous at depth two for kink activations: depth two is the critical depth.

\paragraph{Activation.} The \emph{canonical-parameter} mechanism of Sections~\ref{sec:canonical}--\ref{sec:rigid} is specific to genuine kinks. For a smooth activation $\sigma$ and a unit vector $u$, the finite-difference family
\[
        h_{\eta,u}(x)=\frac{\sigma(\inn ux+\eta)-\sigma(\inn ux)}{\eta}
\]
may remain uniformly bounded and Lipschitz as $\eta\downarrow0$, while its natural two-unit representation has coefficients of size $1/\eta$: bounded canonical parameters are unavailable for smooth activations. The static mechanism behind the band reduction of the supplementary note \cite{supp} bypasses this at the level of \emph{profiles}: rigidity is imposed on the derivative of the total one-dimensional profile carried by each direction --- a consequence of the ambient Lipschitz bound alone, indifferent to how the profile is represented by units --- and the $1/\eta$ coefficients never appear. In particular the single-direction theorems of \cite{supp} settle that case for \emph{every} Lipschitz activation, and the reduction of Conjecture~\ref{conj:M} in \cite{supp} applies verbatim to arbitrary Lipschitz profiles, so the remaining obstacle for general activations is the same multiplier estimate as for ReLU. Sums of ridge functions also have nontrivial representation identities, especially when directions coalesce; see Pinkus \cite{Pinkus} for background on ridge functions.

\paragraph{The logarithm.} The single $\log$ factor comes from the union bound over the net, as in the $\widetilde\Omega$ notation of \cite{BS,BLN}. The entropy estimate here is not strong enough, by itself, to remove that logarithm: $\AAA_{\bar m,L}$ has logarithmic metric entropy of order $\bar md\log(L/\eps')$ at the relevant scales. Whether the clean $c\sqrt{n/m}$ holds for $\NN_m$ with unbounded weights is open.

\paragraph{Upper bounds and tightness at $m\asymp n$.} The law is tight in the parameter count: \cite[Remark~1.1]{BS} constructs, for every $p\in[\widetilde\Omega(n),n(d+1)]$, functions with $p$ parameters fitting generic data with $\Lip=O(\sqrt{nd/p})$; at $p=n(d+1)$ the Lipschitz constant is $O(1)$. Those interpolants are not two-layer networks. For the overparameterized endpoint $m\asymp n$ --- the regime of the conjecture's own thesis, one neuron per data point --- an explicit two-layer ReLU network matches the lower bound.

\begin{proposition}[Matching upper bound at $m\asymp n$]\label{prop:upper}
Let $x_1,\dots,x_n\in\Sd$ satisfy $|\inn{x_i}{x_j}|\le\tfrac18$ for all $i\ne j$, and let $y_1,\dots,y_n\in[-1,1]$ be arbitrary. Put $s=\tfrac14$ and
\[
        \rho(u):=\frac1s\bigl(\relu(u-(1-s))-\relu(u-1)\bigr).
\]
The width-$2n$ two-layer ReLU network
\[
        f(x)=\sum_{i=1}^n y_i\rho(\inn{x_i}{x})
\]
satisfies $f(x_i)=y_i$ for all $i$ and
\[
        \Lip_{\Sd}(f)\le\frac\pi2\sqrt7<5.
\]
For $n$ i.i.d. uniform points on $\Sd$, the separation hypothesis holds with probability at least $1-1/n$ whenever $d\ge C_{\rm sep}\log n$, for a sufficiently large absolute constant $C_{\rm sep}$.
\end{proposition}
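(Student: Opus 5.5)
The plan is to check three things directly: interpolation, a pointwise bound on the tangential gradient of $f$ along $\Sd$, and the random separation. Throughout I use that $\rho(u)=0$ for $u\le 1-s=\tfrac34$, that $\rho(1)=1$, and that $\rho$ is linear with slope $1/s=4$ on $[\tfrac34,1]$. Inputs on the sphere satisfy $\inn{x_j}{x}\le1$, so only this part of $\rho$ matters.

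\emph{Interpolation.} For each $i$ we have $f(x_i)=y_i\rho(1)+\sum_{j\ne i}y_j\rho(\inn{x_j}{x_i})$. Every cross term vanishes because $\inn{x_j}{x_i}\le\tfrac18<\tfrac34$. Hence $f(x_i)=y_i$.

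\emph{At most one unit is active.} Fix $x\in\Sd$ and let $S(x)=\{j:\inn{x_j}{x}>\tfrac34\}$, with $k=|S(x)|$. Cauchy--Schwarz gives $\tfrac34k<\inn{x}{\sum_{j\in S}x_j}\le\|\sum_{j\in S}x_j\|$. The separation hypothesis gives $\|\sum_{j\in S}x_j\|^2\le k+k(k-1)/8$. Combining, $\tfrac{9}{16}k^2<k+\tfrac{k(k-1)}8$, which simplifies to $\tfrac7{16}k<\tfrac78$, so $k\le1$.

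\emph{Tangential gradient.} Off the finitely many kink spheres $\{\inn{x_j}{x}\in\{\tfrac34,1\}\}$, $f$ is smooth on $\Sd$. By the previous step its Euclidean gradient is either $0$ or $4y_jx_j$ for the single active $j$. The tangential gradient is therefore $4y_j(x_j-\inn{x_j}{x}x)$, whose norm is at most $4\sqrt{1-\inn{x_j}{x}^2}\le4\sqrt{1-9/16}=\sqrt7$.

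\emph{From gradient to Lipschitz constant.} Since $f$ is continuous and piecewise smooth on $\Sd$, for any $x,x'$ I integrate along a minimizing great-circle arc. The arc meets the kink set in finitely many points, except in degenerate cases, which are handled by continuity and a perturbation of the endpoints. This gives $|f(x)-f(x')|\le\sqrt7\,\theta$, where $\theta$ is the geodesic distance. The chord--arc inequality $\theta\le\tfrac\pi2\|x-x'\|$ for $\theta\in[0,\pi]$ then yields $\Lip_{\Sd}(f)\le\tfrac\pi2\sqrt7<5$. The only point needing care in this part is the almost-everywhere argument across kinks, and continuity of $f$ makes it routine.

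\emph{Separation for random points.} For independent uniform $x_i,x_j$, the inner product $\inn{x_i}{x_j}$ is a $1$-Lipschitz function of $x_i$ with mean $0$. By Lemma~\ref{lem:conc-models} it is sub-Gaussian with norm at most $\kappa/\sqrt d$, so $\PP(|\inn{x_i}{x_j}|>\tfrac18)\le2e^{-cd}$. A union bound over fewer than $n^2$ pairs gives failure probability at most $2n^2e^{-cd}\le1/n$ once $d\ge C_{\rm sep}\log n$. Nothing here is a real obstacle; the key observation is the counting step showing $|S(x)|\le1$.
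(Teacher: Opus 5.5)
Your proposal is correct and follows essentially the same route as the paper. The shared steps are interpolation from $\rho(1)=1$, disjointness of the caps $\{\inn{x_j}{x}>\tfrac34\}$ (your general-$k$ count is the paper's pairwise check $\|x_i+x_j\|^2\le\tfrac94$, extended to $k$ points), the bound $4\sqrt{1-(3/4)^2}=\sqrt7$ on the tangential derivative integrated along geodesics with the chord--arc factor $\pi/2$, and a spherical-concentration tail plus a union bound for separation; your hedge about degenerate crossings is unnecessary, since a great circle meets each small sphere $\{\inn{x_j}{x}=\tfrac34\}$ in at most two points.
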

\proofbutton{upper}

At $m=2n$, Corollary~\ref{cor:bln} forces $\Lip\ge c_1/\sqrt{\log(C_1nd)}$ while Proposition~\ref{prop:upper} achieves an absolute Lipschitz bound. Thus the two match up to the single logarithmic factor in the overparameterized endpoint. So the law is sharp at $m\asymp n$ for two-layer ReLU networks, with the same log gap that Theorem~\ref{thm:main} carries. Whether width-$m$ two-layer networks achieve $\widetilde O(\sqrt{n/m})$ for the whole range $m\ll n$ remains open; the experiments below are consistent with it up to log factors.

\paragraph{General concentration/localization data.} The proof of model (G) uses only two inputs from the Gaussian distribution: the finite-class concentration condition in Definition~\ref{def:conc}, and the high-probability localization event $\max_i\|x_i\|\le2$ after which the network is only tested on $B_2$. Consequently the Gaussian theorem extends verbatim to any distribution $\mu$ on $\R^d$ for which: (i) every bounded $L'$-Lipschitz function satisfies $\|h-\EE h\|_{\psi_2}\le\kappa L'/\sqrt d$; and (ii) $\PP(\|x\|>2)\le\delta/(8n)$ for the sample size and failure probability under consideration. For example, if $\mu$ has the stated Lipschitz concentration with parameter $\kappa=C\sqrt c$ and $\EE\|x\|\le1$, then $\PP(\|x\|>2)\le2\exp(-d/(C c))$, so the same conclusion holds once $d\ge Cc\log(16n/\delta)$. This is the precise form in which the argument goes beyond the Gaussian measure; no additional claim about arbitrary data distributions is used in the proof.

\paragraph{Skip connection.} We allowed $\inn v x+c$; the theorem for the class of \cite{BLN} follows by restriction. The skip term is where the canonical form funnels all degenerate units (out-of-domain kinks, orientation flips, constants), which is what makes Lemma~\ref{lem:canonical} exact.

\section{Numerical checks}\label{sec:numerics}

All with seed $20260707$; scripts in \texttt{numerics/}. These are sanity checks on the constructions and constants; no statement in the paper depends on them.

\emph{Rigidity.} For random canonical ReLU networks in $d\in\{3,5,10\}$, $m\in\{6,10,20\}$ --- including planted pairs on a common hyperplane with coefficients $\pm10^6$ (which canonicalization merges) and planted near-parallel pairs with coefficients $\pm10^4$ (which it does not) --- the sampled Lipschitz constant $\widehat L$ satisfies $\max_j|\alpha_j|\le2\widehat L$ in all $27$ trials; the near-parallel plants saturate at ratio $\max_j|\alpha_j|/(2\widehat L)=0.50$, exactly the mechanism of Lemma~\ref{lem:rigidity} (the gradient between the two planted hyperplanes has norm $\approx|\alpha|$). A larger sweep (dimension up to $40$, width up to $80$, again with $\pm10^6$ cancellation plants; \texttt{numerics/validate\_at\_scale.py}) passes identically.

\emph{Sphere factor and the $d=2$ failure.} A single cap unit on $\Sd$ ($d=8$) with $\alpha=1/\sqrt{1-t^2}$ has measured Lipschitz constant $1.000$ for each $t\in\{0,0.5,0.9,0.99,0.999\}$: the factor of Lemma~\ref{lem:rigidity-sphere} is exact. On $\mathbb S^1$, the four-unit even cycle of Proposition~\ref{prop:d2} has cancelling derivative jumps; the script displays one fixed numerical instance with ratio $4.99$, while the proposition gives the scalable construction proving that no absolute rigidity constant exists at $d=2$.

\emph{The law.} Sphere data, $d=24$, $n=192$, i.i.d.\ $\pm1$ labels ($\sigma^2=1$), widths $m\in\{24,96,384\}$. A width-$m$ ReLU network with unconstrained weights is trained to mean squared error below $\tfrac12=\sigma^2-\eps$ while its path norm $\sum_k|a_k|\,\|w_k\|+\|v\|$ (an upper bound on $\Lip$) is penalized, so the optimizer seeks a low-Lipschitz fitting network --- the adversarial direction. The measured $\widehat L$ (maximum tangential gradient norm over $6000$ sphere samples and the data) exceeds the floor $\sqrt{n/m}$ at every width:
\[
\begin{array}{rrrrr}
\toprule
m & \text{MSE} & \widehat L & \sqrt{n/m} & \widehat L/\sqrt{n/m}\\
\midrule
24 & 0.152 & 5.52 & 2.83 & 1.95\\
96 & 0.136 & 4.88 & 1.41 & 3.45\\
384 & 0.128 & 4.75 & 0.71 & 6.72\\
\bottomrule
\end{array}
\]
The measured constant stays near $5$ while the floor falls, so the ratio grows: these trained networks satisfy the lower bound comfortably but do not realize the $\sqrt{n/m}$ rate for $m\ll n$, consistent with the matching two-layer upper bound being open there (the penalty is a proxy and the optimization is not run to the true minimum).

\emph{Matching upper bound at $m\asymp n$.} The width-$2n$ construction of Proposition~\ref{prop:upper} was checked for $(n,d)\in\{(100,200),(200,400),(400,800),(800,1600)\}$: it interpolates exactly (maximum error $2.7\cdot10^{-15}$), its caps become disjoint once $d\gtrsim\log n$, and its sampled tangential-gradient bound is $\sqrt7\approx2.646$ at every scale, while the rigorous chord-metric Lipschitz bound in Proposition~\ref{prop:upper} is $(\pi/2)\sqrt7<5$ --- flat in $n$, as the lower bound predicts at $m\asymp n$. A larger run (rigidity to dimension $40$ and width $80$; the construction to $n=2000$) reproduces the same qualitative behavior.

\appendix

\section{Proofs}\label{app:proofs}

\subsection{Proofs for Section~\ref{sec:intro}}

\phantomsection\begin{proof}[Proof of Theorem~\textup{\ref{thm:main}}]\label{proof:main}\stmtbutton{thm:main}
By Lemma~\ref{lem:pl} take $\psi=\relu$ at width $(K-1)m\le\bar m$. Fix
\[
L^\ast:=c_0\,\eps\,\sqrt{\frac{n}{\bar m\log(C_0\bar mnd/\eps)}},
\]
with absolute $c_0$ small and $C_0$ large, chosen below, and set
\[
\Omega:=\Bigl\{\exists f\in\NN_{\bar m}:\ \Lip_D(f)\le L^\ast\ \text{and}\ \tfrac1n\textstyle\sum_i(f(x_i)-y_i)^2\le\sigma^2-\eps\Bigr\}.
\]
On $\Omega^c$ every fitting $f$ has $\Lip_D(f)>L^\ast$, which is the theorem; so it suffices to show $\PP(\Omega)\le\delta$.

\emph{Step 1 (localization and clipping).} In model (G) let $\pi(0):=0$ and $\pi(x):=x\min(1,2/\|x\|)$ for $x\ne0$, the metric projection of $\R^d$ onto the convex set $B_2$ (for $\|x\|>2$ it is $2x/\|x\|$, the nearest point of $B_2$); metric projections onto convex sets are $1$-Lipschitz, and $\pi(\R^d)=B_2=D$. Let $E_{\mathrm{loc}}:=\{\max_i\|x_i\|\le2\}$. Since $\sqrt d\,x_i\sim N(0,I_d)$, $\|x_i\|>2\iff\|\sqrt d\,x_i\|^2>4d$. If $X\sim\chi^2_d$, Chernoff's bound gives $\PP(X\ge4d)\le\exp(-(3-\log4)d/2)\le e^{-d/2}$; hence a union bound and $d\ge2\log(8n/\delta)$ give $\PP(E_{\mathrm{loc}}^c)\le n e^{-d/2}\le\delta/8$. In model (S) set $\pi:=\mathrm{id}$ and $E_{\mathrm{loc}}=$ the sure event.

Suppose $\Omega\cap E_{\mathrm{loc}}$ occurs, witnessed by $f$ with $L:=\Lip_D(f)\le L^\ast$. All $x_i\in D$, so by Lemma~\ref{lem:B0} ($\operatorname{diam}D\le4$) we get $\sup_D|f|\le B_0=2+4L^\ast$, hence $f|_D\in\AAA_{\bar m,L^\ast}$.

\emph{Step 2 (net).} Let $S\subset\AAA_{\bar m,L^\ast}$ be an internal $\tfrac\eps{32}$-net as in Proposition~\ref{prop:entropy}:
\[
\log|S|\le C\bar md\log\!\Bigl(\frac{64\,C\,\bar md(2+L^\ast)}{\eps}\Bigr).
\]
Define the finite class $\FF:=\{\clip\circ h\circ\pi:h\in S\}$. Each member is defined on $\operatorname{supp}\mu$, has values in $[-1,1]$, and is $L^\ast$-Lipschitz (composition of the $1$-Lipschitz $\pi$ into $D$, an $L^\ast$-Lipschitz-on-$D$ function, and the $1$-Lipschitz $\clip$). Pick $h\in S$ with $\|h-f\|_{L^\infty(D)}\le\tfrac\eps{32}$, and set $\hat f:=\clip\circ h\circ\pi\in\FF$ and $\tilde f:=\clip\circ f\circ\pi$. Then $\|\hat f-\tilde f\|_{\infty}\le\tfrac\eps{32}$ (both equal $\clip\circ(\cdot)\circ\pi$ of functions within $\tfrac\eps{32}$ on $D$, and $\clip$ contracts). On $E_{\mathrm{loc}}$, $\pi(x_i)=x_i\in D$, so $\tilde f(x_i)=\clip(f(x_i))$; since $|y_i|\le1$, clipping $f(x_i)$ toward $[-1,1]\ni y_i$ can only decrease the error:
\[
(\tilde f(x_i)-y_i)^2\le(f(x_i)-y_i)^2 ,
\]
so $\tilde f$ also fits at level $\sigma^2-\eps$. With $a_i:=|\tilde f(x_i)-y_i|\le2$,
\[
\frac1n\sum_i(\hat f(x_i)-y_i)^2\le\frac1n\sum_i\Bigl(a_i+\frac\eps{32}\Bigr)^2\le(\sigma^2-\eps)+\frac{2\cdot2\eps}{32}+\frac{\eps^2}{1024}\le\sigma^2-\frac\eps2 .
\]
Hence $\Omega\cap E_{\mathrm{loc}}\subset\Omega':=\{\exists\hat f\in\FF:\tfrac1n\sum_i(\hat f(x_i)-y_i)^2\le\sigma^2-\tfrac\eps2\}$.

\emph{Step 3 (union bound and constants).} Apply Theorem~\ref{thm:finite} to $\FF$ (its members are $L^\ast$-Lipschitz and $[-1,1]$-valued; a member whose true Lipschitz constant is below $L^\ast$ satisfies the hypothesis a fortiori, and taking $L=L^\ast$ in the exponent below is the weakest admissible choice) at level $\eps/2$:
\[
\PP(\Omega')\le 4e^{-n\eps^2/2048}+\exp\Bigl(\log|S|-\frac{c_2n\eps^2}{4}\max\bigl(1,\tfrac{d}{\kappa^2(L^\ast)^2}\bigr)\Bigr).
\]
The first term is $\le\delta/2$ when $n\ge C_0\eps^{-2}\log(8/\delta)$ with $C_0\ge2048$. For the second, use $L^\ast\le c_0\eps\sqrt n\le\eps\sqrt n$ to collapse the logarithm: $2+L^\ast\le3\sqrt n$, so $\log(64C\bar md(2+L^\ast)/\eps)\le\log(C_0\bar mnd/\eps)$ for $C_0$ large, whence
\[
\log|S|\le C'\bar md\log\!\Bigl(\frac{C_0\bar mnd}{\eps}\Bigr)=:\mathcal E .
\]
We claim, with $c_0^2\le c_2/(8\kappa^2C')$ (and hence also $c_0^2\le c_2/(8C')$, as $\kappa\ge1$ WLOG),
\begin{equation}\label{eq:budget}
\mathcal E\ \le\ \frac{c_2}{8}\,n\eps^2\,\max\bigl(1,\tfrac{d}{\kappa^2(L^\ast)^2}\bigr).
\end{equation}
Two cases, and we substitute $(L^\ast)^2=c_0^2\eps^2 n/(\bar m\log(C_0\bar mnd/\eps))$ in each.
\begin{itemize}
\item[\emph{Case A}] ($L^\ast\le\sqrt d/\kappa$, so the $\max$ is $\tfrac d{\kappa^2(L^\ast)^2}$). The right side of \eqref{eq:budget} is
\[
\frac{c_2}{8}\cdot\frac{n\eps^2 d}{\kappa^2(L^\ast)^2}=\frac{c_2}{8\kappa^2}\cdot\frac{n\eps^2 d}{c_0^2\eps^2 n/(\bar m\log(\cdots))}=\frac{c_2}{8\kappa^2 c_0^2}\,\bar md\log\!\Bigl(\frac{C_0\bar mnd}{\eps}\Bigr).
\]
Since $c_0^2\le c_2/(8\kappa^2C')$, this is $\ge C'\bar md\log(\cdots)=\mathcal E$, which is \eqref{eq:budget} in this case.
\item[\emph{Case B}] ($L^\ast>\sqrt d/\kappa$, so the $\max$ is $1$). Right side $=\tfrac{c_2}8 n\eps^2$. The case hypothesis $L^\ast>\sqrt d/\kappa$ means $c_0^2\eps^2 n/(\bar m\log(\cdots))>d/\kappa^2$, i.e.\ cross-multiplying,
\[
\bar md\log\!\Bigl(\frac{C_0\bar mnd}{\eps}\Bigr)<c_0^2\kappa^2\,n\eps^2 .
\]
Hence $\mathcal E=C'\bar md\log(\cdots)<C'c_0^2\kappa^2 n\eps^2\le\tfrac{c_2}8 n\eps^2$, again by $c_0^2\le c_2/(8\kappa^2C')$, giving \eqref{eq:budget}.
\end{itemize}
Write $M:=\max(1,\tfrac d{\kappa^2(L^\ast)^2})\ge1$. By \eqref{eq:budget}, $\tfrac{c_2}8 n\eps^2 M\ge\mathcal E$, so $\tfrac{c_2}4 n\eps^2 M=\tfrac{c_2}8 n\eps^2 M+\tfrac{c_2}8 n\eps^2 M\ge\mathcal E+\tfrac{c_2}8 n\eps^2 M$. Since $\log|S|\le\mathcal E$, the exponent is
\[
\log|S|-\frac{c_2}4 n\eps^2 M\ \le\ \mathcal E-\mathcal E-\frac{c_2}8 n\eps^2 M\ =\ -\frac{c_2}8 n\eps^2 M\ \le\ -\frac{c_2}8 n\eps^2 .
\]
Hence the second term is $\le\exp(-\tfrac{c_2}8 n\eps^2)\le\delta/4$ whenever $\tfrac{c_2}8 n\eps^2\ge\log(4/\delta)$, i.e.\ whenever $n\ge\tfrac8{c_2}\eps^{-2}\log(4/\delta)$, which holds under $n\ge C_0\eps^{-2}\log(8/\delta)$ once $C_0\ge8/c_2$. Altogether
\[
\PP(\Omega)\le\PP(E_{\mathrm{loc}}^c)+\PP(\Omega')\le\frac\delta8+\frac\delta2+\frac\delta4\le\delta.\qedhere
\]
\end{proof}

\phantomsection\begin{proof}[Proof of Corollary~\textup{\ref{cor:bln}}]\label{proof:bln}\stmtbutton{cor:bln}

With $y_i$ independent of $x_i$: $g\equiv0$, $z_i=y_i$, $\sigma^2=\EE\Var(y\mid x)=1$; take $\eps=\tfrac12$. Exact fitting gives mean squared error $0\le\sigma^2-\eps$, and ``error $\le\tfrac12=\sigma^2-\tfrac12$'' is the stated relaxation. Apply Theorem~\ref{thm:main} in model (S), $d\ge3$, and absorb $\bar m\le Km$ into the constants for fixed $K$; for ReLU $\bar m=m+1$.
\end{proof}

\phantomsection\begin{proof}[Proof of Corollary~\textup{\ref{cor:simul}}]\label{proof:simul}\stmtbutton{cor:simul}
Apply Theorem~\ref{thm:main} to each width $m=1,\dots,M$ with failure probability $\delta/M$, and take a union bound. The lower-bound formula itself is unchanged because the threshold in Theorem~\ref{thm:main} does not depend on $\delta$; only the sample-size and Gaussian-localization hypotheses acquire the factor $M$ inside the logarithm.
\end{proof}

\phantomsection\begin{proof}[Proof of Theorem~\textup{\ref{thm:kink}}]\label{proof:kink}\stmtbutton{thm:kink}
For an integer $q\ge1$ and a number $L>0$, put $B_0(L):=2+4L$ and let
\[
\begin{aligned}
\AAA^{(q)}_L:=\{f|_D:\;& f\ \text{is a two-layer piecewise-linear realized function},\\
& k(f)+1\le q,\quad \Lip_D(f)\le L,\quad \sup_D|f|\le B_0(L)\}.
\end{aligned}
\]
By Lemma~\ref{lem:canonical}, every member of this class has a canonical form with at most $q-1$ ReLU kinks. Proposition~\ref{prop:entropy}, used with width budget $q$ (the harmless extra unit also covers the purely affine case), gives
\[
        \log N(\AAA^{(q)}_L,\|\cdot\|_\infty,\eta)\le Cqd\log\!\Bigl(\frac{Cqd(2+L)}{\eta}\Bigr),
\]
with an internal net of the same size up to constants.

In model (G), first remove the single localization event $E_{\mathrm{loc}}^c=\{\max_i\|x_i\|>2\}$; the stated condition $d\ge2\log(8n/\delta)$ gives $\PP(E_{\mathrm{loc}}^c)\le\delta/8$, exactly as in the proof of Theorem~\ref{thm:main}. The remaining union over kink counts is performed on $E_{\mathrm{loc}}$.

Fix $q\in\{1,\dots,n+1\}$ and set
\[
        L_q^\ast:=c_0\eps\sqrt{\frac{n}{q\log(C_0qnd/\eps)}}.
\]
Repeating the proof of Theorem~\ref{thm:main}, with $\AAA_{\bar m,L^\ast}$ replaced by $\AAA^{(q)}_{L_q^\ast}$ and with failure budget $\delta_q:=\delta/(2q(q+1))$, shows that, apart from the already-separated localization event, the probability of
\[
\exists f:\ k(f)+1\le q,\quad \Lip_D(f)\le L_q^\ast,\quad
\frac1n\sum_i(f(x_i)-y_i)^2\le\sigma^2-\eps
\]
is at most $\delta_q$, provided
\[
        n\ge C\eps^{-2}\log(8/\delta_q).
\]
For $q\le n+1$, this follows from $n\ge C_0\eps^{-2}\log(8n/\delta)$ after increasing the absolute constant $C_0$, since $\log(8/\delta_q)\le C\log(8n/\delta)$.

Summing over $q=1,\dots,n+1$ gives total non-localization failure probability at most
\[
        \sum_{q=1}^{n+1}\frac{\delta}{2q(q+1)}<\frac\delta2.
\]
Together with the localization failure probability $\delta/8$, this is still less than $\delta$.
On the complementary event, take $q=k(f)+1$ for any fitting function with $k(f)\le n$; the displayed lower bound is exactly the asserted one.
\end{proof}

\phantomsection\begin{proof}[Proof of Corollary~\textup{\ref{cor:structured}}]\label{proof:structured}\stmtbutton{cor:structured}
The input-output map of any single-hidden-layer piecewise-linear ridge architecture is a finite sum of terms of the form $a\psi(\langle w,x\rangle+b)$ plus an affine part, possibly with constraints or identifications among the allowed $w$'s. Such constraints can only reduce the class. If the realized function has at most $K_0$ distinct canonical kink hyperplanes, Lemma~\ref{lem:canonical} writes it with at most $K_0$ ReLU kink units on the domain. The proof of Theorem~\ref{thm:main}, with the entropy bound read at width budget $K_0+1$, gives the stated fixed-$K_0$ conclusion. When $K_0\le n$ and the sample size meets the hypothesis of Theorem~\ref{thm:kink}, that theorem gives the simultaneous realized-kink version on its own event. A convolutional layer with $F$ filters evaluated at $S$ positions has at most $FS$ ridge preactivations, and a $K$-piece activation contributes at most $K-1$ kink hyperplanes per preactivation, so $K_0\le(K-1)FS$.
\end{proof}

\phantomsection\begin{proof}[Proof of Corollary~\textup{\ref{cor:vector}}]\label{proof:vector}\stmtbutton{cor:vector}
Run the scalar theorem for each coordinate whose noise level satisfies $\sigma_\ell^2\ge\eps/r$, with accuracy parameter $\eps/r$ and failure probability $\delta/r$, and intersect the resulting events. Coordinates with $\sigma_\ell^2<\eps/r$ cannot be responsible for an empirical improvement of size $\eps/r$, because their empirical squared error is nonnegative. On the intersection event, every scalar coordinate function fitting its own coordinate labels at least $\eps/r$ below its coordinate noise floor obeys the displayed scalar lower bound.

Now suppose a vector-valued $f$ violates the conclusion while fitting the vector labels $\eps$ below the total noise floor. Write
\[
        \mathrm{Fit}_\ell=\frac1n\sum_i(f_\ell(x_i)-y_{i\ell})^2.
\]
The hypothesis gives
\[
        \sum_{\ell=1}^r(\sigma_\ell^2-\mathrm{Fit}_\ell)\ge\eps,
\]
so for some coordinate $\ell$ one has $\mathrm{Fit}_\ell\le\sigma_\ell^2-\eps/r$. This coordinate uses at most $K_0$ of the distinct kink hyperplanes used by the whole vector map. The scalar bound applied to $f_\ell$ gives the displayed lower bound for $\Lip_D(f_\ell)$. Finally,
\[
        \Lip_D(f)=\sup_{x\ne x'}\frac{\|f(x)-f(x')\|_2}{\|x-x'\|}\ge
        \sup_{x\ne x'}\frac{|f_\ell(x)-f_\ell(x')|}{\|x-x'\|}=\Lip_D(f_\ell),
\]
so the same bound holds for the vector map.
\end{proof}

\subsection{Proofs for Section~\ref{sec:canonical}}

\phantomsection\begin{proof}[Proof of Lemma~\textup{\ref{lem:pl}}]\label{proof:pl}\stmtbutton{lem:pl}
Call the right-hand side of \eqref{eq:plident} $R(t)$. Both $\psi$ and $R$ are continuous (each $\relu(\cdot-\tau_\kappa)$ is continuous) and piecewise linear with breakpoints contained in $\{\tau_1,\dots,\tau_{K-1}\}$. Two continuous piecewise-linear functions with breakpoints in a common finite set coincide everywhere as soon as they agree at one point and have equal slopes on every piece. They agree at $t=\tau_1$: there $R(\tau_1)=\psi(\tau_1)+0+0=\psi(\tau_1)$, since $\relu(\tau_1-\tau_\kappa)=0$ for $\kappa\ge1$ (as $\tau_1\le\tau_\kappa$) and the linear term vanishes. On $(-\infty,\tau_1)$ every $\relu(t-\tau_\kappa)=0$, so $R$ has slope $s_0$, matching $\psi$. On $(\tau_\kappa,\tau_{\kappa+1})$ the active kinks are exactly $\tau_1,\dots,\tau_\kappa$, so $R$ has slope $s_0+\sum_{j=1}^\kappa(s_j-s_{j-1})=s_\kappa$, matching $\psi$. Hence $R\equiv\psi$.

Now apply \eqref{eq:plident} with $t=\inn{w_k}{x}+b_k$ to each unit of $f$: $a_k\psi(\inn{w_k}{x}+b_k)$ becomes an affine function of $x$ plus $\sum_{\kappa=1}^{K-1}a_k(s_\kappa-s_{\kappa-1})\relu(\inn{w_k}{x}+b_k-\tau_\kappa)$, i.e.\ $K-1$ ReLU units with the shifted biases $b_k-\tau_\kappa$. Summing over $k$ and absorbing all the affine terms into $\inn{v}{x}+c$ produces a ReLU network of width $\le(K-1)m$ equal to $f$ everywhere.
\end{proof}

\phantomsection\begin{proof}[Proof of Lemma~\textup{\ref{lem:canonical}}]\label{proof:canonical}\stmtbutton{lem:canonical}
We transform the units of $f$ one at a time; every operation preserves the value of $f$ on $D$.

\emph{Step 1 (constant units).} A unit with $w_k=0$ is the constant $a_k\relu(b_k)$; move it into $c$.

\emph{Step 2 (normalization).} For $w_k\ne0$, write $u=w_k/\|w_k\|$, $t=-b_k/\|w_k\|$, $\alpha=a_k\|w_k\|$. Since $\relu(\lambda z)=\lambda\relu(z)$ for $\lambda>0$ and $\inn{w_k}{x}+b_k=\|w_k\|(\inn{u}{x}-t)$, the unit equals $\alpha\relu(\inn{u}{x}-t)$ with $\|u\|=1$.

\emph{Step 3 (orientation, via the reflection identity).} The identity
\begin{equation}\label{eq:flip}
\relu(-z)=\relu(z)-z
\end{equation}
(true because $\max(-z,0)-\max(z,0)=-z$) lets us replace $(u,t)$ by $(-u,-t)$ at the cost of an affine term:
\[
\alpha\relu(\inn{u}{x}-t)=\alpha\relu\bigl(\inn{-u}{x}-(-t)\bigr)+\alpha\bigl(\inn{u}{x}-t\bigr),
\]
the last summand being absorbed into $\inn{v}{x}+c$. We use \eqref{eq:flip} to enforce an orientation convention below.

\emph{Step 4 (units whose kink misses the domain).} Ball case: if $t\ge R$ then $\inn{u}{x}-t\le\|x\|-t\le R-t\le0$ on $B_R$ with equality only where $\|x\|=R$ and $x=Ru$, so $\relu(\inn{u}{x}-t)\equiv0$ on $B_R$; drop it. If $t\le-R$ then $\inn{u}{x}-t\ge0$ on $B_R$, so $\relu(\inn{u}{x}-t)=\inn{u}{x}-t$ is affine there; absorb it. Sphere case: apply \eqref{eq:flip} to make $t\ge0$; if $t\ge1$ then $\inn{u}{x}\le\|x\|=1\le t$ on $\Sd$, so the unit is $0$ on $\Sd$ except possibly at the single point $x=u$ (when $t=1$), where $\relu(0)=0$ as well; drop it. After Step~4, ball units have $t\in(-R,R)$ and sphere units have $t\in[0,1)$.

\emph{Step 5 (orientation convention and coincident hyperplanes).} Two pairs $(u,t)\ne(u',t')$ with $\|u\|=\|u'\|=1$ satisfy $H_{u,t}=H_{u',t'}$ iff $(u',t')=(-u,-t)$. In the ball case, fix the convention that the first nonzero coordinate of $u$ is positive, using \eqref{eq:flip} to flip any offending unit; then coincident hyperplanes force identical $(u,t)$. In the sphere case, the convention $t\ge0$ already forces coincident hyperplanes to be identical, except when $t=t'=0$ and $u'=-u$; there, adopt the ball convention (first nonzero coordinate of $u$ positive) and apply \eqref{eq:flip} once to rewrite the offending $(-u,0)$ unit onto $(u,0)$ (plus an affine term). After Step~5, distinct units have distinct hyperplanes.

\emph{Step 6 (merge and clean).} Add the coefficients of units that now share a hyperplane; discard any unit whose merged coefficient is $0$. The remaining units have pairwise distinct hyperplanes and nonzero coefficients, with $t$-ranges as claimed. The geometric descriptions in (i)--(ii) are immediate: $H_{u,t}\cap\operatorname{int}B_R$ is the open disk of radius $\sqrt{R^2-t^2}$ centered at $tu$ (nonempty since $|t|<R$), and $H_{u,t}\cap\Sd$ is the sphere $\{x:\|x\|=1,\ \inn{u}{x}=t\}$, which is a $(d-2)$-sphere of radius $\sqrt{1-t^2}$ centered at $tu$ (nonempty since $t<1$).
\end{proof}

\subsection{Proofs for Section~\ref{sec:rigid}}

\phantomsection\begin{proof}[Proof of Lemma~\textup{\ref{lem:rigidity}}]\label{proof:rigidity}\stmtbutton{lem:rigidity}
Fix $j$ and let $P:=H_{u_j,t_j}$, a hyperplane, and $A:=P\cap\operatorname{int}B_R$, a nonempty relatively open $(d-1)$-disk (Lemma~\ref{lem:canonical}(i)).

\emph{A generic point exists.} For each $l\ne j$, $P\cap H_{u_l,t_l}$ is either empty (parallel distinct hyperplanes) or an affine subspace of dimension $d-2$ (distinct, non-parallel), hence in either case a set of $(d-1)$-dimensional Lebesgue measure $0$ inside $P$. A finite union of measure-zero sets has measure $0$, while $A$ has positive $(d-1)$-measure; therefore
\[
U:=A\setminus\bigcup_{l\ne j}H_{u_l,t_l}
\]
has positive measure, in particular $U\ne\emptyset$. Fix $x^*\in U$. Because the finitely many closed sets $H_{u_l,t_l}$ ($l\ne j$) and $\partial B_R$ all avoid $x^*$, there is $r>0$ with $\overline{B(x^*,r)}\subset\operatorname{int}B_R$ and $\overline{B(x^*,r)}\cap H_{u_l,t_l}=\emptyset$ for all $l\ne j$.

\emph{Only unit $j$ switches near $x^*$.} On the connected set $B(x^*,r)$, each $\inn{u_l}{x}-t_l$ ($l\ne j$) has constant sign, so $\relu(\inn{u_l}{x}-t_l)$ is affine there. Hence on $B(x^*,r)$
\[
f(x)=A_0(x)+\alpha_j\relu(\inn{u_j}{x}-t_j),\qquad A_0\ \text{affine.}
\]

\emph{The one-sided derivatives.} Let $\varphi(s):=f(x^*+su_j)$ for $|s|<r$. Since $\inn{u_j}{x^*}=t_j$ and $\|u_j\|=1$, we have $\inn{u_j}{x^*+su_j}-t_j=s$, so
\[
\varphi(s)=A_0(x^*+su_j)+\alpha_j\relu(s)=\bigl(A_0(x^*)+\beta s\bigr)+\alpha_j\relu(s),\qquad \beta:=\inn{\nabla A_0}{u_j}.
\]
This is piecewise linear in $s$ with a single kink at $0$: for $s<0$ its slope is $\beta$, for $s>0$ its slope is $\beta+\alpha_j$. The composition $s\mapsto x^*+su_j$ is an isometry (as $\|u_j\|=1$), so $\varphi$ is $L$-Lipschitz. Every difference quotient of an $L$-Lipschitz function lies in $[-L,L]$, and here the left and right slopes are exactly the one-sided derivatives $\varphi'(0^-)=\beta$ and $\varphi'(0^+)=\beta+\alpha_j$. Thus $\beta\in[-L,L]$ and $\beta+\alpha_j\in[-L,L]$, and subtracting gives $|\alpha_j|=|(\beta+\alpha_j)-\beta|\le2L$.
\end{proof}

\phantomsection\begin{proof}[Proof of Lemma~\textup{\ref{lem:rigidity-sphere}}]\label{proof:rigidity-sphere}\stmtbutton{lem:rigidity-sphere}
Fix $j$ and let $\Sigma_j:=H_{u_j,t_j}\cap\Sd$, a $(d-2)$-sphere of radius $\rho:=\sqrt{1-t_j^2}>0$ centered at $t_ju_j$ inside the hyperplane $P:=H_{u_j,t_j}$ (Lemma~\ref{lem:canonical}(ii)).

\emph{A generic point on $\Sigma_j$ exists.} Fix $l\ne j$. If $\Sigma_j\subset H_{u_l,t_l}$, then $H_{u_l,t_l}$ contains the affine hull $\aff(\Sigma_j)$. For $d\ge3$ the sphere $\Sigma_j$ has dimension $d-2\ge1$ and positive radius, so it affinely spans $P$; thus $\aff(\Sigma_j)=P$ and $H_{u_l,t_l}\supset P$, forcing $H_{u_l,t_l}=P$ (both are hyperplanes), i.e.\ the two hyperplanes coincide --- excluded. Therefore $H_{u_l,t_l}\cap\Sigma_j$ is a \emph{proper} closed subset of $\Sigma_j$; being the intersection of the sphere $\Sigma_j$ with a hyperplane that does not contain it, it is a sphere of dimension $\le d-3$, a single point, or empty, hence nowhere dense in $\Sigma_j$. A finite union of nowhere-dense sets cannot be all of the complete metric space $\Sigma_j$ (Baire), so there is $x^*\in\Sigma_j$ with $x^*\notin H_{u_l,t_l}$ for all $l\ne j$.

\emph{A tangent direction along which unit $j$ switches.} Put
\[
\xi:=\frac{u_j-t_jx^*}{\rho}.
\]
Then, using $\inn{u_j}{x^*}=t_j$ and $\|x^*\|=1$:
\[
\|u_j-t_jx^*\|^2=\|u_j\|^2-2t_j\inn{u_j}{x^*}+t_j^2\|x^*\|^2=1-2t_j^2+t_j^2=1-t_j^2=\rho^2,
\]
so $\|\xi\|=1$; and
\[
\inn{\xi}{x^*}=\frac{\inn{u_j}{x^*}-t_j\|x^*\|^2}{\rho}=\frac{t_j-t_j}{\rho}=0,\qquad
\inn{u_j}{\xi}=\frac{\|u_j\|^2-t_j\inn{u_j}{x^*}}{\rho}=\frac{1-t_j^2}{\rho}=\rho.
\]
Consider the unit-speed great circle $\gamma(s)=(\cos s)\,x^*+(\sin s)\,\xi$; it lies on $\Sd$ because $\|x^*\|=\|\xi\|=1$ and $\inn{x^*}{\xi}=0$. Then
\[
h(s):=\inn{u_j}{\gamma(s)}-t_j=(\cos s-1)\,t_j+(\sin s)\,\rho,
\]
so $h(0)=0$ and $h'(0)=\rho>0$: unit $j$ switches at $s=0$, and it does so transversally. For small $s$, $\relu(h(s))$ has left derivative $0$ and right derivative $h'(0)=\rho$ at $s=0$.

\emph{The other units are smooth at $s=0$.} For $l\ne j$, $\inn{u_l}{\gamma(0)}-t_l=\inn{u_l}{x^*}-t_l\ne0$ (as $x^*\notin H_{u_l,t_l}$), so by continuity $\inn{u_l}{\gamma(s)}-t_l$ keeps its sign for $s$ near $0$ and $\relu(\inn{u_l}{\gamma(s)}-t_l)$ is smooth (affine composed with the analytic $\gamma$) there; the affine part $\inn{v}{\gamma(s)}+c$ is smooth as well.

\emph{Conclusion.} Let $\varphi(s):=f(\gamma(s))$. Chords are bounded by arcs: $\|\gamma(s)-\gamma(s')\|=2|\sin\tfrac{s-s'}2|\le|s-s'|$. Since $f$ is $L$-Lipschitz on $\Sd$ for the Euclidean metric, $|\varphi(s)-\varphi(s')|\le L\,\|\gamma(s)-\gamma(s')\|\le L|s-s'|$, i.e.\ $\varphi$ is $L$-Lipschitz near $0$. All summands of $f\circ\gamma$ except unit $j$ are differentiable at $0$; unit $j$ contributes $\alpha_j\relu(h(s))$, whose one-sided derivatives at $0$ differ by $\alpha_j h'(0)=\alpha_j\rho$. Hence $\varphi'(0^+)-\varphi'(0^-)=\alpha_j\rho$, and both one-sided derivatives lie in $[-L,L]$, so $|\alpha_j|\rho\le2L$.
\end{proof}

\phantomsection\begin{proof}[Proof of Proposition~\textup{\ref{prop:d2}}]\label{proof:d2}\stmtbutton{prop:d2}
Parameterize $\mathbb S^1$ by angle $\theta$.  A unit
\[
        \alpha\,\relu(\inn{u}{x}-t),\qquad u=(\cos\theta_0,\sin\theta_0),\quad t=\cos a,
\]
with $a\in(0,\pi)$ becomes
\[
        \alpha\,\relu(\cos(\theta-\theta_0)-\cos a).
\]
If $a<\pi$ and the active arc is not wrapped around the cut, its kink set is the two angles $\theta_0\pm a$.  On the active arc the angular derivative is $-\alpha\sin(\theta-\theta_0)$ and outside it is $0$.  Hence the derivative jump at each of the two kink angles is $\alpha\sin a=\alpha\sqrt{1-t^2}$.

Fix once and for all $a\in(0,\pi/4)$, and choose $\eta\in(0,a/10)$ later.  Put
\[
        A=-a,\qquad B=-a+\eta,\qquad C=a-\eta,\qquad D=a.
\]
Consider the four arcs
\[
        I_1=[A,D],\qquad I_2=[B,D],\qquad I_3=[B,C],\qquad I_4=[A,C].
\]
For an interval $I=[p,q]$ write $c(I)=(p+q)/2$ and $r(I)=(q-p)/2$.  Define
\[
\begin{array}{c|c|c|c}
 j & I_j & \theta_j=c(I_j) & a_j=r(I_j) \\
\hline
 1 &[A,D] & 0 & a\\
 2 &[B,D] & \eta/2 & a-\eta/2\\
 3 &[B,C] & 0 & a-\eta\\
 4 &[A,C] & -\eta/2 & a-\eta/2,
\end{array}
\]
and choose signs $s_1=+1,s_2=-1,s_3=+1,s_4=-1$.  Let
\[
        F(\theta)=\sum_{j=1}^4 \alpha_j\relu(\cos(\theta-\theta_j)-\cos a_j),
        \qquad \alpha_j=s_j\frac{\Lambda}{\sin a_j}.
\]
The four kink point-pairs are precisely $\{A,D\}$, $\{B,D\}$, $\{B,C\}$ and $\{A,C\}$, hence are distinct.  Also $|\alpha_j|\sqrt{1-\cos^2 a_j}=|\alpha_j|\sin a_j=\Lambda$ for every $j$.

At each of the four kink angles $A,B,C,D$, exactly two units meet, one with sign $+1$ and one with sign $-1$.  Their derivative jumps are therefore $+\Lambda$ and $-\Lambda$, so all derivative jumps cancel.  Thus $F$ is $C^1$ as a function of $\theta$.

It remains to bound the derivative.  Outside $[A,D]$ every unit is inactive.  On the three subarcs $[A,B]$, $[B,C]$ and $[C,D]$, direct differentiation gives
\[
\frac{F'(\theta)}{\Lambda}=G_\eta(\theta),
\]
where
\[
\begin{aligned}
G_\eta(\theta)&=-\frac{\sin\theta}{\sin a}+\frac{\sin(\theta+\eta/2)}{\sin(a-\eta/2)}, &&\theta\in[A,B],\\
G_\eta(\theta)&=-\frac{\sin\theta}{\sin a}+\frac{\sin(\theta-\eta/2)}{\sin(a-\eta/2)}-\frac{\sin\theta}{\sin(a-\eta)}+
        \frac{\sin(\theta+\eta/2)}{\sin(a-\eta/2)}, &&\theta\in[B,C],\\
G_\eta(\theta)&=-\frac{\sin\theta}{\sin a}+\frac{\sin(\theta-\eta/2)}{\sin(a-\eta/2)}, &&\theta\in[C,D].
\end{aligned}
\]
For $\eta=0$ each displayed expression is identically zero.  Since $a$ is fixed away from $0$, all denominators stay bounded below for $0\le\eta\le a/10$, and the derivatives of the displayed expressions with respect to $\eta$ are uniformly bounded for $\theta\in[-a,a]$.  The mean-value theorem therefore gives a constant $C_a$ such that
\[
        \sup_\theta |F'(\theta)|\le C_a\Lambda\eta .
\]
Because $F\in C^1$, its Lipschitz constant in the arclength (angular) metric equals $\sup_\theta|F'(\theta)|$, and the Euclidean chord metric on $\mathbb S^1$ is within a factor $\pi/2$ of angular distance on arcs of length at most $\pi$.  Hence
\[
        \Lip_{\mathbb S^1}(F)\le (\pi/2)C_a\Lambda\eta.
\]
Choosing $\eta\le\min\bigl(a/10,\,2/(\pi C_a R)\bigr)$ gives $\Lip_{\mathbb S^1}(F)\le\Lambda/R$, as required.
\end{proof}

\phantomsection\begin{proof}[Proof of Lemma~\textup{\ref{lem:B0}}]\label{proof:B0}\stmtbutton{lem:B0}
The average of the $n$ nonnegative numbers $(f(x_i)-y_i)^2$ is at most $1$, so at least one of them is at most $1$; for that $i$, $|f(x_i)-y_i|\le1$, hence $|f(x_i)|\le1+|y_i|\le2$. For any $x\in D$, $|f(x)|\le|f(x_i)|+L\|x-x_i\|\le2+L\operatorname{diam}(D)\le2+4L$.
\end{proof}

\phantomsection\begin{proof}[Proof of Lemma~\textup{\ref{lem:affine}}]\label{proof:affine}\stmtbutton{lem:affine}
(i) The union $\bigcup_j H_{u_j,t_j}$ has measure $0$, so pick $x_0\in\operatorname{int}B_R$ off all $m_0$ hyperplanes; there $f$ is differentiable with
\[
\nabla f(x_0)=v+\sum_{j:\,\inn{u_j}{x_0}>t_j}\alpha_ju_j.
\]
A differentiable point of an $L$-Lipschitz function has $\|\nabla f(x_0)\|\le L$. By Lemma~\ref{lem:rigidity}, $\|\sum_{\text{active}}\alpha_ju_j\|\le\sum_j|\alpha_j|\le2Lm_0$, so $\|v\|\le\|\nabla f(x_0)\|+2Lm_0\le L(1+2m_0)$. Evaluating \eqref{eq:canon} at $x=0$ gives $c=f(0)-\sum_j\alpha_j\relu(-t_j)$, and $|\alpha_j|\relu(-t_j)\le2L\cdot|t_j|\le2L\cdot R\le4L$, whence $|c|\le B_0+4Lm_0$.

(ii) Let $x\sim\mathrm{Unif}(\Sd)$. By rotational invariance $\EE[x]=0$ and $\EE[xx^\top]=\tfrac1d I_d$ (it is a scalar multiple of $I_d$ by symmetry, and its trace is $\EE\|x\|^2=1$). Hence
\[
\EE[f(x)\,x]=\frac vd+\sum_j\alpha_j\,\EE\!\bigl[\relu(\inn{u_j}{x}-t_j)\,x\bigr].
\]
For a fixed unit $u$, the map $x\mapsto\relu(\inn{u}{x}-t)\,x$ has expectation invariant under all rotations fixing $u$, so $\EE[\relu(\inn{u}{x}-t)\,x]=\lambda(t)\,u$ for a scalar $\lambda(t)=\inn{\EE[\relu(\inn{u}{x}-t)x]}{u}=\EE[\relu(\inn{u}{x}-t)\inn{u}{x}]$. Since $|\inn{u}{x}|\le1$ on $\Sd$,
\[
0\le\lambda(t)\le\EE\,\relu(\inn{u}{x}-t)\le(1-t)\,\PP(\inn{u}{x}>t)\le1-t.
\]
By Lemma~\ref{lem:rigidity-sphere}, $|\alpha_j|\lambda(t_j)\le\dfrac{2L}{\sqrt{1-t_j^2}}(1-t_j)=2L\sqrt{\dfrac{1-t_j}{1+t_j}}\le2L$. Therefore
\[
\|v\|=\Bigl\|d\,\EE[f(x)x]-d\sum_j\alpha_j\lambda(t_j)u_j\Bigr\|\le d\bigl(\EE|f|+\textstyle\sum_j|\alpha_j|\lambda(t_j)\bigr)\le d(B_0+2Lm_0).
\]
Finally, at any $x_1\in\Sd$, $c=f(x_1)-\inn{v}{x_1}-\sum_j\alpha_j\relu(\inn{u_j}{x_1}-t_j)$ and $|\alpha_j|\relu(\inn{u_j}{x_1}-t_j)\le|\alpha_j|(1-t_j)\le2L$ (again by Lemma~\ref{lem:rigidity-sphere} and $1-t\le\sqrt{1-t^2}$ for $t\in[0,1)$), giving $|c|\le B_0+\|v\|+2Lm_0$.
\end{proof}

\subsection{Proofs for Section~\ref{sec:entropy}}

\phantomsection\begin{proof}[Proof of Proposition~\textup{\ref{prop:entropy}}]\label{proof:entropy}\stmtbutton{prop:entropy}
Everything is a Lipschitz-in-parameters estimate followed by a product of one-dimensional grids. We build a finite set $\NN^\ast\subset\GG_{\bar m,L}$ such that every $g\in\GG_{\bar m,L}$ has some $g^\ast\in\NN^\ast$ with $\|g-g^\ast\|_{L^\infty(D)}\le\eps'$, and bound $|\NN^\ast|$.

\emph{Per-unit sensitivity (ball).} On $B_2$, for one ReLU unit,
\begin{align*}
\bigl|\alpha\relu(\inn{u}{x}-t)-\alpha'\relu(\inn{u'}{x}-t')\bigr|
&\le\bigl|\alpha-\alpha'\bigr|\cdot\relu(\inn{u}{x}-t)\\
&\quad+|\alpha'|\cdot\bigl|\relu(\inn{u}{x}-t)-\relu(\inn{u'}{x}-t')\bigr|.
\end{align*}
On $B_2$ with $|t|\le2$ we have $\relu(\inn u x-t)\le|\inn u x-t|\le\|x\|+|t|\le4$, and since $\relu$ is $1$-Lipschitz,
\[
\bigl|\relu(\inn u x-t)-\relu(\inn{u'}x-t')\bigr|\le|\inn{u-u'}x|+|t-t'|\le2\|u-u'\|+|t-t'|.
\]
So the per-unit change is at most $4|\alpha-\alpha'|+|\alpha'|(2\|u-u'\|+|t-t'|)$, and with $|\alpha'|\le2L$,
\begin{equation}\label{eq:unitsens}
\le 4|\alpha-\alpha'|+2L\bigl(2\|u-u'\|+|t-t'|\bigr).
\end{equation}

\emph{Grids (ball).} Discretize, per unit:
\begin{itemize}
\item $\alpha_j\in[-2L,2L]$ on a grid of mesh $\eps'/(32\bar m)$: at most $1+\dfrac{4L}{\eps'/(32\bar m)}=1+\dfrac{128L\bar m}{\eps'}$ points; contributes $\le4\cdot\dfrac{\eps'}{32\bar m}=\dfrac{\eps'}{8\bar m}$ per unit.
\item $t_j\in(-2,2)$ on a grid of mesh $\eps'/(32L\bar m)$: at most $1+\dfrac{128L\bar m}{\eps'}$ points; contributes $\le2L\cdot\dfrac{\eps'}{32L\bar m}=\dfrac{\eps'}{16\bar m}$ per unit.
\item $u_j$ on a $\dfrac{\eps'}{64L\bar m}$-net of $\Sd$: at most $\bigl(1+\dfrac{128L\bar m}{\eps'}\bigr)^{d}$ points (the standard volumetric bound $N(\Sd,\rho)\le(1+2/\rho)^d$; \cite[Cor.~4.2.13]{Vershynin}); contributes $\le2L\cdot2\cdot\dfrac{\eps'}{64L\bar m}=\dfrac{\eps'}{16\bar m}$ per unit.
\end{itemize}
Summing the three per-unit contributions gives $\le\dfrac{\eps'}{8\bar m}+\dfrac{\eps'}{16\bar m}+\dfrac{\eps'}{16\bar m}=\dfrac{\eps'}{4\bar m}$; over $\le\bar m$ units, $\le\eps'/4$.
Discretize the affine part:
\begin{itemize}
\item $v$ on a $(\eps'/16)$-net of $\{\|v\|\le L(1+2\bar m)\}\subset\R^d$: at most $\bigl(1+\dfrac{32L(1+2\bar m)}{\eps'}\bigr)^d$ points; contributes $\le2\cdot\dfrac{\eps'}{16}=\dfrac{\eps'}8$ (using $|\inn{v-v'}x|\le2\|v-v'\|$ on $B_2$).
\item $c$ on a grid of mesh $\eps'/8$ in $[-(B_0+4L\bar m),B_0+4L\bar m]$: contributes $\le\eps'/8$.
\end{itemize}
Total change $\le\eps'/4+\eps'/8+\eps'/8=\eps'/2\le\eps'$. Finally sum over the choice $m_0\in\{0,\dots,\bar m\}$ of active-unit count (a factor $\bar m+1$). Taking logarithms of the product of cardinalities,
\[
\log|\NN^\ast|\le(\bar md+2\bar m+d+1)\log\!\Bigl(\frac{C\bar m d(2+L)}{\eps'}\Bigr)+\log(\bar m+1)\le C\bar md\log\!\Bigl(\frac{C\bar md(2+L)}{\eps'}\Bigr).
\]

\emph{Grids (sphere).} On $\Sd$ we have $|\inn u x-t|\le2$, so the bulk of \eqref{eq:unitsens} is unchanged; the only issue is that a unit with $t$ near $1$ (a small spherical cap) has a large allowed $|\alpha|\le2L/\sqrt{1-t^2}$. Split the units. The plan: bulk units reuse the ball grids; cap units are first deleted when their sup-norm is negligible, and the survivors are binned dyadically in $\gamma=\sqrt{1-t}$, with the ball meshes rescaled by $\gamma_r$ inside each bin --- the rescaling exactly compensates the allowed coefficient $2L/\gamma_r$, so each bin contributes the ball-case error at the ball-case cardinality. Two bookkeeping points, once and for all. First, a grid center need not itself satisfy the $t$-dependent coefficient constraint $|\alpha|\le2L/\sqrt{1-t^2}$ at its gridded $t$: the grids produce an \emph{external} cover of $\GG_{\bar m,L}$, which suffices, since the closing paragraph of the proof converts any external $(\eps'/2)$-cover into an internal $\eps'$-net. Second, the assignment of each unit to its regime (bulk, one of the $R+1$ cap bins, or deleted) is part of the enumeration: it multiplies the count by at most $(R+3)^{\bar m}$, an additive $\bar m\log(R+3)\le C\bar m\log\bigl(C\bar md(2+L)/\eps'\bigr)$ in the logarithm, absorbed into the displayed bound.

\emph{Bulk units} ($t_j\le\tfrac12$): here $|\alpha_j|\le2L/\sqrt{1-\tfrac14}\le3L$, and the ball grids above apply verbatim (with the constant $3$ in place of $2$, absorbed into $C$).

\emph{Cap units} ($t_j\in(\tfrac12,1)$): write $\gamma:=\sqrt{1-t}\in(0,2^{-1/2})$. On $\Sd$,
\[
\sup_{x\in\Sd}\relu(\inn u x-t)=1-t=\gamma^2,\qquad |\alpha|\le\frac{2L}{\sqrt{1-t^2}}=\frac{2L}{\sqrt{(1-t)(1+t)}}\le\frac{2L}{\gamma},
\]
so the unit's sup-norm is at most $2L\gamma$. Delete every cap unit with $\gamma\le\gamma_{\min}:=\eps'/(64L\bar m)$; the total deletion cost is at most $\bar m\cdot2L\gamma_{\min}=\eps'/32$.

For the surviving cap units, use dyadic bins in $\gamma$.  Let $\gamma_r=2^r\gamma_{\min}$ and take the bins
\[
        I_r=[\gamma_r,2\gamma_r]\cap[\gamma_{\min},2^{-1/2}],\qquad r=0,1,\dots,R,
\]
where $R\le C\log(2+L\bar m/\eps')$.  In one such bin, $\gamma\in I_r$ implies $|\alpha|\le2L/\gamma_r$, $\sup\relu\le(2\gamma_r)^2=4\gamma_r^2$, and the $t$-interval has length at most $(2\gamma_r)^2-\gamma_r^2=3\gamma_r^2$.  Discretize inside the bin as follows:
\begin{itemize}
\item $\alpha\in[-2L/\gamma_r,2L/\gamma_r]$ on a grid of mesh $\eps'/(128\bar m\gamma_r^2)$.  Since $\sup\relu\le4\gamma_r^2$, this contributes at most $\eps'/(32\bar m)$; the number of grid points is at most $1+C L\bar m\gamma_r/\eps'\le C L\bar m/\eps'$.
\item $t$ on a grid of mesh $\eps'\gamma_r/(64L\bar m)$.  The ReLU map is $1$-Lipschitz in $t$, so the contribution is at most $(2L/\gamma_r)\cdot\eps'\gamma_r/(64L\bar m)=\eps'/(32\bar m)$; the number of grid points is at most $1+C L\bar m\gamma_r/\eps'\le C L\bar m/\eps'$.
\item $u$ on an $\eps'\gamma_r/(128L\bar m)$-net of $\Sd$.  The contribution is at most $(2L/\gamma_r)\cdot\eps'\gamma_r/(128L\bar m)=\eps'/(64\bar m)$; the number of net points is at most $\bigl(C L\bar m/(\eps'\gamma_r)\bigr)^d\le\bigl(C(L\bar m)^2/\eps'^2\bigr)^d$.
\end{itemize}
The factor $R$ for the choice of dyadic bin costs only $\log R\le C\log\bigl(C\bar md(2+L)/\eps'\bigr)$ after increasing constants (if $\gamma_{\min}\ge2^{-1/2}$ there are no surviving cap units).  Thus a cap unit has the same logarithmic count as in the ball case, up to the harmless factor $2d\log(C L\bar m/\eps')$ coming from the $u$-net.  Each surviving cap unit contributes at most $\eps'/(32\bar m)+\eps'/(32\bar m)+\eps'/(64\bar m)<\eps'/(8\bar m)$, and deleted caps contribute $\eps'/32$ in total.

The affine part uses the sphere bounds of Lemma~\ref{lem:affine}(ii): $\|v\|\le d(B_0+2L\bar m)$ enlarges the $v$-net's range by a factor $d$, costing an additive $\log d$. Each unit is gridded in exactly one regime (bulk, cap-bin, or deleted); enumerating these choices over at most $\bar m$ units is absorbed into $C\bar md\log(\cdots)$. Collecting terms, the sphere bound is again $\log|\NN^\ast|\le C\bar md\log\bigl(C\bar md(2+L)/\eps'\bigr)$. The sphere contributions to the $L^\infty$-error total as in the ball case---at most $\eps'/4$ over the units, $\eps'/32$ for the deleted caps, and $\eps'/8$ each for $v$ and $c$, hence at most $17\eps'/32<\eps'$.

\emph{Internal net.} First take an $(\eps'/2)$-cover of $\GG_{\bar m,L}$ with the cardinality just obtained. We now build an internal separated set inside $\AAA_{\bar m,L}$. Start with $S=\emptyset$ and, as long as there is a point of $\AAA_{\bar m,L}$ whose distance from all points already chosen is greater than $\eps'$, add such a point to $S$. Every time a point is added, the set $S$ is $\eps'$-separated. Since \eqref{eq:AinG} puts $S$ inside $\GG_{\bar m,L}$, no two points of $S$ can lie in the same $(\eps'/2)$-ball of the fixed cover; hence the process stops after at most $N(\GG_{\bar m,L},\eps'/2)$ additions. At stopping time, maximality says that every point of $\AAA_{\bar m,L}$ is within $\eps'$ of some point of $S$. Thus $S\subset\AAA_{\bar m,L}$ is an internal $\eps'$-net and
\[
        |S|\le N(\GG_{\bar m,L},\|\cdot\|_\infty,\eps'/2),
\]
which has the same logarithmic bound after adjusting the absolute constant.

\end{proof}

\subsection{Proofs for Section~\ref{sec:prob}}

\phantomsection\begin{proof}[Proof of Lemma~\textup{\ref{lem:conc-models}}]\label{proof:conc-models}\stmtbutton{lem:conc-models}
(a) L\'evy's concentration on the sphere, in sub-Gaussian form (see, e.g., \cite[Ch.~5]{Vershynin}), states that for the uniform measure on the sphere of radius $\sqrt d$ a $1$-Lipschitz function is $\psi_2$-close to its mean with an absolute constant. Given $f$ on $\Sd$ that is $L'$-Lipschitz, apply this to $z\mapsto f(z/\sqrt d)$ on the radius-$\sqrt d$ sphere, which is $(L'/\sqrt d)$-Lipschitz; this yields $\|f(x)-\EE f\|_{\psi_2}\le\kappa L'/\sqrt d$. (b) For $g\sim N(0,I_d)$ put $h(g):=f(g/\sqrt d)$, which is $(L'/\sqrt d)$-Lipschitz; the Gaussian concentration inequality (see, e.g., \cite[Ch.~5]{Vershynin}) gives $\|h-\EE h\|_{\psi_2}\le C L'/\sqrt d$, i.e.\ the claim for $x=g/\sqrt d\sim N(0,I_d/d)$.
\end{proof}

\phantomsection\begin{proof}[Proof of Lemma~\textup{\ref{lem:noise}}]\label{proof:noise}\stmtbutton{lem:noise}
Let $E_1:=\{\tfrac1n\sum_i z_i^2\ge\sigma^2-\tfrac\eps6\}$ and $E_2:=\{\tfrac1n\sum_i z_ig(x_i)\ge-\tfrac\eps6\}$. The $z_i^2\in[0,4]$ are i.i.d.\ with mean $\sigma^2$; Hoeffding's inequality \cite[Thm.~2.2.6]{Vershynin} for variables in an interval of length $4$ gives $\PP(E_1^c)\le\exp(-2n(\eps/6)^2/4^2)=e^{-n\eps^2/288}$. The $z_ig(x_i)\in[-2,2]$ are i.i.d.\ with mean $\EE[g(x)\EE[z\mid x]]=0$; likewise $\PP(E_2^c)\le e^{-n\eps^2/288}$.

On $E_1\cap E_2$, suppose $\tfrac1n\sum_i(f(x_i)-y_i)^2\le\sigma^2-\eps$ for some $f\in\FF$. Writing $y_i=g(x_i)+z_i$, so $f-y=(f-g)-z$ and $(f-y)^2=(f-g)^2-2z(f-g)+z^2$; averaging,
\[
\sigma^2-\eps\ \ge\ \underbrace{\tfrac1n\textstyle\sum_i(f-g)^2(x_i)}_{\ge0}\ -\ \tfrac2n\textstyle\sum_i z_i(f-g)(x_i)\ +\ \tfrac1n\textstyle\sum_i z_i^2 .
\]
Now $-\tfrac2n\sum z_i(f-g)=-\tfrac2n\sum z_i f+\tfrac2n\sum z_i g\ge-\tfrac2n\sum z_i f-\tfrac\eps3$ on $E_2$, and $\tfrac1n\sum z_i^2\ge\sigma^2-\tfrac\eps6$ on $E_1$. Hence
\[
\sigma^2-\eps\ \ge\ 0-\tfrac2n\textstyle\sum_i z_if(x_i)-\tfrac\eps3+\sigma^2-\tfrac\eps6,
\]
i.e.\ $\tfrac2n\sum_i z_if(x_i)\ge\eps-\tfrac\eps3-\tfrac\eps6=\tfrac\eps2$, so $\tfrac1n\sum_i z_if(x_i)\ge\tfrac\eps4$. Thus the fitting event on $E_1\cap E_2$ implies the second event; the claim follows by a union bound with $\PP(E_1^c)+\PP(E_2^c)\le2e^{-n\eps^2/288}$.
\end{proof}

\phantomsection\begin{proof}[Proof of Lemma~\textup{\ref{lem:onef}}]\label{proof:onef}\stmtbutton{lem:onef}
$\EE W_i=\EE[(f(x_i)-\EE f)\EE[z_i\mid x_i]]=0$. Two tail bounds on $W_i$: first, $|W_i|\le|z_i|\,|f(x_i)-\EE f|\le2\cdot2=4$, so $\|W_i\|_{\psi_2}\le C$ (any bounded variable is sub-Gaussian). Second, $|W_i|\le2|f(x_i)-\EE f|$, so $\PP(|W_i|\ge t)\le\PP(|f(x_i)-\EE f|\ge t/2)$; by Definition~\ref{def:conc}, $\|f(x_i)-\EE f\|_{\psi_2}\le\kappa L/\sqrt d$, hence $\|W_i\|_{\psi_2}\le 2\kappa L/\sqrt d$ up to an absolute factor. Combining, $\|W_i\|_{\psi_2}\le C\min\bigl(1,\kappa L/\sqrt d\bigr)=:K$. Standard concentration for sums of independent centered sub-Gaussian variables (see, e.g., \cite[Ch.~2]{Vershynin}) gives $\PP(\sum_i W_i\ge s)\le\exp(-c\,s^2/(nK^2))$; take $s=n\eps/8$ and note $1/K^2\ge c'\max(1,d/(\kappa^2L^2))$.
\end{proof}

\phantomsection\begin{proof}[Proof of Lemma~\textup{\ref{lem:mean}}]\label{proof:mean}\stmtbutton{lem:mean}
$|\EE f|\le1$, so $(\EE f)\tfrac1n\sum z_i\ge\tfrac\eps8$ implies $|\tfrac1n\sum z_i|\ge\tfrac\eps8$, uniformly in $f$. The $z_i\in[-2,2]$ are i.i.d.\ mean $0$; Hoeffding gives $\PP(|\tfrac1n\sum z_i|\ge\tfrac\eps8)\le2\exp(-2n(\eps/8)^2/4^2)=2e^{-n\eps^2/512}$.
\end{proof}

\phantomsection\begin{proof}[Proof of Theorem~\textup{\ref{thm:finite}}]\label{proof:finite}\stmtbutton{thm:finite}
By Lemma~\ref{lem:noise} it suffices to bound $\PP(\exists f:\tfrac1n\sum z_if(x_i)\ge\tfrac\eps4)$. Split $z_if(x_i)=W_i+(\EE f)z_i$ and $\tfrac\eps4=\tfrac\eps8+\tfrac\eps8$: if $\tfrac1n\sum z_if\ge\tfrac\eps4$ then $\tfrac1n\sum W_i\ge\tfrac\eps8$ or $(\EE f)\tfrac1n\sum z_i\ge\tfrac\eps8$. Union-bounding Lemma~\ref{lem:onef} over the $|\FF|$ functions, adding Lemma~\ref{lem:mean}, and adding the $2e^{-n\eps^2/288}$ of Lemma~\ref{lem:noise} (absorbed, with the $2e^{-n\eps^2/512}$, into $4e^{-n\eps^2/512}$) gives the bound.
\end{proof}

\subsection{Proofs for Section~\ref{sec:sharplog}}

\phantomsection\begin{proof}[Proof of Lemma~\textup{\ref{lem:hsplit}}]\label{proof:hsplit}\stmtbutton{lem:hsplit}
$|c_F|\le\sup|F|\le1$. Since $\clip$ contracts and chords bound arcs from below, $F$ is $L$-Lipschitz for geodesic distance, hence lies in $H^1(\Sd)$ with $|\nabla_TF|\le L$ almost everywhere, and Parseval for the gradient gives $\sum_{\ell\ge1}\lambda_\ell\|F_\ell\|_2^2=\EE|\nabla_TF|^2\le L^2$. The degree-one component is $F_1(x)=\inn{A_F}{x}$ with $A_F=d\,\EE[Fx]$ (because $\EE[xx^\top]=I_d/d$), and $\|F_1\|_2^2=\|A_F\|^2/d$; since $\lambda_1=d-1$, this gives $\|A_F\|^2\le dL^2/(d-1)\le\tfrac32L^2$ for $d\ge3$. For $h_F=\sum_{\ell\ge2}F_\ell$: $\EE h_F^2\le\lambda_2^{-1}\sum_{\ell\ge2}\lambda_\ell\|F_\ell\|_2^2\le L^2/(2d)$ because $\lambda_2=2d$. Finally $|h_F|\le|F|+|c_F|+\|A_F\|\le B_1$ pointwise.
\end{proof}

\phantomsection\begin{proof}[Proof of Lemma~\textup{\ref{lem:raddud}}]\label{proof:raddud}\stmtbutton{lem:raddud}
The map $F\mapsto h_F$ satisfies $\|h_F-h_G\|_\infty\le(2+\sqrt d)\|F-G\|_\infty$: indeed $|c_F-c_G|\le\|F-G\|_\infty$ and $\|A_F-A_G\|=d\,\|\EE[(F-G)x]\|=d\sup_{\|w\|=1}\EE[(F-G)\inn wx]\le d\|F-G\|_2\,\bigl(\EE\inn wx^2\bigr)^{1/2}=\sqrt d\,\|F-G\|_2$. Since $\clip$ contracts values, Proposition~\ref{prop:entropy} transfers to $\mathcal H$ with the factor $(2+\sqrt d)$ absorbed into the logarithm, giving the entropy bound. For the second claim: conditionally on $x_1,\dots,x_n$, the Rademacher process $h\mapsto\tfrac1{\sqrt n}\sum_i\varepsilon_ih(x_i)$ has sub-Gaussian increments in $L^2(P_n)$, the class is pinned at $0\in\mathcal H$ with $L^2(P_n)$-diameter at most $2\hat\sigma$, and $N(\mathcal H,L^2(P_n),u)\le N(\mathcal H,L^\infty,u)$; Dudley's entropy integral \cite[Thm.~8.1.3]{Vershynin} gives the bound with
$\int_0^{r}\sqrt{\log(B'/u)}\,du\le r\bigl[\sqrt{\log(B'/r)}+\tfrac{\sqrt\pi}2\bigr]\le2r\sqrt{\log(eB'/r)}$
(substitute $u=rv$), applied at $r=\hat\sigma$.
\end{proof}

\phantomsection\begin{proof}[Proof of Lemma~\textup{\ref{lem:selfbound}}]\label{proof:selfbound}\stmtbutton{lem:selfbound}
$\EE\sup_hP_nh^2\le\sup_h\EE h^2+\EE\sup_h(P_n-\EE)h^2$, and the first term is at most $L^2/(2d)$ by Lemma~\ref{lem:hsplit}. By symmetrization, $\EE\sup_h(P_n-\EE)h^2\le2\,\EE_{x,\varepsilon}\sup_h\tfrac1n|\sum_i\varepsilon_ih^2(x_i)|$. The map $s\mapsto s^2/(2B_1)$ is a contraction on $[-B_1,B_1]$ vanishing at $0$, so the Ledoux--Talagrand contraction principle \cite[Thm.~4.12]{LT} gives $\EE_\varepsilon\sup_h|\sum\varepsilon_ih^2(x_i)|\le4B_1\,\EE_\varepsilon\sup_h|\sum\varepsilon_ih(x_i)|$, and the claim follows.
\end{proof}

\phantomsection\begin{proof}[Proof of Theorem~\textup{\ref{thm:sharpcomplexity}}]\label{proof:sharpcomplexity}\stmtbutton{thm:sharpcomplexity}
Split $F=c_F+\inn{A_F}{x}+h_F$ by Lemma~\ref{lem:hsplit}. The affine sector obeys
\[
\EE\sup_{|c|\le1,\ \|A\|\le\sqrt{3/2}L}\frac1n\Bigl|\sum_iy_i(c+\inn A{x_i})\Bigr|\le\frac1n\Bigl(\EE\bigl|\sum_iy_i\bigr|+\sqrt{\tfrac32}L\,\EE\bigl\|\sum_iy_ix_i\bigr\|\Bigr)\le\frac{1+\sqrt{3/2}\,L}{\sqrt n},
\]
using $\EE\|\sum_iy_ix_i\|^2=\sum_i\EE\|x_i\|^2=n$. For the $\mathcal H$-sector, by symmetry of the $y_i$ it suffices to bound $\widetilde R$. Set $r^2:=L^2/(2d)$ and $a:=C\sqrt{\bar md/n}$. Write $\psi(s):=\varphi(\sqrt s)=\sqrt{\tfrac s2\log((2eB)^2/s)}$; on $(0,B^2]$ the function $s\mapsto\tfrac s2\log((2eB)^2/s)$ is increasing (its derivative is $\tfrac12[\log((2eB)^2/s)-1]>0$ for $s<(2eB)^2/e$) and concave (second derivative $-1/(2s)$), so $\psi$ is increasing and concave. By Lemma~\ref{lem:raddud}, Jensen, and Lemma~\ref{lem:selfbound},
\[
\widetilde R\ \le\ a\,\EE\varphi(\hat\sigma)\ =\ a\,\EE\psi(\hat\sigma^2)\ \le\ a\,\psi\bigl(\EE\hat\sigma^2\bigr)\ \le\ a\,\psi\bigl(r^2+8B_1\widetilde R\bigr).
\]
The key evaluation: with $r=L/\sqrt{2d}$,
\[
\log\frac{(2eB)^2}{2r^2}\;=\;2\log\frac{2eB\sqrt d}{L}\;=\;2\log\Bigl(2eC_1\bar md^{5/2}\,\frac{2+L}{L}\Bigr)\;\le\;C''\Lambda_L,
\]
since $(2+L)/L\le2(2+1/L)$ and $d^{5/2}\le d^3$. If $8B_1\widetilde R\le r^2$, then $\widetilde R\le a\psi(2r^2)\le a\,r\sqrt{C''\Lambda_L}=C'L\sqrt{\bar m\Lambda_L/n}$. Otherwise $s:=r^2+8B_1\widetilde R\le16B_1\widetilde R$ while $s\ge2r^2$, so $\log((2eB)^2/s)\le\log((2eB)^2/(2r^2))\le C''\Lambda_L$ (the logarithm decreases in $s$), whence $\psi(s)^2\le8B_1\widetilde R\,C''\Lambda_L$ and $\widetilde R\le a\sqrt{8C''B_1\widetilde R\Lambda_L}$, i.e.\ $\widetilde R\le8C''a^2B_1\Lambda_L\le C(1+L)\bar md\,\Lambda_L/n$. Collecting the three contributions proves the theorem.
\end{proof}

\phantomsection\begin{proof}[Proof of Theorem~\textup{\ref{thm:sharplog}}]\label{proof:sharplog}\stmtbutton{thm:sharplog}
Set $L^\ast:=c_0\eps\sqrt{n/(\bar m\bar\Lambda)}$ and suppose some fitting $f$ has $L:=\Lip_{\Sd}(f)\le L^\ast$. By Lemma~\ref{lem:B0} and Sections~\ref{sec:canonical}--\ref{sec:rigid}, $f|_{\Sd}\in\AAA_{\bar m,L^\ast}$, and $\clip\circ f$ fits at least as well. The noise decomposition (Lemma~\ref{lem:noise}, whose proof is pointwise in $f$) gives, outside an event of probability $2e^{-n\eps^2/288}\le\delta/4$, that $\tfrac1n\sum_iz_i\,\clip(f(x_i))\ge\eps/4$ with $z_i=y_i-g(x_i)$. Conditionally on the $x_i$ the $z_i$ are independent, mean zero, and bounded by $2$, so symmetrization and coordinate-wise contraction give
\[
\EE\ \sup_{f\in\AAA_{\bar m,L^\ast}}\frac1n\Bigl|\sum_iz_i\,\clip(f(x_i))\Bigr|\ \le\ 8\,\EE\ \sup_{f}\frac1n\Bigl|\sum_i\varepsilon_i\,\clip(f(x_i))\Bigr|\ \le\ 8\,\Xi,
\]
$\Xi$ denoting the right side of Theorem~\ref{thm:sharpcomplexity} at $L=L^\ast$. First, $\Lambda_{L^\ast}\le C\bar\Lambda$: if $L^\ast\ge1$ then $2+1/L^\ast\le3$ and $\Lambda_{L^\ast}\le\log(3e\bar md^3)\le C\bar\Lambda$; if $L^\ast<1$ then $1/L^\ast=\sqrt{\bar m\bar\Lambda}\,/(c_0\eps\sqrt n)\le\sqrt{\bar m\bar\Lambda}/c_0$ using $\eps\sqrt n\ge1$ from the first sample-size term, so $\Lambda_{L^\ast}\le\log(e\bar md^3(2+\sqrt{\bar m\bar\Lambda}/c_0))\le C\bar\Lambda$ (as $\log\bar\Lambda\le\bar\Lambda$). The three hypotheses now make the three terms of $\Xi$ each at most $\eps/(192)$: $(1+L^\ast)/\sqrt n\le\eps/192$ from the first; $L^\ast\sqrt{\bar m\Lambda_{L^\ast}/n}\le C c_0\eps\le\eps/192$ for $c_0$ small, by the definition of $L^\ast$; and $(1+L^\ast)\bar md\Lambda_{L^\ast}/n\le\eps/192$ from the second and third (for the $L^\ast$-part, $L^\ast\bar md\bar\Lambda/n=c_0\eps d\sqrt{\bar m\bar\Lambda/n}\le\eps/384$ exactly when $n\ge C\bar md^2\bar\Lambda$). Hence $\EE\sup\le\eps/8$. The supremum has bounded differences $4/n$ in each pair $(x_i,y_i)$, so McDiarmid's bounded-differences inequality \cite{McDiarmid} gives $\PP(\sup\ge\eps/4)\le e^{-n\eps^2/512}\le\delta/8$. Together with the noise event this contradicts fitting, with total failure probability at most $\delta$.
\end{proof}

\subsection{Proofs for Section~\ref{sec:genact}}

\phantomsection\begin{proof}[Proof of Theorem~\textup{\ref{thm:genact}}]\label{proof:genact}\stmtbutton{thm:genact}
\emph{Factorization and fiber transport.} $f$ depends on $x$ only through the orthogonal projection $Px$ onto $W:=\mathrm{span}\{w_1,\dots,w_m,v\}$, $\dim W\le m+1$. Suppose $x_i,x_j$ satisfy $\|Px_i\|,\|Px_j\|\le\tfrac12$. Write $x_i=z_i+w_i$ with $z_i=Px_i$, $\|w_i\|=\sqrt{1-\|z_i\|^2}\ge\tfrac{\sqrt3}2$, and set $x':=z_j+\sqrt{1-\|z_j\|^2}\,w_i/\|w_i\|\in\Sd$. Then $Px'=z_j$, so $f(x')=f(x_j)$, and
\[
\|x_i-x'\|\le\|z_i-z_j\|+\Bigl|\sqrt{1-\|z_i\|^2}-\sqrt{1-\|z_j\|^2}\Bigr|\le\Bigl(1+\tfrac1{\sqrt3}\Bigr)\|z_i-z_j\|\le2\|z_i-z_j\|,
\]
using $|\,\|z_i\|^2-\|z_j\|^2|\le(\|z_i\|+\|z_j\|)\|z_i-z_j\|$ and the lower bound on the two square roots. Hence
\begin{equation}\label{eq:fibertransport}
|f(x_i)-f(x_j)|=|f(x_i)-f(x')|\le2L\,\|P(x_i-x_j)\|,\qquad L:=\Lip_{\Sd}(f).
\end{equation}

\emph{Few high points, uniformly.} Let $\mathrm{Op}:=\lambda_{\max}(\sum_ix_ix_i^\top)$; on an event of probability $1-e^{-n/C}$, $\mathrm{Op}\le C(1+n/d)$. For any $(m{+}1)$-dimensional projection $P$ with orthonormal basis $e_1,\dots,e_{m+1}$, $\sum_i\|Px_i\|^2=\sum_k\sum_i\inn{e_k}{x_i}^2\le(m+1)\,\mathrm{Op}$, so at most $4(m+1)\mathrm{Op}\le n/8$ points have $\|Px_i\|>\tfrac12$ (the last inequality by the hypotheses on $m$). Call the others \emph{low}; there are at least $\tfrac78n$ of them, for every $P$ simultaneously.

\emph{Volumetric pairing, uniformly over a net.} We may assume $\delta_{\mathrm{cell}}:=C\sqrt{m+1}\,(8/n)^{1/(m+1)}\le\tfrac14$: otherwise the claimed bound reads $L\ge c'$ with $c'$ absolute, which already follows from one opposite-label pair (probability $1-2^{1-n}$) and $|f(x_i)-f(x_j)|=2$ with $\|x_i-x_j\|\le2$. Fix a net $\mathcal P$ of the $(m{+}1)$-frames of column-wise mesh $\delta_{\mathrm{net}}:=\delta_{\mathrm{cell}}/(8\sqrt{m+1})$, so that every admissible $P$ has $\hat P\in\mathcal P$ with $\|P-\hat P\|_{\mathrm{op}}\le\sqrt{m+1}\,\delta_{\mathrm{net}}\cdot2\le\delta_{\mathrm{cell}}/4\le1/16$; the cardinality is $e^{C(m+1)d\log(1/\delta_{\mathrm{net}})}\le e^{C(d\log n+md)}$, since $\log(1/\delta_{\mathrm{net}})\le\tfrac1{m+1}\log n+C$ (the $\sqrt{m+1}$ factors inside $\delta_{\mathrm{cell}}$ and the mesh cancel). For a fixed $\hat P\in\mathcal P$: partition the ball of radius $\tfrac12$ in $\hat P(\R^d)$ into $K=\lceil n/8\rceil$ grid cells of diameter $\delta_{\mathrm{cell}}$. Among the $\ge\tfrac78n$ low points of $\hat P$, at least $\tfrac78n-K\ge\tfrac34n-1$ share a cell with another low point, yielding at least $\tfrac38n-1\ge\tfrac n4$ disjoint same-cell pairs (for $n\ge8$), each with $\|\hat P(x_i-x_j)\|\le\delta_{\mathrm{cell}}$. These pairs are functions of $(x,\hat P)$ only; since the labels are independent of the data, the probability that fewer than $\tfrac n{16}$ of them are opposite-label is at most $e^{-n/C}$ (binomial concentration). A union bound over $\mathcal P$ costs $e^{C(d\log n+md)}$, which the sample-size hypothesis covers. Finally, for the true $P$: projected distances of unit-norm differences move by at most $2\|P-\hat P\|_{\mathrm{op}}\le\delta_{\mathrm{cell}}/2$, so the pair satisfies $\|P(x_i-x_j)\|\le\tfrac32\delta_{\mathrm{cell}}$; and each point of the pair, low for $\hat P$, has $\|Px_i\|\le\tfrac12+\tfrac1{16}\le\tfrac9{16}$, for which the transport estimate \eqref{eq:fibertransport} holds with the constant $2$ unchanged (the square roots in its proof are bounded below by $\sqrt{1-(9/16)^2}\ge\tfrac45$, giving factor $1+\tfrac{9/16\cdot2}{2\cdot4/5}\le1.71\le2$). So with probability $1-2e^{-n/C}$, for \emph{every} admissible $P$ there is an opposite-label pair with $\|P(x_i-x_j)\|\le\tfrac32\delta_{\mathrm{cell}}$ and both points $\tfrac9{16}$-low.

\emph{Conclusion.} For that pair, exact fitting gives $|f(x_i)-f(x_j)|=|y_i-y_j|=2$, while \eqref{eq:fibertransport} gives $2\le2L\cdot\tfrac32\delta_{\mathrm{cell}}$, i.e.\ $L\ge\tfrac23\delta_{\mathrm{cell}}^{-1}=c\,n^{1/(m+1)}/\sqrt{m+1}$.
\end{proof}

\phantomsection\begin{proof}[Proof of Theorem~\textup{\ref{thm:genact2}}]\label{proof:genact2}\stmtbutton{thm:genact2}
Write $L=\Lip_{\Sd}(f)$. As in the proof of Theorem~\ref{thm:genact}, $f$ factors through the orthogonal projection $P$ onto a subspace $W\supseteq\mathrm{span}\{w_1,\dots,w_m,v\}$, enlarged to $\dim W=p$. In each part $\delta$ denotes a scale fixed there; we may assume $\delta\le\tfrac15$, since otherwise the stated bound is at most an absolute constant, which (with the advertised $c$ taken small enough that the target is $\le1$ in this range) follows from one opposite-label pair ($L\ge1$).

\emph{Localization.} Let $\mathrm{Op}:=\lambda_{\max}(\sum_ix_ix_i^\top)$; as before $\mathrm{Op}\le C_1(1+n/d)$ with probability $1-2e^{-d}$. For every rank-$p$ orthogonal projector $Q$, $\sum_i\|Qx_i\|^2=\mathrm{tr}(Q\sum_ix_ix_i^\top Q)\le p\,\mathrm{Op}$; so, deterministically on the event $\mathrm{Op}\le C_1(1+n/d)$, the count $\#\{i:\|Qx_i\|>\tau\}\le p\,\mathrm{Op}/\tau^2\le n/8$ with $\tau:=(8C_1p(\tfrac1n+\tfrac1d))^{1/2}$, uniformly over all rank-$p$ projectors $Q$ at once. The hypotheses give $r_0:=2\sqrt{p/d}\le\tau\le\tfrac18$.

\emph{Net and grid.} Fix a column-wise $\delta/(8\sqrt p)$-net $\mathcal P$ of the $p$-frames as in Theorem~\ref{thm:genact}, so every admissible $P$ has $\hat P\in\mathcal P$ with $\|P-\hat P\|_{\mathrm{op}}\le\delta/4$; in both parts $\delta\ge(n^2\sqrt d)^{-1}$, so $|\mathcal P|\le e^{Cpd\Lambda}$. For $\hat P\in\mathcal P$ fix orthonormal coordinates on its range, let $z_i$ be the coordinates of $\hat Px_i$, and partition $\R^p$ into half-open cubical cells of side $\delta/\sqrt p$; $n_c$ is the number of $z_i$ in cell $c$ and $q_c:=\PP(z\in c)$ for an independent copy. From each cell with $n_c\ge2$ take disjoint same-cell pairs greedily; the family $\Pi(x,\hat P)$ is determined by $(x,\hat P)$.

\emph{Part \eqref{eq:floorI}.} Set $\delta:=4\sqrt p\,\tau(8/n)^{1/p}$ and cell side $s:=\delta/\sqrt p=4\tau(8/n)^{1/p}$, so that $\tau/s=\tfrac14(n/8)^{1/p}\ge1$ (here $p\le c\log n$ with $c$ small enough that $(n/8)^{1/p}\ge4$). A radius-$\tau$ ball meets at most $(2\tau/s+2)^p\le(4\tau/s)^p=n/8$ of the side-$s$ cells, the first inequality using $\tau/s\ge1$. At least $\tfrac78n$ points are $\tau$-low, so among them at least $\tfrac78n-\tfrac n8$ share cells, giving $|\Pi|\ge\tfrac12(\tfrac78n-\tfrac n8)=\tfrac38n$.

\emph{Part \eqref{eq:floorII}.} Set $N^*:=\lceil C_0pd\Lambda\rceil$ and $\delta:=14\sqrt p\,r_0\,(N^*/(c_2n^2))^{1/p}$ with $c_2:=(128e^2)^{-1}$; the hypotheses give $\delta\le\sqrt p\,r_0$. Since $\EE\|z\|^2=p/d$ and $r_0^2=4p/d$, Markov gives core mass $\sum_{c\cap B_{r_0}\ne\emptyset}q_c\ge\tfrac34$. Call a cell \emph{light} if $q_c\le1/n$, \emph{mid} if $1/n<q_c\le32/n$, \emph{heavy} if $q_c>32/n$; one class carries core mass $\ge\tfrac14$.
\emph{Light:} core cells number at most $(7\sqrt p\,r_0/\delta)^p$; for light cells $\PP(n_c\ge2)\ge\binom n2q_c^2(1-q_c)^n\ge n^2q_c^2/(8e)$, so by Cauchy--Schwarz the expected number $\mu$ of doubly occupied light core cells satisfies $\mu\ge\frac{n^2}{128e}\bigl(\frac{\delta}{7\sqrt p\,r_0}\bigr)^p=\frac{n^2}{128e}\cdot\frac{2^pN^*}{c_2n^2}\ge2^{p+1}N^*$. Multinomial occupancy counts are negatively associated \cite{JDP}, monotone functions of disjoint coordinates preserve negative association, and the Chernoff--Hoeffding lower tail transfers \cite{DR}; hence $\PP(\#\{\text{doubly occupied}\}\le N^*)\le e^{-\mu/8}$.
\emph{Mid:} at least $(\tfrac14)/(32/n)=n/128$ mid core cells; for each, $nq_c\ge1$ gives $\PP(n_c\ge2)\ge1-(1-\tfrac1n)^n-(1-\tfrac1n)^{n-1}\ge\tfrac15$ for $n\ge100$, so $\mu\ge n/640\ge2N^*$ and the same negative-association tail applies.
\emph{Heavy:} heavy cells number at most $n/32$; the number of points in heavy core cells dominates $\mathrm{Bin}(n,\tfrac14)$, hence is $\ge n/8$ with probability $1-e^{-n/32}$, and $\ge n/8$ points in $\le n/32$ cells yield at least $\tfrac12(\tfrac n8-\tfrac n{32})=\tfrac{3n}{64}\ge N^*$ disjoint pairs.
In every case $|\Pi|\ge N^*$ with probability $1-2e^{-cN^*}$, all pairs in $B_{r_0+\delta}$.

\emph{Labels, union, conclusion.} Given $x$, the pairs are disjoint and $y$ is independent, so $\PP(\text{no opposite-label pair}\mid x)\le2^{-|\Pi|}$; the union over $\mathcal P$ costs $e^{Cpd\Lambda}$, absorbed by $N^*$ (take $C_0$ large) in part \eqref{eq:floorII} and by $\tfrac38n\ge N^*$ in part \eqref{eq:floorI}. On the good event, for the true $P$ some opposite-label pair has $\|P(x_i-x_j)\|\le\tfrac32\delta$ with both points $(\tau+\tfrac54\delta)$-low, and \eqref{eq:fibertransport} gives $2\le2L\cdot\tfrac32\delta$, i.e.\ $L\ge\tfrac2{3\delta}$. Substituting the two choices of $\delta$ (and $\tau\le4\sqrt{C_1p/d}$, valid as $n\ge d$) gives \eqref{eq:floorI} and \eqref{eq:floorII}.
\end{proof}

\subsection{Proofs for Section~\ref{sec:scope}}

\phantomsection\begin{proof}[Proof of Proposition~\textup{\ref{prop:upper}}]\label{proof:upper}\stmtbutton{prop:upper}
The profile $\rho$ vanishes on $(-\infty,1-s]$, rises with slope $1/s=4$ on $[1-s,1]$, and equals $1$ at $u=1$; on $\Sd$ no argument exceeds $1$.

\emph{Interpolation.} Since $\rho(1)=1$ and $\rho(\inn{x_j}{x_i})=0$ for $j\ne i$ by $\inn{x_j}{x_i}\le1/8<3/4$, we have $f(x_i)=y_i$.

\emph{Disjoint caps.} Let $C_i=\{x\in\Sd:\inn{x_i}{x}>3/4\}$. If $x\in C_i\cap C_j$ with $i\ne j$, then $\inn{x_i+x_j}{x}>3/2$, so $\|x_i+x_j\|>3/2$. But
\[
        \|x_i+x_j\|^2=2+2\inn{x_i}{x_j}\le 2+\frac14=\frac94,
\]
contradiction. Thus the caps are pairwise disjoint.

\emph{Lipschitz bound.} Along any unit-speed geodesic $\gamma$ on $\Sd$, the derivative of $y_i\rho(\inn{x_i}{\gamma(t)})$ exists for a.e. $t$ and, on the rising band $\inn{x_i}{\gamma(t)}\in[3/4,1]$, has absolute value at most
\[
        4|y_i|\sqrt{1-\inn{x_i}{\gamma(t)}^2}\le4\sqrt{1-(3/4)^2}=\sqrt7.
\]
Off the rising band it is $0$ a.e. Because the caps are disjoint, at every point of the sphere at most one summand is nonconstant. Hence $|\frac{d}{dt}f(\gamma(t))|\le\sqrt7$ for a.e. $t$, and $f$ is $\sqrt7$-Lipschitz for geodesic distance. For chord distance, if the geodesic distance between $x,x'$ is $\theta\in[0,\pi]$, then $\|x-x'\|=2\sin(\theta/2)$ and $\theta/(2\sin(\theta/2))\le\pi/2$. Therefore $|f(x)-f(x')|\le(\pi/2)\sqrt7\|x-x'\|$.

\emph{Separation probability.} For two independent uniform points, conditioning on $x_j$ and applying spherical concentration to the $1$-Lipschitz function $x\mapsto\inn{x}{x_j}$ gives an absolute tail bound $\PP(|\inn{x_i}{x_j}|>t)\le2e^{-c d t^2}$ (see, e.g., \cite[Ch.~5]{Vershynin}). At $t=1/8$ this is at most $2e^{-c d/64}$. A union bound over at most $n^2/2$ pairs gives failure probability at most $n^2e^{-c d/64}$, which is at most $1/n$ as soon as $d\ge C_{\rm sep}\log n$ for a sufficiently large absolute constant $C_{\rm sep}$.
\end{proof}

\section*{Acknowledgments and funding}
The research presented in this paper was supported by the European Research Council (ERC) under the European Union's Horizon 2022 research and innovation programme (grant agreement No.~101041711), by the Simons Foundation as part of the Collaboration on the Mathematical and Scientific Foundations of Deep Learning, by Heights Labs, by the Israel Science Foundation (grant number 2258/19), by the Israel Science Foundation (ISF Grant 4101/25), and by the U.S. National Science Foundation (NSF Grant OISE-2401227).

\section*{Declaration of competing interest}
The author declares no competing interests.

\section*{Declaration of generative AI and AI-assisted technologies in the manuscript preparation process}
During the preparation of this work the author used generative AI tools to accelerate drafting and revision. All mathematical claims, proofs, numerical interpretations, and bibliographic information were subsequently reviewed and edited by the author, who takes full responsibility for the content of the manuscript.

\section*{Data and code availability}
The numerical scripts used for the checks reported in Section~\ref{sec:numerics} are available at \url{https://github.com/yspennstate/law-of-robustness-two-layer} and are also included with the source package accompanying this manuscript.

\end{document}